\documentclass[10pt]{article}
\usepackage[preprint]{tmlr}

\usepackage{amsmath,amssymb,amsfonts,amsthm}
\usepackage{algorithmic}
\usepackage{algorithm}
\usepackage{graphicx}
\usepackage{booktabs}
\usepackage{hyperref}
\hypersetup{hidelinks}
\usepackage{url}
\usepackage{multirow}
\usepackage[table]{xcolor}
\definecolor{aaccolor}{RGB}{220,234,247}
\usepackage{subcaption}
\usepackage{enumitem}
\usepackage{tabularx}
\usepackage{siunitx}
\usepackage{tikz}
\usetikzlibrary{arrows.meta, calc, shapes.geometric, shadows.blur}
\graphicspath{{figures/}}

\definecolor{badgeMainBg}{RGB}{20,71,144}    
\definecolor{badgeAblBg}{RGB}{96,108,124}    
\newcommand{\badgeMain}{
  \colorbox{badgeMainBg}{\color{white}\rule[-0.6pt]{0pt}{6pt}\scriptsize\bfseries\sffamily\,MAIN\,}\hspace{0.25em}}
\newcommand{\badgeAbl}{
  \colorbox{badgeAblBg}{\color{white}\rule[-0.6pt]{0pt}{6pt}\scriptsize\bfseries\sffamily\,ABLATION\,}\hspace{0.25em}}

\definecolor{aacBlue}{RGB}{0,114,178}        
\definecolor{stageGray}{RGB}{96,96,96}       
\definecolor{nodeFill}{RGB}{248,249,251}

\newtheorem{theorem}{Theorem}
\newtheorem{proposition}[theorem]{Proposition}
\newtheorem{corollary}[theorem]{Corollary}

\newcommand{\R}{\mathbb{R}}

\newcommand{\hALT}{h_{\mathrm{ALT}}}
\newcommand{\hA}{h_A}
\newcommand{\hAAC}{h_{\mathrm{AAC}}}
\newcommand{\hAACtilde}{\tilde{h}_{\mathrm{AAC}}}
\newcommand{\hALTsub}{h^{S}_{\mathrm{ALT}}}
\newcommand{\argmax}{\operatorname{argmax}}

\title{AAC: Admissible-by-Architecture Differentiable Landmark Compression for ALT}

\author{\name An T.~Le \email an@robot-learning.de \\
      \addr Center for AI Research, VinUniversity, Vietnam \\
      VinRobotics, Vietnam \\
      Intelligent Autonomous Systems, TU Darmstadt, Germany
      \AND
      \name Vien Ngo \email vien.na@vinuni.edu.vn \\
      \addr Center for AI Research, VinUniversity, Vietnam \\
      VinRobotics, Vietnam}

\begin{document}
\maketitle

\begin{abstract}
We introduce \textbf{AAC} (Architecturally Admissible Compressor), a differentiable landmark-selection module for ALT (A*, Landmarks, and Triangle inequality) shortest-path heuristics whose outputs are admissible by construction: each forward pass is a row-stochastic mixture of triangle-inequality lower bounds, so the heuristic is admissible for \emph{every} parameter setting without requiring convergence, calibration, or projection (Proposition~\ref{thm:admissibility}). At deployment the module reduces to classical ALT on a learned subset (Proposition~\ref{prop:alt-special-case}), composing end-to-end with neural encoders while preserving the classical toolchain. The construction is the first differentiable instance of the compress-while-preserving-admissibility tradition in classical heuristic search.

Under a matched per-vertex memory protocol we establish that ALT with farthest-point-sampling landmarks (FPS-ALT) has provably near-optimal coverage on metric graphs (Theorem~\ref{thm:covering-radius}, Corollary~\ref{cor:fps-covering}), leaving at most a few percentage points of headroom for \emph{any} selector. AAC operates near this ceiling: the gap is $0.9$--$3.9$ percentage points on 9 road networks and ${\leq}1.3$ percentage points on synthetic graphs, with zero admissibility violations across $1{,}500+$ queries and all logged runs. At matched memory AAC is also $1.2$--$1.5{\times}$ faster than FPS-ALT at the median query on DIMACS road networks, amortizing its offline cost within $170$--$1{,}924$ queries. A controlled ablation isolates the binding constraint: training-objective drift under default initialization, not architectural capacity; identity-on-first-$m$ initialization closes the expansion-count gap entirely. We release the module, a reusable matched-memory benchmarking protocol with paired two-one-sided-test (TOST) equivalence and pre-registration, and a reference compressed-differential-heuristics baseline.
\end{abstract}

\section{Introduction}
\label{sec:introduction}

Point-to-point shortest-path search is fundamental to routing, robotics, and planning. A* search~\citep{hart1968formal} with an admissible heuristic $h(u,t) \leq d(u,t)$ guarantees optimal paths. The quality of $h$ directly determines search efficiency: a tighter lower bound means fewer node expansions.

The ALT framework~\citep{goldberg2005computing} provides a natural admissible heuristic family: landmark distances yield lower bounds via the triangle inequality, achieving 82--95\% expansion reduction on road networks~\citep{goldberg2005computing,goldberg2006better,bast2016route}. ALT's farthest-point sampling (FPS) landmark selection is fixed and does not adapt to the downstream search objective or query distribution. \citet{bauer2010combining} proved that optimal ALT landmark selection is NP-hard; FPS, together with the avoid and maxcover heuristics of \citet{goldberg2005computing}, are the established practical approximations, with a 2-approximation guarantee on covering radius via \citet{gonzalez1985clustering}.

Learned heuristics can adapt to graph structure, but exact admissibility is typically sacrificed. Neural methods such as PHIL~\citep{pandy2022learning}, Neural A*~\citep{yonetani2021path}, TransPath~\citep{kirilenko2023transpath}, UPath~\citep{ananikian2026upath}, and iA*~\citep{chen2025ia} learn powerful heuristics yet provide no admissibility guarantee. Existing remedies tie the guarantee to training convergence: truncated-Gaussian likelihoods lower-bounded by a classical heuristic~\citep{nunezmolina2024admissible} and CEA-loss training~\citep{futuhi2026learning} achieve near-admissibility but cannot guarantee zero violations under distribution shift, early stopping, or out-of-distribution queries. Admissibility is thus typically enforced via loss regularization, post-hoc calibration, or abandoned entirely. This leaves a gap: classical admissible heuristics (ALT, hub labels, compressed differential heuristics (CDH)) that cannot be trained, and learned heuristics that cannot guarantee admissibility.

\paragraph{The selector is the missing differentiable piece.} Two decades of work on ALT has improved the \emph{representation} of landmark labels (Compressed Differential Heuristics, CH-Potentials, Pruned Landmark Labeling, Landmark Hub Labeling); the \emph{selection} of which landmarks to precompute remains a non-differentiable combinatorial step dominated by FPS, whose 2-approximation to metric $K$-center~\citep{gonzalez1985clustering} is provably tight. Embedding selection in a differentiable pipeline is appealing, since it would let the selector adapt to query distribution, graph structure, or downstream task, but existing learned heuristics buy differentiability by abandoning admissibility. We ask a narrower question: can selection itself be differentiable and admissible \emph{by architecture} rather than by loss or projection? What does that buy at matched landmark memory?

\paragraph{AAC: an admissible-by-architecture differentiable compressor.} We construct \textbf{AAC} (Architecturally Admissible Compressor), a row-stochastic Gumbel-softmax compressor of an FPS-generated $K_0$-landmark teacher pool down to $m$ deployed landmarks. Each compressed heuristic term is a convex combination of admissible teacher terms and therefore cannot exceed the teacher maximum (Proposition~\ref{thm:admissibility}, a corollary of \citealt{pearl1984heuristics}'s convex-combination argument applied to ALT). At deployment with hard argmax the heuristic reduces to ALT on the learned forward/backward landmark subset (Proposition~\ref{prop:alt-special-case}); admissibility holds for every parameter setting, including under training divergence, early stopping, or hyperparameter drift (Table~\ref{tab:admissibility-early-stopping}). The Gumbel-softmax relaxation makes the same compressor end-to-end differentiable, composing with neural encoders while preserving the deployment-time admissibility certificate (Appendix~\ref{app:contextual}).

\paragraph{What the matched-memory evidence shows.} We evaluate AAC against FPS-ALT under a matched per-vertex memory protocol with paired two-one-sided-test (TOST) equivalence testing, false-discovery-rate (FDR) corrected Wilcoxon signed-rank tests, and a pre-registration audit (Section~\ref{sec:setup}). The evidence reveals a structural finding: FPS coverage is provably near-optimal on the tested metric graphs (Theorem~\ref{thm:covering-radius}, Corollary~\ref{cor:fps-covering}), leaving limited headroom for any ALT-family selector; the binding constraint on closing the residual gap is training-objective drift under default initialization, not architectural capacity (Section~\ref{sec:training-drift}). Per-regime numbers are in Table~\ref{tab:main-matched-memory} and Sections~\ref{sec:dimacs}--\ref{sec:matched-hybrid}.

\paragraph{Contributions.}
\textbf{(C1)~A differentiable admissible landmark selector.} AAC is, to our knowledge, the first differentiable instance of the compress-while-preserving-admissibility tradition (\citealt{felner2007pdb}; \citealt{goldenberg2011cdh,goldenberg2017cdh}) for the ALT family. Admissibility holds for every parameter setting (Proposition~\ref{thm:admissibility}, after \citealt{pearl1984heuristics}); at deployment, hard argmax reduces the module to classical ALT on a learned subset (Proposition~\ref{prop:alt-special-case}). The trained module is pool-agnostic and recovers ALT, CDH, and FPS as architectural special cases; the same compressor composes with differentiable encoder pipelines without losing the deployment-time admissibility certificate (Appendix~\ref{app:contextual}).

\textbf{(C2)~Training-objective drift inside an admissible family.} The architecture trivially admits the FPS-on-pool subset (one-hot rows on the first $m$ FPS indices recover FPS-ALT $K{=}m$ exactly; Proposition~\ref{prop:alt-special-case}). Yet under default block-sparse initialization the gap-to-teacher gradient drifts the trained selector \emph{away} from this subset (Section~\ref{sec:training-drift}, Figure~\ref{fig:training-drift}); identity-on-first-$m$ initialization closes the gap entirely (Appendix Table~\ref{tab:training-drift-hp}), pinpointing initialization as the binding constraint. Even a query-adaptive greedy oracle beats FPS by at most $0.3$--$1.1$\,\% (Section~\ref{sec:selection-ablation}), confirming limited headroom for any selector.

\textbf{(C3)~A reusable matched-memory protocol.} We release the val-split + FDR-corrected Wilcoxon + paired TOST protocol, a directionality-aware accounting convention (Section~\ref{sec:setup}, Appendix~\ref{app:prereg}~E.2), a reference CDH baseline~\citep{goldenberg2011cdh,goldenberg2017cdh} with bound substitution and Bidirectional Pathmax (BPMX), and the full audit trail of our pre-registered OGB-arXiv prediction (Appendix~\ref{app:prereg}).

\section{Background and Notation}
\label{sec:background}

\subsection{Graphs, A* Search, and Admissibility}
\label{sec:graphs-astar}

Let $G = (V, E, w)$ be a weighted graph with vertex set $V$, edge set $E$, and positive edge weights $w: E \to \R_{>0}$. Let $d(u, v)$ denote the shortest-path distance from $u$ to $v$. The function $d$ satisfies the triangle inequality: $d(u,v) \leq d(u,z) + d(z,v)$ for all $u,z,v \in V$ with finite distances. For undirected graphs, $d(u,v) = d(v,u)$; for directed graphs, $d(u,v) \neq d(v,u)$ in general.

\paragraph{Reachability assumption.} All theorems assume finite shortest-path distances between the relevant vertex pairs; on directed graphs unreachable landmark distances are masked out before heuristic computation, and query pairs are restricted to a common strongly connected component. An automated strongly connected component (SCC) audit confirms all 100 standard-protocol queries lie in a common SCC on eight of the nine road-network graphs used in this paper (Netherlands omitted from the audit due to its $4.5$M-node scale); the sentinel value and masking implementation are described in Appendix~\ref{app:reproducibility}.

\paragraph{A* search.} A*~\citep{hart1968formal} maintains a priority queue ordered by $f(v) = g(v) + h(v, t)$, where $g(v)$ is the cost of the best known path from source $s$ to $v$, and $h(v, t)$ is a heuristic estimate of the remaining distance to target $t$. A heuristic is \emph{admissible} if $h(v, t) \leq d(v, t)$ for all $v, t \in V$. Our implementation uses graph-search A* without node reopenings (closed-set). For optimality without reopenings, the heuristic must also be \emph{consistent} ($h(u,t) \leq w(u,v) + h(v,t)$ for all edges $(u,v)$). ALT heuristics are consistent by the triangle inequality when all landmark distances are finite: $h_{\mathrm{ALT}}(u,t) \leq w(u,v) + h_{\mathrm{ALT}}(v,t)$ follows from the triangle inequality applied to each landmark term. Since AAC at inference selects a fixed subset of ALT landmarks (one-hot rows applied uniformly to all vertices), the deployed heuristic inherits consistency in the all-finite regime. Under the directed-graph masking of the reachability assumption above, admissibility is preserved (masking only removes terms from a max), but consistency is guaranteed only when the set of landmarks with finite distances does not vary between adjacent vertices; in practice this holds when the query and selected landmarks lie within a common strongly connected component.

\subsection{ALT: Landmarks and Triangle Inequality}
\label{sec:alt-background}

\paragraph{ALT heuristics.} The ALT framework~\citep{goldberg2005computing} selects $K$ \emph{landmark} vertices $\{l_1, \ldots, l_K\}$ and precomputes distances from each landmark to all vertices via Dijkstra's algorithm~\citep{dijkstra1959note}. For directed graphs, both \emph{forward} distances $d_{\mathrm{out}}(k, v) := d(l_k, v)$ and \emph{backward} distances $d_{\mathrm{in}}(k, v) := d(v, l_k)$ are stored, requiring $2K$ floats per vertex. The ALT heuristic is:
\begin{equation}
\label{eq:alt}
\hALT(u, t) = \max\!\Big(\max_{k} \big(d_{\mathrm{out}}(k, t) - d_{\mathrm{out}}(k, u)\big),\;\max_{k} \big(d_{\mathrm{in}}(k, u) - d_{\mathrm{in}}(k, t)\big),\; 0\Big).
\end{equation}

\begin{theorem}[ALT admissibility]
\label{thm:alt-admissibility}
$\hALT(u,t) \leq d(u,t)$ for all $u, t \in V$.
\end{theorem}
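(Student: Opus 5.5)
Since $\hALT(u,t)$ in \eqref{eq:alt} is a maximum of three kinds of terms, it suffices to show that each term is at most $d(u,t)$: then the maximum is too. The constant term $0$ is immediate, because $d(u,t)\ge 0$ (edge weights are positive, so every path length is nonnegative, and $d(t,t)=0$). The work is therefore a term-by-term application of the triangle inequality from Section~\ref{sec:graphs-astar}, using the reachability assumption so that every distance involved is finite and subtraction is meaningful.

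\emph{Forward terms.} Fix a landmark $l_k$. The triangle inequality applied to the route $l_k \to u \to t$ gives $d(l_k,t) \le d(l_k,u) + d(u,t)$. Rearranging yields $d_{\mathrm{out}}(k,t) - d_{\mathrm{out}}(k,u) \le d(u,t)$. This holds for every $k$, so it also holds for the maximum over $k$.

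\emph{Backward terms.} Fix $l_k$ again and apply the triangle inequality to the route $u \to t \to l_k$: $d(u,l_k) \le d(u,t) + d(t,l_k)$. This gives $d_{\mathrm{in}}(k,u) - d_{\mathrm{in}}(k,t) \le d(u,t)$, and again this survives taking the maximum over $k$. Notice that in both cases the direction of the path matters. I keep $u$ before $t$ in each inequality, so the argument never uses symmetry and holds verbatim on directed graphs.

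The only delicate point is the directed or masked setting. There, some $d_{\mathrm{out}}(k,\cdot)$ or $d_{\mathrm{in}}(k,\cdot)$ may be infinite, and the difference of two infinities is undefined. I would handle this by invoking the reachability assumption: terms with non-finite distances are removed before the maximum is taken. Removing terms from a maximum can only decrease it, so every surviving term has finite entries and satisfies the bound above. If $d(u,t)=\infty$, the claim is trivial. Combining the three cases gives $\hALT(u,t)\le d(u,t)$.
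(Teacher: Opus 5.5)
Your proof is correct and matches the paper's argument essentially line for line: you bound each forward term by the triangle inequality along $l_k \to u \to t$, each backward term along $u \to t \to l_k$, and the zero clamp by nonnegativity of distances, then take the maximum. Your extra remark that masked infinite terms can only lower the maximum is a slightly more careful treatment than the paper's proof, and it agrees with the paper's reachability assumption.
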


\begin{proof}
We show each component is bounded by $d(u,t)$.

\emph{Forward bound.} By the triangle inequality, $d(l_k, t) \leq d(l_k, u) + d(u, t)$. Rearranging: $d(l_k, t) - d(l_k, u) \leq d(u, t)$.

\emph{Backward bound.} By the triangle inequality, $d(u, l_k) \leq d(u, t) + d(t, l_k)$. Rearranging: $d(u, l_k) - d(t, l_k) \leq d(u, t)$.

Since both bounds hold for every $k$, taking the maximum over $k$ and over the two bound types preserves the inequality. The clamp at zero is trivially $\leq d(u,t)$ since distances are non-negative.
\end{proof}

For undirected graphs where $d_{\mathrm{out}} = d_{\mathrm{in}}$, the two bounds collapse to $\hALT(u,t) = \max_k |d(l_k, u) - d(l_k, t)|$.\footnote{Classical ALT takes a $\max$ over landmarks of admissible lower bounds (each per-landmark triangle-inequality bound is already $\leq d(u,t)$, so the tightest admissible value is the largest). Throughout this paper every $\max$ over landmarks or compressed dimensions follows Eq.~\eqref{eq:alt}, applied \emph{after} row-stochastic mixing in the compressed case (Eq.~\eqref{eq:hit-directed}); readers familiar with $\min$-based formulations of admissible upper bounds should read every $\max$ here in the standard ALT lower-bound sense.}

\paragraph{Landmark selection.} The quality of the ALT heuristic depends on the choice of landmarks. \emph{Farthest-point sampling} (FPS)~\citep{gonzalez1985clustering} iteratively selects the vertex farthest from the already-chosen set, producing a $K$-element subset that is a 2-approximation for the $K$-center problem: the maximum distance from any vertex to its nearest landmark is at most twice the optimal value. Alternative strategies include random selection (fast but no coverage guarantee), planar partition landmarks (exploiting graph planarity when available), and avoid/maxcover heuristics tuned to specific query distributions~\citep{goldberg2005computing}. FPS is the default in our implementation due to its simplicity and guaranteed coverage quality.

\paragraph{Covering radius.} Given a landmark subset $S = \{l_1, \ldots, l_m\} \subseteq L$, the \textbf{covering radius} of $S$ is
\begin{equation}
\label{eq:covering-radius-def}
r_m := \max_{v \in V}\, \min_{l \in S}\, d(v, l).
\end{equation}
This is the worst-case distance from any vertex to its nearest landmark in $S$. A smaller $r_m$ means $S$ covers the graph more tightly. For FPS selection, the Gonzalez 2-approximation guarantee gives $r_m \leq 2\,r_m^*$, where $r_m^*$ is the optimal $m$-center covering radius. The covering radius controls the gap between the ALT heuristic on the subset $S$ and the true distance (Theorem~\ref{thm:covering-radius}, Section~\ref{sec:theory}). Increasing $m$ monotonically decreases $r_m$; the special case $m = K$ (all teacher landmarks selected) gives $r_K = 0$ only if $L = V$, which is impractical; in practice, $r_m$ decreases with $m$ but remains positive.

For directed graphs, the covering radius uses the symmetrized metric $r_m^{\mathrm{sym}} := \max_{v \in V_{\mathrm{reach}}}\, \min_{l \in S}\, \max\!\big(d(l,v),\, d(v,l)\big)$, where $V_{\mathrm{reach}}$ is the reachable vertex set (Section~\ref{sec:graphs-astar}). Both forward and backward landmark distances must be bounded for the ALT heuristic to use landmark $l$ for vertex $v$.

\paragraph{Memory model.} ALT preprocessing stores shortest-path distances from each landmark to every vertex. For directed graphs, both forward distances $d(l_k, v)$ and backward distances $d(v, l_k)$ are required, totaling $2K$ floats per vertex ($2K|V|$ overall). For undirected graphs, $d(l_k, v) = d(v, l_k)$, so $K$ floats per vertex suffice. At $K = 64$ landmarks and $|V| = 10^6$ vertices, directed ALT requires $\approx 488$~MB in float32 or $\approx 977$~MB in float64.

\paragraph{Complexity.} \emph{Preprocessing:} $K$ runs of Dijkstra's single-source shortest-path (SSSP) algorithm, each costing $O(|V|\log|V| + |E|)$ with a binary heap, giving $O(K(|V|\log|V| + |E|))$ total. For directed graphs, an additional $K$ reverse SSSPs on the transposed graph are needed (same asymptotic cost). \emph{Query time:} Evaluating $\hALT(u,t)$ requires $O(K)$ lookups and comparisons. \emph{Total A* cost:} The heuristic evaluation is $O(K)$ per node expansion; full A* query time additionally scales with the number of expanded nodes, which depends on graph structure and heuristic tightness.

\section{Method: AAC}
\label{sec:method}

This section specifies the AAC module: a preprocessing pipeline, a row-stochastic compressor, and a training objective that together close \emph{admissibility by architecture} (Section~\ref{sec:introduction}). The structural insight is that at deployment, with hard argmax, AAC reduces to ALT on the learned forward/backward landmark subset (Proposition~\ref{prop:alt-special-case}): learning chooses which teacher landmarks survive, while the row-stochastic architecture guarantees admissibility for \emph{every} parameter setting, even under early stopping, training divergence, or hyperparameter drift. This single property gives the four benefits stated in Section~\ref{sec:introduction}: no dependence on training convergence, end-to-end differentiability, a classical-ALT deployment target, and a clean substrate for matched-memory studies. The rest of the section specifies how the pipeline realizes those benefits.

For undirected graphs, the compressed representation uses $m < K_0$ dimensions. For directed graphs, full ALT uses $2K_0$ directional labels (forward and backward); AAC uses $m = m_{\mathrm{fwd}} + m_{\mathrm{bwd}}$ directional dimensions with $m_{\mathrm{fwd}}, m_{\mathrm{bwd}} \leq K_0$, and $m = 2K_0$ recovers ALT exactly. Each compressed dimension selects one teacher landmark; uniqueness is encouraged by a regularizer ($R_{\mathrm{uniq}}$, Section~\ref{sec:training}) but not enforced by the architecture, so duplicate selections are possible.

\begin{table}[h]
\centering
\small
\caption{Core notation used throughout Section~\ref{sec:method}.}
\begin{tabular}{ll}
\toprule
Symbol & Meaning \\
\midrule
$K_0$ & number of teacher (FPS) landmarks used at preprocessing \\
$m$ & number of compressed dimensions retained at deployment (undirected) \\
$m_{\mathrm{fwd}}, m_{\mathrm{bwd}}$ & forward / backward compressed dimensions (directed; $m = m_{\mathrm{fwd}}{+}m_{\mathrm{bwd}}$) \\
$\hALT$ & full-ALT heuristic over all $K_0$ teacher landmarks \\
$\hALTsub$ & ALT restricted to landmark subset $S \subseteq [K_0]$ (deployment-equivalent form) \\
$\hAAC$ & AAC heuristic with hard argmax (deployment) \\
$\hAACtilde$ & AAC heuristic with soft row-stochastic weights (training surrogate) \\
$B$ & row-stochastic selection matrix of size $m \times K_0$ \\
$R_{\mathrm{uniq}}, R_{\mathrm{ent}}$ & uniqueness / entropy regularizers on $B$ \\
\bottomrule
\end{tabular}
\end{table}

\subsection{Preprocessing Pipeline}
\label{sec:pipeline}

\begin{enumerate}
\item \textbf{Anchor selection.} Select $K_0$ landmarks via farthest-point sampling on the largest weakly connected component.
\item \textbf{Teacher labels.} Compute $K_0$ forward SSSPs to obtain $d_{\mathrm{out}}(k, v)$ for all $k \in [K_0]$, $v \in V$. For directed graphs, also compute $K_0$ reverse SSSPs (on the transposed graph) to obtain $d_{\mathrm{in}}(k, v)$.
\item \textbf{Learned compression.} Train a row-stochastic selection matrix (Section~\ref{sec:compressor}) to select $m$ landmarks from the $K_0$ teacher pool.
\item \textbf{Deploy.} Store $m$ compressed values per vertex. At query time, evaluate $h(u,t)$ in $O(m)$.
\end{enumerate}

\paragraph{Cost breakdown.}
\begin{itemize}
\item \textbf{Preprocessing (offline):} $O(K_0(V \log V + E))$ for $K_0$ SSSP computations from teacher landmarks.
\item \textbf{Compressor training (offline):} $O(E_{\mathrm{train}} \cdot m \cdot K_0)$ for $E_{\mathrm{train}}$ training epochs over the $m \times K_0$ selection matrix (typically 200 epochs, negligible relative to SSSP).
\item \textbf{Deployment:} Storage of $O(Vm)$ floats. Each heuristic lookup $h(u,t)$ in $O(m)$; full A* query time additionally scales with node expansions.
\end{itemize}
For the contextual variant, deployment additionally requires per-instance anchor SSSP on the predicted-cost graph before the $O(m)$-per-lookup heuristic evaluation. Figure~\ref{fig:pipeline} lays the three stages out side by side: \textbf{(01)~Preprocessing} produces the $K_0$ teacher labels, \textbf{(02)~Learning} trains the soft row-stochastic compressor $A$ under the gap-closing loss with the gradient $\nabla_\theta\mathcal{L}$ flowing back into $A$, and \textbf{(03)~Deployment} hardens $A$ to one-hot rows so the inference heuristic reduces to classical ALT on a learned $m$-landmark subset (Proposition~\ref{prop:alt-special-case}). The certificate ribbon at the top of the figure restates the architectural bound $\hA(u,t) \le \hALT(u,t) \le d(u,t)$ that holds for every parameter $\theta$ (Proposition~\ref{thm:admissibility}); the per-stage cost annotations under each panel give the offline / online resource budget at a glance.

\begin{figure}[t]
\centering
\includegraphics[width=\textwidth]{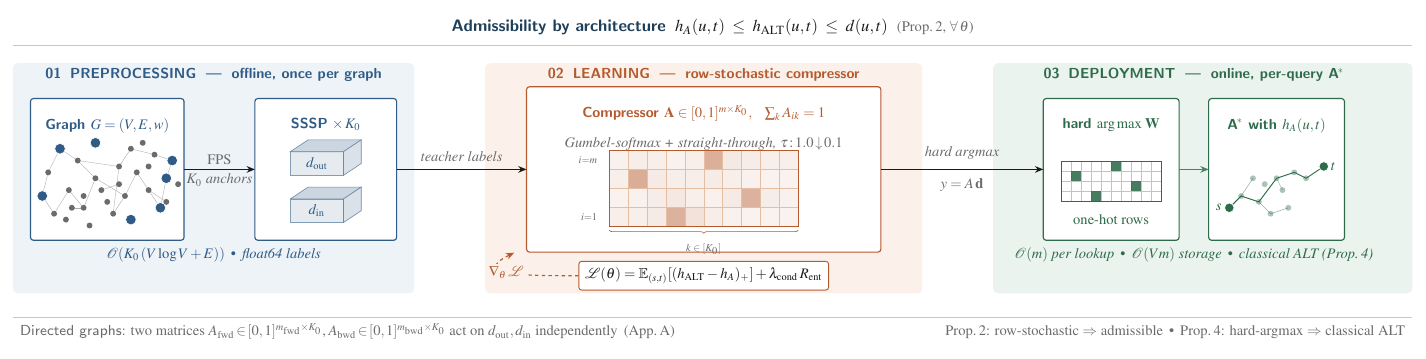}
\caption{AAC method overview. The top ribbon states the architectural admissibility certificate $\hA(u,t) \le \hALT(u,t) \le d(u,t)$ that holds for every parameter $\theta$ (Proposition~\ref{thm:admissibility}). \textbf{(01)~Preprocessing} (offline, once per graph): FPS selects $K_0$ teacher anchors and $K_0$ SSSPs yield forward/backward distance labels $d_{\mathrm{out}}, d_{\mathrm{in}}$. \textbf{(02)~Learning}: the row-stochastic compressor $A \in [0,1]^{m \times K_0}$ (each row sums to $1$) is trained with Gumbel-softmax + straight-through ($\tau\!:\!1.0\!\downarrow\!0.1$) under the gap-closing loss $\mathcal{L}(\theta) = \mathbb{E}_{(s,t)}[(\hALT - \hA)_+] + \lambda_{\mathrm{cond}} R_{\mathrm{ent}}$; the dashed orange path is the SGD update $\nabla_\theta\mathcal{L}$. \textbf{(03)~Deployment} (online, per-query A*): hard $\argmax$ on each row of $W$ yields one-hot rows; the heuristic reduces to classical ALT on $\{l_{j^\ast(i)}\}_{i=1}^{m}$ (Proposition~\ref{prop:alt-special-case}), with $O(m)$ per lookup and $O(Vm)$ storage. Directed graphs use independent forward/backward compressors $A_{\mathrm{fwd}}, A_{\mathrm{bwd}}$ acting on $d_{\mathrm{out}}, d_{\mathrm{in}}$ (Appendix~\ref{app:directed}).}
\label{fig:pipeline}
\end{figure}

\subsection{Row-Stochastic Compression}
\label{sec:compressor}

\paragraph{Definition.} The compressor maintains a learnable logit matrix $W \in \R^{m \times K_0}$. For directed graphs, the $m$ compressed dimensions are split: $m_{\mathrm{fwd}} = \lfloor m/2 \rfloor$ forward dimensions and $m_{\mathrm{bwd}} = m - m_{\mathrm{fwd}}$ backward dimensions, with separate matrices $W_{\mathrm{fwd}} \in \R^{m_{\mathrm{fwd}} \times K_0}$ and $W_{\mathrm{bwd}} \in \R^{m_{\mathrm{bwd}} \times K_0}$.

\paragraph{Training.} The selection matrix $A$ is sampled from a hard Gumbel-softmax over the logits $W/\tau$ with the straight-through estimator~\citep{jang2017categorical,maddison2017concrete,bengio2013estimating} (full sampling formula in Appendix~\ref{app:compressor-details}). Temperature $\tau$ is annealed exponentially from 1.0 to 0.1; the forward pass produces one-hot rows (hard argmax), the backward pass uses the soft Gumbel-softmax gradients. Since one-hot vectors are row-stochastic (non-negative entries summing to 1), $A$ is row-stochastic in every forward pass.

\paragraph{Design choices.} AAC applies Gumbel-softmax independently per compressed dimension (per row of $A$), matching the L2X~\citep{chen2018l2x} selector primitive; the Concrete Autoencoder~\citep{balin2019concrete} and IP-CAE~\citep{nilsson2024ipcae} are global-top-$k$ predecessors. The novelty is not the selector primitive but its composition with the ALT structure: the row-stochastic constraint simultaneously enables differentiable selection \emph{and} serves as the admissibility certificate via \citet{pearl1984heuristics}'s convex-combination argument (Section~\ref{sec:related} places AAC in the four-lane taxonomy of admissibility-aware learning).

The compressed labels are computed as:
\begin{equation}
\label{eq:compression}
y(v) = A \cdot \mathbf{d}(v), \quad \text{where } \mathbf{d}(v) = \big(d(l_1, v), \ldots, d(l_{K_0}, v)\big)^\top.
\end{equation}
For directed graphs, forward labels use $y_{\mathrm{fwd}}(v) = A_{\mathrm{fwd}} \cdot \mathbf{d}_{\mathrm{out}}(v)$ and backward labels use $y_{\mathrm{bwd}}(v) = A_{\mathrm{bwd}} \cdot \mathbf{d}_{\mathrm{in}}(v)$.

\paragraph{Inference.} At deployment, hard argmax selection over $W$'s rows reduces each compressed output to a single teacher landmark distance, $y_i(v) = d(l_{j^*(i)}, v)$ with $j^*(i) = \argmax_j W_{i,j}$ (Appendix~\ref{app:compressor-details}). Each row is one-hot, a special case of row-stochastic. Consequently, the deployed system performs standard ALT subset selection: only the method of \emph{choosing} which subset is learned; the heuristic evaluation itself is identical to ALT on $m$ landmarks.

\paragraph{Heuristic evaluation.} The compressed heuristic is defined as:
\begin{equation}
\label{eq:hit-directed}
\hA(u, t) = \max\!\Big(0,\; \underbrace{\max_{i \in [m_{\mathrm{bwd}}]} \big(y^{\mathrm{bwd}}_i(u) - y^{\mathrm{bwd}}_i(t)\big)}_{\text{backward bound}},\; \underbrace{\max_{i \in [m_{\mathrm{fwd}}]} \big(y^{\mathrm{fwd}}_i(t) - y^{\mathrm{fwd}}_i(u)\big)}_{\text{forward bound}}\Big)
\end{equation}
for directed graphs, and $\hA(u,t) = \max_i |y_i(u) - y_i(t)|$ for undirected graphs.

\paragraph{Extension to directed graphs.}
\label{sec:directed-extension}
AAC extends to directed graphs using the directed-ALT formula $\max\!\big(d(u,L) - d(t,L),\, d(L,t) - d(L,u)\big)$ of \citet{goldberg2005computing}: we instantiate two row-stochastic matrices $A_{\mathrm{fwd}}, A_{\mathrm{bwd}}$ (sizes $m_{\mathrm{fwd}}{\times}K_0$ and $m_{\mathrm{bwd}}{\times}K_0$ with $m_{\mathrm{fwd}} + m_{\mathrm{bwd}} = m$) acting independently on the forward and backward label tensors. Both inherit Proposition~\ref{thm:admissibility}'s convex-combination admissibility argument. The two settings differ in matched-memory accounting:

\begin{center}
\small
\begin{tabular}{lcc}
\toprule
Setting & ALT memory & Matched-budget rule at $B$ B/v \\
\midrule
Undirected ($d_{\mathrm{in}} \equiv d_{\mathrm{out}}$) & $K$ floats/vertex & AAC $m{=}B/4$ vs.\ ALT $K{=}B/4$ \\
Directed ($d_{\mathrm{in}} \neq d_{\mathrm{out}}$)     & $2K$ floats/vertex & AAC $m{=}B/4$ vs.\ ALT $K{=}B/8$ \\
\bottomrule
\end{tabular}
\end{center}

\noindent The covering-radius bound (Theorem~\ref{thm:covering-radius}) uses the symmetrized form $r_m^{\mathrm{sym}} := \max_{v \in V_{\mathrm{reach}}} \min_{l \in S} \max(d(l,v), d(v,l))$ on directed graphs and the standard $r_m$ on undirected graphs; the full derivation and admissibility proof for the directed case are in Appendix~\ref{app:directed}.

\subsection{Theoretical Analysis: Capacity Ceiling and Covering Radius}
\label{sec:theory}

We split the framework into two parts. Section~\ref{sec:theory-arch} states the architectural admissibility guarantee (the capacity ceiling that bounds any row-stochastic compressor by its teacher), needed to interpret the matched-memory comparisons. Section~\ref{sec:theory-coverage} states the covering-radius and pool-size bounds that explain \emph{why} the comparison comes out the way it does, needed to interpret the per-graph-type results.

\paragraph{Architectural admissibility.}
\label{sec:theory-arch}

We now state the admissibility guarantee as a direct consequence of a classical observation from the heuristic-search literature \citep[Chap.~4]{pearl1984heuristics}: convex combinations of admissible heuristics are admissible. The logical chain is:

\begin{center}
Triangle inequality $\Rightarrow$ ALT admissibility (Thm.~\ref{thm:alt-admissibility}) $\Rightarrow$ Compression preserves admissibility (Thm.~\ref{thm:admissibility}, corollary of \citet{pearl1984heuristics}) $\Rightarrow$ Covering radius controls gap (Thm.~\ref{thm:covering-radius}).
\end{center}

\begin{proposition}[Row-stochastic compression preserves admissibility; corollary of \citealt{pearl1984heuristics}, Chap.~4]
\label{thm:admissibility}
Let $A \in \R^{m \times K_0}$ be any row-stochastic matrix (non-negative entries, each row summing to~1). Then:
\begin{enumerate}
    \item[(a)] \textbf{Undirected case:} $\hA(u, t) = \max_i |y_i(u) - y_i(t)| \;\leq\; \max_k |d(l_k,u) - d(l_k,t)| = \hALT(u,t) \;\leq\; d(u,t)$.
    \item[(b)] \textbf{Directed case:} Let $A_{\mathrm{fwd}} \in \R^{m_{\mathrm{fwd}} \times K_0}$ and $A_{\mathrm{bwd}} \in \R^{m_{\mathrm{bwd}} \times K_0}$ be row-stochastic. Then $\hA(u,t) \;\leq\; \hALT(u,t) \;\leq\; d(u,t)$, where $\hA$ is defined by Eq.~\eqref{eq:hit-directed}.
\end{enumerate}
\end{proposition}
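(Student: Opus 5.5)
The plan is to reduce both parts to a single-row inequality: each compressed term is a convex combination of teacher terms, and a convex combination cannot exceed the maximum of its arguments. The right-hand inequality $\hALT(u,t)\le d(u,t)$ in both (a) and (b) is exactly Theorem~\ref{thm:alt-admissibility}, so only the left-hand inequality $\hA\le\hALT$ needs work.

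\emph{Undirected case (a).} Fix a row $i$ with nonnegative weights $a_{i1},\dots,a_{iK_0}$ summing to $1$. By linearity of Eq.~\eqref{eq:compression},
\[
y_i(u)-y_i(t)=\sum_k a_{ik}\big(d(l_k,u)-d(l_k,t)\big).
\]
I then apply the triangle inequality for $|\cdot|$ together with nonnegativity of the weights:
\[
|y_i(u)-y_i(t)|\le\sum_k a_{ik}\,|d(l_k,u)-d(l_k,t)|\le \Big(\sum_k a_{ik}\Big)\max_k|d(l_k,u)-d(l_k,t)|=\hALT(u,t).
\]
Taking $\max_i$ gives (a). This step uses both the nonnegativity and the row-sum condition. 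The nonnegativity is essential: without it, $|\sum_k a_{ik}x_k|$ could exceed $\max_k|x_k|$.

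\emph{Directed case (b).} I handle the forward and backward bounds of Eq.~\eqref{eq:hit-directed} separately; no absolute values are needed here. For a forward row $i$,
\[
y^{\mathrm{fwd}}_i(t)-y^{\mathrm{fwd}}_i(u)=\sum_k a^{\mathrm{fwd}}_{ik}\big(d_{\mathrm{out}}(k,t)-d_{\mathrm{out}}(k,u)\big)\le\max_k\big(d_{\mathrm{out}}(k,t)-d_{\mathrm{out}}(k,u)\big).
\]
This is the forward component of Eq.~\eqref{eq:alt}. The backward bound is handled identically with $A_{\mathrm{bwd}}$ and $d_{\mathrm{in}}$. Each of the three arguments of the outer $\max$ in Eq.~\eqref{eq:hit-directed}, namely $0$, the backward term and the forward term, is therefore dominated by the corresponding argument in Eq.~\eqref{eq:alt}. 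Since $\max$ is monotone in its arguments, $\hA\le\hALT$.

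\emph{Main obstacle.} The only real subtlety is the reachability issue on directed graphs. If some $d_{\mathrm{out}}(k,\cdot)$ or $d_{\mathrm{in}}(k,\cdot)$ is infinite, then a mixed label can be infinite or produce undefined differences. I would handle this by invoking the paper's reachability assumption: all relevant distances are finite, or infinite landmarks are masked out before mixing. Under masking, I would restrict each convex combination to the finite coordinates and renormalize, which keeps the row stochastic, and argue that restricting the teacher's $\max$ to the surviving landmarks can only lower it, so admissibility is preserved. The remainder of the argument is the routine convexity computation above, which is exactly Pearl's observation that convex combinations of admissible bounds are admissible.
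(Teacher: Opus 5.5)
Your proof is correct and is essentially the paper's argument: you write each compressed difference as a convex combination of teacher differences, bound it by the teacher maximum, take the max over rows (and over the forward, backward and $0$ terms in the directed case), and close with Theorem~\ref{thm:alt-admissibility}. The only differences are that in (a) you bound $|\sum_k a_{ik}\delta_k|\le\sum_k a_{ik}|\delta_k|\le\max_k|\delta_k|$ in one line, where the paper uses the sandwich $\min_k\delta_k\le\sum_k a_{ik}\delta_k\le\max_k\delta_k$ plus a sign case split, and that your renormalization aside on masked landmarks goes beyond the paper, which simply assumes finite distances (its reachability assumption) and does not need it.
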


\begin{proof}[Sketch]
Each compressed dimension is a convex combination of admissible teacher differences (each row of $A$, $A_{\mathrm{fwd}}$, $A_{\mathrm{bwd}}$ is a probability vector), so the row-wise inner products are bounded by the max over teacher landmarks of each bound type; taking the max over compressed rows preserves the inequality, and ALT admissibility (Theorem~\ref{thm:alt-admissibility}) closes the chain to $d(u,t)$. Full proof in Appendix~\ref{app:admissibility-proof}; the inequality itself is \citet{pearl1984heuristics}'s pointwise-max observation, applied here to each ALT bound type. The empirical consequence is that the Gumbel-softmax training trajectory stays admissible without projection (Section~\ref{sec:training}).
\end{proof}

\begin{corollary}[Inference admissibility]
\label{cor:inference}
At inference, hard argmax selection produces one-hot rows $A_i = e_{j^*(i)}$, which are row-stochastic. Therefore Proposition~\ref{thm:admissibility} applies and the deployed heuristic is admissible.
\end{corollary}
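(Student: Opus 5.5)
The plan is to verify the single hypothesis of Proposition~\ref{thm:admissibility}, namely that the hard-argmax matrix is row-stochastic, and then invoke it. First, fix the deployed logits $W$ (or $W_{\mathrm{fwd}}, W_{\mathrm{bwd}}$ in the directed case). For each row $i$ choose $j^*(i) = \argmax_j W_{i,j}$, breaking ties by any deterministic rule such as the smallest index, so that $j^*(i)$ is well defined. Then define $A_i = e_{j^*(i)}$. Every entry of $e_{j^*(i)}$ lies in $\{0,1\}$, so it is non-negative. Exactly one entry equals $1$, so the row sums to $1$. Hence $A$ is row-stochastic, and the same holds separately for $A_{\mathrm{fwd}}$ and $A_{\mathrm{bwd}}$.

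Second, apply Proposition~\ref{thm:admissibility} directly. Part~(a) covers undirected graphs and part~(b) covers directed graphs. The proposition places no requirement on how $A$ was produced, so it yields $\hA(u,t) \le \hALT(u,t) \le d(u,t)$ for all $u,t$. This is exactly admissibility of the deployed heuristic $\hAAC$.

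It is worth noting explicitly that $y_i(v) = e_{j^*(i)}^\top \mathbf{d}(v) = d(l_{j^*(i)}, v)$. So the deployed heuristic is ALT restricted to the subset $S = \{l_{j^*(i)}\}_i$. As a sanity check, this gives a second route to the same conclusion: Theorem~\ref{thm:alt-admissibility} applied to $S$, since its proof bounds each landmark term individually and therefore works for any subset.

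The only potential obstacle is bookkeeping rather than mathematics. First, ties in the argmax must be resolved so that each row is genuinely a single standard basis vector. Second, on directed graphs the masking of unreachable landmark distances must not break the argument. Masking only removes terms from the outer $\max$ in Eq.~\eqref{eq:hit-directed}, and under the reachability assumption the remaining terms are the finite teacher differences already bounded in Theorem~\ref{thm:alt-admissibility}. So I expect to dispose of this point with one remark, and to rely on the clamp at $0$ when every term is masked.
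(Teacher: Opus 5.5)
Your proposal is correct and follows the paper's argument: one-hot rows $e_{j^*(i)}$ are non-negative and sum to $1$, so Proposition~\ref{thm:admissibility} applies verbatim, and the paper's proof is exactly this one-line observation. Your additional remarks on tie-breaking, on the ALT-on-a-subset reading (Proposition~\ref{prop:alt-special-case}), and on sentinel masking are sound, but the statement as posed does not need them.
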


\begin{proposition}[ALT as a special case of AAC]
\label{prop:alt-special-case}
\textbf{Undirected case:} If $m = K_0$ and $A = I_{K_0}$, then $\hA \equiv \hALT$. \textbf{Directed case:} If $m_{\mathrm{fwd}} = m_{\mathrm{bwd}} = K_0$ (total storage $m = 2K_0$) and $A_{\mathrm{fwd}} = A_{\mathrm{bwd}} = I_{K_0}$, then $\hA \equiv \hALT$. The $m = 2K_0$ case is the non-compressed regime that recovers ALT exactly; actual compression requires $m < 2K_0$ for directed graphs (resp.\ $m < K_0$ for undirected). More generally, if each row of $A$ (resp.\ $A_{\mathrm{fwd}}$, $A_{\mathrm{bwd}}$) is a standard basis vector $e_{j^*(i)}$, then $\hA$ equals the ALT heuristic restricted to the selected landmark subset (see Figure~\ref{fig:landmarks}).
\end{proposition}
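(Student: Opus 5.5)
The plan is a direct substitution into the definitions, carried out first for the general one-hot statement; the identity cases then follow by choosing $j^*(i)=i$.

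\emph{Step 1 (one-hot rows pick out coordinates).} Suppose row $i$ of $A$ is $e_{j^*(i)}$. Then Eq.~\eqref{eq:compression} gives
\[
y_i(v) = e_{j^*(i)}^\top \mathbf{d}(v) = d(l_{j^*(i)}, v)
\]
for every vertex $v$. In the directed case the same computation applies to each matrix: $y^{\mathrm{fwd}}_i(v) = d_{\mathrm{out}}(j^*_{\mathrm{fwd}}(i), v)$ and $y^{\mathrm{bwd}}_i(v) = d_{\mathrm{in}}(j^*_{\mathrm{bwd}}(i), v)$.

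\emph{Step 2 (substitute into the heuristic).} In the undirected case, $\hA(u,t) = \max_i |d(l_{j^*(i)},u) - d(l_{j^*(i)},t)|$. A max over $i\in[m]$ of a quantity that depends on $i$ only through $j^*(i)$ equals the max over the image set $S = \{j^*(i)\}$. Repeated indices therefore do no harm: duplicate selections contribute identical terms, so the max is unchanged. This yields the undirected ALT form $\max_{k\in S}|d(l_k,u)-d(l_k,t)|$ noted after Theorem~\ref{thm:alt-admissibility}, which is $\hALTsub$.

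In the directed case, substituting into Eq.~\eqref{eq:hit-directed} turns the forward block into $\max_{k\in S_{\mathrm{fwd}}}\big(d_{\mathrm{out}}(k,t)-d_{\mathrm{out}}(k,u)\big)$ and the backward block into $\max_{k\in S_{\mathrm{bwd}}}\big(d_{\mathrm{in}}(k,u)-d_{\mathrm{in}}(k,t)\big)$, with the clamp at $0$ kept. This matches Eq.~\eqref{eq:alt} term by term, except that the forward and backward maxima range over possibly different subsets. So I will state ``ALT restricted to the selected subset'' to mean exactly this forward/backward subset pair, consistent with the paper's ``learned forward/backward landmark subset.''

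\emph{Step 3 (identity specializations).} Taking $m=K_0$ and $A=I_{K_0}$ gives $j^*(i)=i$ and $S=[K_0]$, so Step 2 reproduces $\hALT$ exactly. The directed case is the same with $A_{\mathrm{fwd}}=A_{\mathrm{bwd}}=I_{K_0}$, which makes both subsets equal to $[K_0]$. Since each direction stores $K_0$ labels, the total storage is $m=2K_0$.

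The only real subtlety is a definitional one. In the directed case with unequal subsets, ``ALT on a subset'' must be read as the version with a separate landmark set for each direction. A secondary point is that duplicate rows do not change the max.
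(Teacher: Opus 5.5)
Your proposal is correct and uses the same direct-substitution argument as the paper's proof. The only difference in order is that you treat the one-hot case first, so the identity cases follow by taking $j^*(i)=i$. You also note that duplicate rows leave the max unchanged, and that in the directed case ``ALT on the selected subset'' must mean separate forward and backward landmark sets; both points make precise what the paper's one-line ``restricting the maximization to the selected subset'' leaves implicit.
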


\begin{proof}
With $A = I$, $y_i(v) = \sum_k (I)_{i,k}\, d(l_k, v) = d(l_i, v)$, so the compressed labels equal the teacher labels. The heuristic definitions (Eq.~\ref{eq:hit-directed} for directed, L$_\infty$ for undirected) then coincide with Eq.~\eqref{eq:alt}. For directed graphs, setting $A_{\mathrm{fwd}} = A_{\mathrm{bwd}} = I_{K_0}$ recovers all $K_0$ forward and $K_0$ backward bounds. With one-hot rows $A_i = e_{j^*(i)}$, each compressed dimension selects exactly one teacher landmark, restricting the maximization to the selected subset.
\end{proof}

\paragraph{Covering-radius and pool-size bounds.}
\label{sec:theory-coverage}

Section~\ref{sec:theory-arch} answered ``how loose can the compressor be relative to its teacher?'' (capacity ceiling: it cannot exceed the teacher). This subsection answers ``how loose can the teacher be relative to the true distance?'' (covering-radius bound) and ``how does the matched-memory question interact with $K_0$?'' (pool-size bound). Together with the gradient-identity scope-note above (Section~\ref{sec:training}), they give the full theoretical picture: the architecture admits the FPS-ALT identity subset (Proposition~\ref{prop:alt-special-case}), the teacher gap is $\leq 2 r_m^{\mathrm{sym}}$ (Theorem~\ref{thm:covering-radius}), and the gap-to-teacher gradient is not guaranteed to steer toward the covering-optimal subset; jointly, this explains the matched-memory parity finding without invoking any failure of the architecture.

\begin{figure}[t]
\centering
\includegraphics[width=\textwidth]{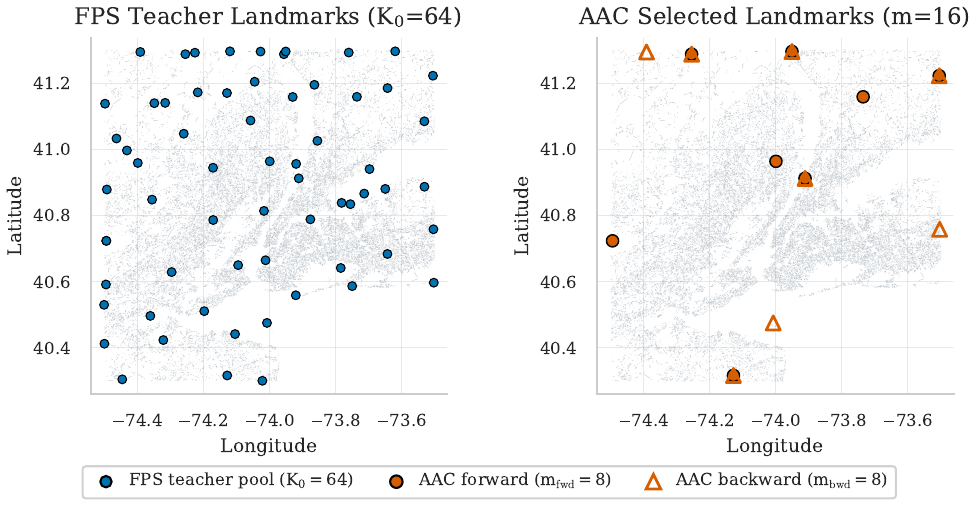}
\caption{Landmark selection on NY (DIMACS). \textbf{Left}: $K_0{=}64$ FPS teacher landmarks, dispersed across the graph. \textbf{Right}: AAC's $m{=}16$ directional selection ($8$ forward + $8$ backward; $11$ distinct vertices, $5$ shared across directions), boundary-concentrated rather than dispersed. Figure~\ref{fig:landmarks-sbm} shows the SBM analogue. Per-graph-type matched-memory results in Table~\ref{tab:main_results}.}
\label{fig:landmarks}
\end{figure}

\begin{figure}[t]
\centering
\includegraphics[width=\textwidth]{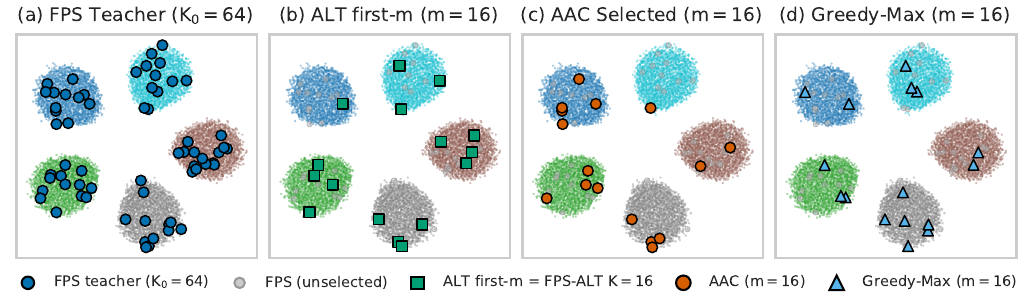}
\caption{Landmark selection on SBM ($5{\times}2000$, $p_{\mathrm{in}}{=}0.05$, $p_{\mathrm{out}}{=}0.001$; spring layout, color = block). Four selection rules from the same $K_0{=}64$ FPS teacher pool, $m{=}16$. \textbf{(a)}: FPS teacher (all 64). \textbf{(b)}: ALT first-$m$ -- the actual matched-memory baseline, algebraically equal to FPS-ALT $K{=}m$ via the forced-first-$m$ identity (Section~\ref{sec:training-drift}). \textbf{(c)}: AAC learned selection. \textbf{(d)}: Greedy-Max coverage oracle on the same pool. Under per-graph-type matched-memory accounting (Section~\ref{sec:setup}), FPS-ALT leads AAC by $0.12$--$1.34$\,\% on SBM (Section~\ref{sec:synthetic-sbm-ba}); the figure documents a qualitative placement difference, not a downstream advantage.}
\label{fig:landmarks-sbm}
\end{figure}

The boundary placement visible in Figure~\ref{fig:landmarks} is consistent with a classical observation by \citet{goldberg2005computing}: landmarks at the periphery produce tight ALT bounds for long-range queries. AAC's learned selection concentrates landmarks in a smaller extremal cluster, trading uniform spatial coverage for stronger bounds on a subset of query directions. On road networks this tradeoff is unfavorable: the covering radius increases (Theorem~\ref{thm:covering-radius}), degrading average-case performance. The contrast is clearest in Figure~\ref{fig:landmarks-sbm}: the trained selector leaves one community with a single landmark and concentrates the budget on the other four, but this re-allocation does not translate into a measurable advantage over FPS at matched memory (Section~\ref{sec:synthetic-sbm-ba}).

\paragraph{Covering radius bound.}
The ALT heuristic on a landmark subset $S$ cannot be tighter than the full set, but how much looser can it be? The covering radius (Section~\ref{sec:alt-background}) quantifies this:

\begin{theorem}[Covering radius bound]
\label{thm:covering-radius}
Let $S \subseteq L$ be any $m$-element landmark subset with covering radius
$r_m := \max_{v \in V}\, \min_{l \in S}\, d(v, l)$.
Then for all $u, t \in V$ with finite $d(u,t)$:
\begin{equation}
\label{eq:covering-bound}
d(u,t) - \hALT^S(u,t) \;\leq\; 2\,r_m,
\end{equation}
where $\hALT^S$ denotes the ALT heuristic restricted to landmarks $S$.
Since AAC at inference selects a fixed subset $S$ of teacher landmarks
(Corollary~\ref{cor:inference}), we have
$d(u,t) - h_{\mathrm{AAC}}(u,t) \leq 2\,r_m$.

For directed graphs under the reachability assumption
(Section~\ref{sec:background}), the bound holds with the symmetrized
covering radius
$r_m^{\mathrm{sym}} := \max_{v \in V_{\mathrm{reach}}}\, \min_{l \in S}\,
\max\!\big(d(l,v),\, d(v,l)\big)$,
where $V_{\mathrm{reach}}$ is the set of vertices reachable from at least
one landmark in both directions.
\end{theorem}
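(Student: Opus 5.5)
We prove the bound by exhibiting, for each query $(u,t)$, a single landmark $l\in S$ whose own triangle-inequality term is already within $2r_m$ of $d(u,t)$. Since $\hALT^S(u,t)$ is a maximum over terms that includes this one, it is at least that large, and the gap bound follows.

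\textbf{Choice of landmark.} Fix $u,t$ with $d(u,t)<\infty$. By the definition of $r_m$ in Eq.~\eqref{eq:covering-radius-def}, there is a landmark $l\in S$ with $d(u,l)\le r_m$; we take $l$ to be the landmark nearest to $u$. (Choosing the landmark nearest to $t$ works symmetrically.)

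\textbf{Undirected case.} Here $d$ is symmetric, and $\hALT^S(u,t)\ge |d(l,u)-d(l,t)|\ge d(l,t)-d(l,u)$. The triangle inequality gives $d(u,t)\le d(u,l)+d(l,t)$. Combining the two,
\[
d(u,t)-\hALT^S(u,t)\le d(u,t)-d(l,t)+d(l,u)\le 2\,d(u,l)\le 2r_m .
\]
The bound for $h_{\mathrm{AAC}}$ then follows from Proposition~\ref{prop:alt-special-case} and Corollary~\ref{cor:inference}: with one-hot rows, $h_{\mathrm{AAC}}$ is exactly $\hALT^S$ for the selected subset $S$, counting a duplicated selection once.

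\textbf{Directed case.} We use the backward term of Eq.~\eqref{eq:alt}, $d(u,l)-d(t,l)$, and need an upper bound on $d(u,t)$ in terms of it. Going through $l$ gives $d(u,t)\le d(u,l)+d(l,t)$, so
\[
d(u,t)-\bigl(d(u,l)-d(t,l)\bigr)\le d(l,t)+d(t,l).
\]
This controls the gap by a round trip between $t$ and $l$. We therefore choose $l$ as the landmark nearest to $t$ in the symmetrized sense, so that $\max(d(l,t),d(t,l))\le r_m^{\mathrm{sym}}$. The round trip is then at most $2r_m^{\mathrm{sym}}$, and the gap is at most $2r_m^{\mathrm{sym}}$ as claimed. (Alternatively, one can use the forward term $d(l,t)-d(l,u)$ with $l$ nearest to $u$; the gap is then bounded by $d(u,l)+d(l,u)$.) The reachability assumption guarantees that $t\in V_{\mathrm{reach}}$ and that the chosen landmark's distances are finite, so its term is not masked out. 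In the deployed directed heuristic we also need $l$ to appear among the backward selections.

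\textbf{Main obstacle.} The difficulty is in the directed deployment statement. AAC selects separate forward and backward subsets, so "$S$" must be interpreted as the set of landmarks present in the relevant direction; the covering radius should then be computed per direction, or for a landmark used in both directions. I would state this explicitly. In the undirected case the argument has no subtlety.
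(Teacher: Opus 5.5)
Your proof is correct and matches the paper's argument: in the undirected case you take the landmark nearest to $u$, use its forward term and the triangle inequality to bound the gap by $2\,d(u,l)$; in the directed case the paper gives exactly your two routes (the forward term with the landmark nearest $u$ and the backward term with the landmark nearest $t$), each bounded by a round trip of at most $2\,r_m^{\mathrm{sym}}$. Your caveat about directed deployment is also right and sharper than the paper: there the forward and backward subsets $S_{\mathrm{fwd}}, S_{\mathrm{bwd}}$ differ, so the paper's appeal to a single $S$ is loose, and the bound for $h_{\mathrm{AAC}}$ actually holds with the minimum of the symmetrized radii of $S_{\mathrm{fwd}}$ (forward route) and $S_{\mathrm{bwd}}$ (backward route).
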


\begin{proof}[Proof sketch]
Fix any $u, t$ with finite $d(u,t)$. Let $l^* \in S$ be the landmark
minimizing $d(u, l^*)$; by the covering radius definition, $d(u, l^*) \leq r_m$.
The forward ALT bound for $l^*$ gives
$d(l^*, t) - d(l^*, u) \leq d(u,t)$ (Theorem~\ref{thm:alt-admissibility}),
so:
\[
d(u,t) - \big[d(l^*, t) - d(l^*, u)\big]
= d(u,t) + d(l^*, u) - d(l^*, t)
\leq d(u, l^*) + d(l^*, u)
\leq 2\,d(u, l^*) \leq 2\,r_m,
\]
where the key step uses $d(l^*,t) \geq d(u,t) - d(u, l^*)$ (triangle
inequality). Since $\hALT^S \geq d(l^*,t) - d(l^*,u)$, the result follows.
Full proof including the directed case in Appendix~\ref{app:proofs}.
\end{proof}

\begin{corollary}[FPS landmark selection]
\label{cor:fps-covering}
If $S$ is selected by farthest-point sampling
(Gonzalez, \citeyear{gonzalez1985clustering}), then
$r_m \leq 2\,r_m^*$, where $r_m^*$ is the optimal $m$-center covering
radius. Consequently, $d(u,t) - \hALT^S(u,t) \leq 4\,r_m^*$.
\end{corollary}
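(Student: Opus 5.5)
The corollary follows by chaining two results: Theorem~\ref{thm:covering-radius} and the classical Gonzalez guarantee for farthest-point sampling. I would first fix the FPS-selected $m$-element set $S$ and apply Theorem~\ref{thm:covering-radius} to it. Since that theorem holds for an arbitrary $m$-subset, it gives $d(u,t) - \hALT^S(u,t) \le 2 r_m$ for every pair $u,t$ with finite $d(u,t)$. It then remains to show $r_m \le 2 r_m^*$ and substitute, which yields $d(u,t) - \hALT^S(u,t) \le 2 \cdot 2 r_m^* = 4 r_m^*$.

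For $r_m \le 2r_m^*$ I would reproduce the standard Gonzalez argument on the (undirected) shortest-path metric. Let $S=\{l_1,\dots,l_m\}$ be the FPS output. Let $v^*$ be a vertex attaining $r_m = \min_{l\in S} d(v^*,l)$. Because each FPS step picks a point at maximum distance from the current set, and these distances are nonincreasing, the $m+1$ points $l_1,\dots,l_m,v^*$ are pairwise at distance at least $r_m$. Now take an optimal center set $C^*$ of size $m$. By pigeonhole, two of these $m+1$ points share a nearest center $c\in C^*$, so they are each within $r_m^*$ of $c$. The triangle inequality then gives $r_m \le 2r_m^*$.

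\textbf{Main obstacle.} The delicate point is the pool and metric setup. In this paper FPS runs over a teacher pool or over all of $V$, with the first landmark chosen arbitrarily. The optimal $r_m^*$ must be defined with centers ranging over the same candidate set that FPS draws from (all of $V$), or the pigeonhole comparison breaks. I would state this convention explicitly. For directed graphs the same argument needs a symmetric metric. I would run it on $\max(d(l,v),d(v,l))$, which is a metric because each direction satisfies the triangle inequality, with FPS performed in that metric and $r^*$ defined for it. This matches the symmetrized covering radius $r_m^{\mathrm{sym}}$ used in Theorem~\ref{thm:covering-radius}.
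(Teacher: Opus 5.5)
Your proof is correct and follows the paper's route: the paper simply cites Gonzalez's $2$-approximation $r_m \le 2r_m^*$ and substitutes it into Theorem~\ref{thm:covering-radius} to get $2r_m \le 4r_m^*$, and your packing-plus-pigeonhole argument is the standard proof of that cited fact. On directed graphs, your requirement that FPS itself run in the symmetrized metric $\max(d(l,v),d(v,l))$ is more careful than the paper's appendix remark that the guarantee is ``preserved under symmetrization.'' One small correction to your stated reason for the pool convention: it is the packing step, not the pigeonhole step, that needs FPS to be greedy over the same vertex set whose radius $r_m$ is measured, since pigeonhole works even if the optimal centers range over a larger set.
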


\subsection{Training Objective}
\label{sec:training}

We minimize the expected gap between the teacher heuristic and the compressed heuristic, with one row-entropy regularizer $R_{\mathrm{ent}}$ encouraging sharp (low-temperature) selection:
\begin{equation}
\label{eq:loss}
\mathcal{L}(\theta) = \mathbb{E}_{(s,t)} \Big[\big(\hALT(s,t) - \hA(s,t;\theta)\big)_+\Big] + \lambda_{\mathrm{cond}} \cdot R_{\mathrm{ent}}(\theta),
\end{equation}
where $(\cdot)_+ = \max(0, \cdot)$ and $\theta$ denotes the compressor parameters. We additionally considered a uniqueness regularizer $\lambda_{\mathrm{uniq}}\cdot R_{\mathrm{uniq}}(\theta)$ penalizing duplicate landmark selections; the $\lambda_{\mathrm{uniq}}\in\{0,0.01,0.1\}$ sweep in Appendix~\ref{app:ablation-studies} shows the temperature schedule and $R_{\mathrm{ent}}$ alone yield effective unique-ratio $1.0$ at every cell, so we do not include this term in the reported runs; it remains available in the released training code for tighter $m/K_0$ ratios where mode collapse may bind. The hard $\max$ in Eq.~\eqref{eq:hit-directed} admits an admissibility-preserving smooth surrogate $M_T$ (log-sum-exp minus $\tfrac{\log m}{T}$); the static experiments do not use it (they train with straight-through Gumbel-softmax and the hard $\max$), so we defer the full statement to Appendix~\ref{app:smooth-max} (Theorem~\ref{thm:smooth}), where it is needed for the contextual variant.

\begin{proposition}[Gradient identity for the gap-to-teacher objective]
\label{prop:gradient-equiv}
Let $\theta$ denote the compressor parameters. Under exact admissibility (Proposition~\ref{thm:admissibility}), the gap-to-distance and gap-to-teacher objectives are identical up to a $\theta$-independent constant:
\[
\mathbb{E}\!\big[(d(s,t) - \hA(s,t;\theta))_+\big] = \mathbb{E}\!\big[(\hALT(s,t) - \hA(s,t;\theta))_+\big] + \mathbb{E}\!\big[d(s,t) - \hALT(s,t)\big].
\]
In particular, $\nabla_\theta\, \mathbb{E}\!\big[(d(s,t) - \hA(s,t;\theta))_+\big] = \nabla_\theta\, \mathbb{E}\!\big[(\hALT(s,t) - \hA(s,t;\theta))_+\big]$.\footnote{The identity relies on the exact admissibility chain $\hA \leq \hALT \leq d$, which holds in the static setting (Section~\ref{sec:pipeline}, exact Dijkstra teacher labels). It does \emph{not} extend to the contextual variant: there the smooth-min Bellman--Ford proxy does not preserve a lower bound on true distances during training, and the contextual case relies on deployment-time admissibility via the hard-argmax reduction to ALT on the predicted-cost graph (Appendix~\ref{app:contextual-spec}).}
\end{proposition}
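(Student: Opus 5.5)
The identity is a pointwise statement integrated over the query distribution. Fix a query pair $(s,t)$ with finite $d(s,t)$ and any parameter $\theta$. The compressor produces a row-stochastic matrix $A$ in every forward pass: the straight-through sample is one-hot, and soft weights are also row-stochastic. So Proposition~\ref{thm:admissibility} gives the chain
\[
\hA(s,t;\theta) \le \hALT(s,t) \le d(s,t).
\]
This holds in both the undirected and the directed case. From the chain, both positive-part expressions equal their arguments:
\[
(d(s,t)-\hA(s,t;\theta))_+ = d(s,t)-\hA(s,t;\theta), \qquad (\hALT(s,t)-\hA(s,t;\theta))_+ = \hALT(s,t)-\hA(s,t;\theta).
\]
Next I would split
\[
d - \hA = (\hALT - \hA) + (d - \hALT).
\]
This gives the pointwise identity
\[
(d-\hA)_+ = (\hALT-\hA)_+ + (d-\hALT).
\]
The last term is nonnegative by Theorem~\ref{thm:alt-admissibility}. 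It is also independent of $\theta$, because $\hALT$ is built from the fixed FPS teacher labels and never involves $A$.

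Taking expectations over $(s,t)$ and using linearity yields the displayed identity. This requires the expectations to be finite. The query distribution is supported on finitely many vertex pairs in a common SCC (the reachability assumption), so every term is bounded and integrable, and no further measure-theoretic care is needed. Any randomness in $A$ from Gumbel noise can be included in the expectation or conditioned on; either way the identity holds pointwise for each noise realization.

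For the gradient statement, I would differentiate both sides with respect to $\theta$. The constant $\mathbb{E}[d-\hALT]$ has zero gradient, so whatever (sub)gradient or straight-through surrogate gradient one uses for the first expectation is the same on both sides. The two objectives are equal as functions of $\theta$ up to an additive constant, so their gradients agree wherever either is differentiable, and their Clarke subdifferentials agree everywhere.

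The only point needing care is interpreting $\nabla_\theta$ given the hard max and the straight-through estimator. I would handle it by noting that the two losses differ by a constant as functions of $\theta$, for every sample. Any gradient estimator applied to both therefore returns the same value, since the estimator acts on identical computational graphs up to an additive constant leaf. Under that argument the gradient claim needs no differentiability assumptions beyond those already used to define the training gradient.
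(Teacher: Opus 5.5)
Your core argument is the paper's proof. Proposition~\ref{thm:admissibility} gives $\hA\le\hALT\le d$ pointwise, so both positive parts are inactive. Then $d-\hA=(\hALT-\hA)+(d-\hALT)$ with the last term $\theta$-free, and linearity of expectation finishes. Your integrability and Clarke-subdifferential remarks correctly settle the gradient claim for $\nabla_\theta$ of the expected objective, which is all the proposition asserts.

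Your final paragraph, however, is wrong and should be dropped or restricted. The two losses are not ``identical computational graphs up to an additive constant leaf.'' Autograd evaluates $(d-\hA)_+$ and $(\hALT-\hA)_+$, in which the positive part wraps \emph{different} arguments. The rewriting $(d-\hA)_+=(\hALT-\hA)_++(d-\hALT)$ holds only where $\hA\le\hALT$, e.g.\ on the row-stochastic set.

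A straight-through estimator is not the derivative of the loss as a function of $\theta$; for fixed Gumbel noise that function is piecewise constant. Instead it differentiates the loss with respect to $A$ at the one-hot point and pushes the result through the softmax Jacobian, i.e.\ along row-sum-zero directions. Along those directions a row can acquire negative entries, and then $\hA$ can exceed $\hALT$.

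Concretely, suppose the hard forward pass selects the teacher's maximizing landmark for a query with $\hALT<d$, a common event. Then $\hA=\hALT$ and $(\hALT-\hA)_+$ sits exactly at its kink. A ReLU-style implementation (derivative $0$ at $0$) returns zero gradient there. By contrast, $(d-\hA)_+$ back-propagates $-\partial\hA/\partial A$. For the maximizing row, the softmax Jacobian maps this to a vector proportional to $p\odot(\delta-(p^\top\delta)\mathbf{1})$, where $p$ is the soft Gumbel-softmax row and $\delta_k=d(l_k,s)-d(l_k,t)$. That vector is nonzero unless all $\delta_k$ coincide.

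So the surrogate gradients agree only under the convention that the positive part has derivative $1$ at $0$. Equality of the two objectives as functions of $\theta$ does not by itself transfer to estimators that are not derivatives of those functions.
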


\begin{proof}
By Proposition~\ref{thm:admissibility}, $\hA(s,t;\theta) \leq \hALT(s,t) \leq d(s,t)$ for every $(s,t)$. Therefore both gaps are non-negative:
\[
d(s,t) - \hA \geq 0 \quad \text{and} \quad \hALT(s,t) - \hA \geq 0,
\]
so the positive-part operators are inactive: $(d - \hA)_+ = d - \hA$ and $(\hALT - \hA)_+ = \hALT - \hA$. Expanding: $d - \hA = (\hALT - \hA) + (d - \hALT)$. Taking expectations, $\mathbb{E}[d - \hALT]$ is independent of $\theta$ (both are graph properties or frozen teacher labels). The gradient identity follows immediately.
\end{proof}

This allows training with $\hALT$ (available from teacher labels at cost $O(K_0 V)$ per direction) instead of $d$ (requiring full $O(V^2)$ all-pairs supervision), over sampled source-target queries. Proposition~\ref{prop:gradient-equiv} shows the two objectives share gradients but does not imply either minimum coincides with the covering-radius-optimal subset; Section~\ref{sec:training-drift} documents this gap empirically.

\paragraph{Toy example: 1D path $P_7$.} A closed-form construction makes the gap-vs-coverage divergence concrete. Take $G = P_7$ (vertices $\{0,\dots,6\}$, unit edges), landmark budget $m{=}2$, and the uniform query distribution $Q$ on $\{1,\dots,5\}^2 \setminus \{s{=}t\}$ (20 ordered pairs). Compare the symmetric $k$-center optimum $S_{\mathrm{cov}}{=}\{2,4\}$ against the symmetric peripheral subset $S_{\mathrm{gap}}{=}\{0,6\}$ (Figure~\ref{fig:toy-p7}). On a 1D path, $|d(s,l) - d(l,t)| = d(s,t)$ whenever $l$ lies outside the interval $[\min(s,t), \max(s,t)]$, so $S_{\mathrm{gap}}$ has at least one landmark strictly outside every interior query interval and the ALT bound is exact for all 20 queries: $\mathbb{E}_Q[d - \hALT^S] = 0$. The covering optimum $S_{\mathrm{cov}}$ has $r_2 = 2$ (against $r_2 = 3$ for $S_{\mathrm{gap}}$) but is straddled by the longest queries $(1,5)$ and $(5,1)$, contributing $\mathbb{E}_Q[d - \hALT^S] = 1/5 > 0$. The gap-to-teacher minimum on this graph is therefore the covering-radius worst case among the two candidates, and adding a soft covering-radius penalty $\lambda_{\mathrm{cov}}$ to the loss interpolates monotonically from $S_{\mathrm{gap}}$ ($\lambda_{\mathrm{cov}}{=}0$) to $S_{\mathrm{cov}}$ (large $\lambda_{\mathrm{cov}}$): exactly the tradeoff documented empirically in Appendix~\ref{app:coverage-aware}.\footnote{The exactness $\mathbb{E}[\mathrm{gap}] = 0$ for $S_{\mathrm{gap}}$ is a 1D artifact (every interior query has a landmark on one side); on general graphs the analogous direction is a tendency rather than an equality, consistent with the classical observation by \citet{goldberg2005computing} that peripheral landmarks produce tight ALT bounds for long-range queries.}

\begin{figure}[t]
\centering
\includegraphics[width=\linewidth]{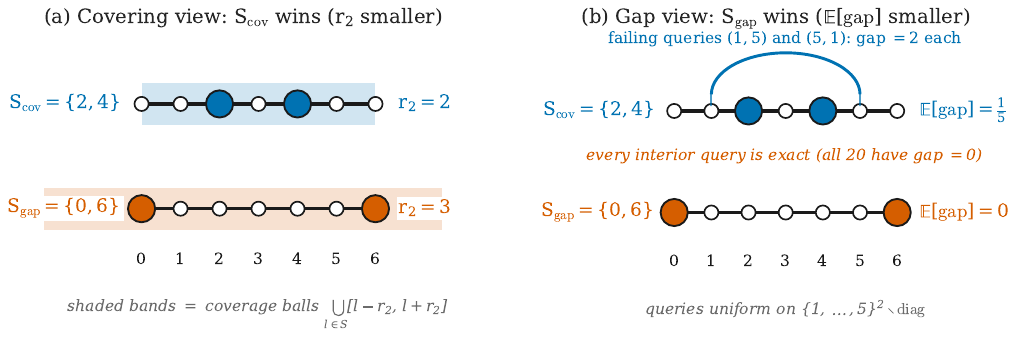}
\caption{\textbf{Gap-to-teacher and covering radius diverge on the toy path $P_7$.} The same two candidate $m{=}2$ landmark selections appear in both panels (top row: $S_{\mathrm{cov}}{=}\{2,4\}$; bottom row: $S_{\mathrm{gap}}{=}\{0,6\}$); queries are drawn uniformly from $\{1,\dots,5\}^2 \setminus \mathrm{diag}$ (20 ordered pairs). \emph{(a)~Covering view.} Shaded bands are coverage balls $\bigcup_{l\in S}[l{-}r_2,\,l{+}r_2]$. The symmetric $k$-center subset $S_{\mathrm{cov}}$ wins on $r_2$ (${=}2$ vs.\ ${=}3$ for $S_{\mathrm{gap}}$). \emph{(b)~Gap view.} The arc above $S_{\mathrm{cov}}$ traces the symmetric pair of queries $(1,5)$ and $(5,1)$ that the heuristic fails on (gap $=2$ each, contributing $\mathbb{E}_Q[\mathrm{gap}] = 4/20 = 1/5$); every other query is exact for both selections. $S_{\mathrm{gap}}$ has no failing queries because on a 1D path any landmark lying strictly outside $[\min(s,t),\max(s,t)]$ gives $|d(s,l)-d(l,t)| = d(s,t)$ exactly, and the peripheral pair $\{0,6\}$ leaves at least one such landmark for every interior query. The two panels are perfectly contrasted: blue (covering-radius-aligned) wins (a); vermillion (gap-to-teacher-aligned) wins (b).}
\label{fig:toy-p7}
\end{figure}

\paragraph{Contextual variant.} \label{sec:contextual} For settings where edge costs depend on context, the compression module integrates into an end-to-end differentiable pipeline; deployment-time admissibility holds w.r.t.\ the predicted-cost graph (Proposition~\ref{thm:admissibility}). The pipeline specification and a Warcraft compatibility check are in Appendix~\ref{app:contextual}; the rest of the paper concerns the static setting.

\section{Related Work}
\label{sec:related}

We organize related work into four groups: exact route-planning engines, learned heuristics without formal guarantees, admissibility-aware learning, and theoretical foundations for feature-based heuristics.

\paragraph{Exact route-planning engines (CH, CRP, CCH, CH-Potentials, EHL*).} Contraction hierarchies (CH)~\citep{geisberger2008contraction}, hub labeling~\citep{cohen2003reachability,abraham2012hierarchical}, CRP~\citep{delling2011crp}, and CCH~\citep{dibbelt2016cch} achieve orders-of-magnitude speedup on static (and dynamic, via metric customization) road networks by exploiting hierarchical structure; the survey of \citet{bast2016route} reports continental-scale queries in milliseconds. CH-Potentials~\citep{strasser2021chpotentials} extracts an admissible A* potential from a contraction hierarchy and is arguably the most direct exact-A*-heuristic comparator on static road networks; we scope our claims outside the static-oracle regime it dominates (Section~\ref{sec:experiments}, scoping paragraph). We position against ALT rather than CH-Potentials because our target regime is learning from demonstrations, instance-dependent edge costs, or streaming and partially observed graphs, where full-graph static preprocessing is unavailable or must be repeated per instance, and where differentiability of the heuristic through to predicted edge costs (Section~\ref{sec:contextual}) is required. On static-oracle benchmarks CH-Potentials remains the appropriate tool; the construction we study here is complementary, not a replacement. EHL*~\citep{du2025ehlstar} is a methodologically analogous \emph{memory-budgeted} hub-labeling design that explicitly trades label memory against query runtime, the closest published precedent to our matched-memory diagnostic posture, although in the distance-oracle rather than ALT-heuristic setting.

\paragraph{2-hop and landmark hub labeling (PLL, LHL).} The closest scalable predecessor to landmark hub labeling is the \emph{pruned landmark labeling} (PLL) of \citet{akiba2013pll}: a per-vertex 2-hop labeling that answers exact distance queries by intersection of source and target label sets, with pruning during BFS to keep label sizes small. \emph{Landmark hub labeling} (LHL)~\citep{storandt2022lhl,storandt2024scalable,cauvi2024landmark} adds landmarks as anchors to the same 2-hop tradition; LHL achieves exact distance queries using $\sim$40--80 labels per vertex on road networks, with heavier combinatorial preprocessing (NP-hard for general graphs~\citep{coste2025complexity}). LHL invests heavier preprocessing for exact distance answers; AAC accepts a bounded heuristic gap (Theorem~\ref{thm:covering-radius}) for lightweight gradient-based training and differentiable-pipeline compatibility. We treat LHL as neighboring route-planning work, not a like-for-like matched-memory baseline, because the query model differs (oracle vs.\ ALT-family heuristic; see Figure~\ref{fig:pareto-schematic} for the schematic placement of AAC relative to admissible A*-heuristic methods on the left panel and distance oracles on the right).

\paragraph{Learned heuristics without exact guarantees (PHIL, Neural A*, iA*, UPath, blackbox differentiation, DataSP).} PHIL~\citep{pandy2022learning} learns heuristics via imitation on expert trajectories; Neural A*~\citep{yonetani2021path} differentiates through A* on grid maps; recent grid/path-planning systems such as iA*~\citep{chen2025ia} and UPath~\citep{ananikian2026upath} improve search efficiency or out-of-distribution generalization. None guarantees admissibility. On the differentiable side, blackbox differentiation~\citep{vlastelica2020differentiation} solves Dijkstra in the forward pass, DataSP~\citep{lahoud2024datasp} provides differentiable Floyd-Warshall at $O(V^3)$, and CombOptNet~\citep{paulus2021comboptnet} offers a generic differentiable integer-programming layer. AAC achieves $O(m)$ heuristic lookup with admissibility guaranteed by architecture rather than by solving the full combinatorial problem.

\paragraph{Differential heuristics, CDH, and FastMap (admissible compression baselines).} Differential heuristics~\citep{sturtevant2009memory} store distance differences from pivot vertices; on undirected graphs the max-of-$|d(a,p){-}d(b,p)|$ formulation coincides with ALT and is admissible by the same triangle-inequality argument. Compressed differential heuristics (CDH)~\citep{goldenberg2011cdh,goldenberg2017cdh} retain, per vertex, only the $r{<}P$ most informative pivot distances plus their indices from a $P$-pivot pool; CDH and AAC target the same object (an admissible landmark-based heuristic under a bytes-per-vertex budget) but differ in the compression operator: CDH selects a hard subset offline, AAC parameterizes it as a row-stochastic Gumbel-softmax layer trained by gradient descent. We do not claim AAC is uniquely admissible in this space (CDH is too) but only that it is the differentiable parameterization; a reference CDH implementation is included in the released codebase (Section~\ref{sec:cdh-headtohead}). FastMap~\citep{faloutsos1995fastmap,cohen2018fastmap} computes Euclidean embeddings via iterative farthest-pair projection; we use L1 distance as the heuristic since the L2 variant is inadmissible~\citep{cohen2018fastmap}. FastMap's admissibility guarantee is undirected; our directed-graph experiments (Section~\ref{sec:fastmap}) are a cautionary out-of-domain example.

\paragraph{Admissibility-aware learning.} \citet{futuhi2026learning} organize prior work on admissibility-aware learning into three lanes: \emph{adapt learning to meet search requirements} (e.g., training-loss penalties or rejection sampling that approximately preserve admissibility), \emph{adapt search to ML heuristics} (e.g., weighted-A* (wWA*) and anytime weighted-A*-style search that is robust to inadmissible $h$), and \emph{abandon admissibility entirely}. We extend their three-way split with a fourth lane that the original taxonomy implicitly encompasses but does not name: \emph{constrain the hypothesis class to be closed under admissibility}, so neither the loss nor the search has to compensate for a violation. In short, lanes 1--3 leave the model class unrestricted and use loss, search, or both to recover from violations; lane 4 picks a class that cannot violate. The four lanes, with AAC sitting in lane 4, are:
\begin{itemize}[leftmargin=1.2em,topsep=2pt,itemsep=1pt]
  \item \emph{Loss-side adaptation (lane 1).} \citet{nunezmolina2024admissible} model the learned heuristic as a truncated Gaussian lower-bounded by an admissible classical heuristic (a guarantee in expectation under the fitted likelihood, not pointwise per query), and \citet{futuhi2026learning} propose a CEA loss that achieves ${\sim}10^{-6}$ violation rates on puzzle domains with PAC generalization bounds for unrestricted hypothesis classes; \citet{hadar2026beyond} extend the line with multi-step Bellman updates whose guarantees remain optimization-dependent.
  \item \emph{Search-side adaptation (lane 2).} Most of the learned-distance-oracle literature falls here implicitly: \citet{ananikian2026upath}'s UPath, \citet{chen2025ia}'s iA*, and TransPath~\citep{kirilenko2023transpath} learn powerful inadmissible heuristics and rely on the search wrapper (often weighted-A* style) to recover near-optimal cost ratios.
  \item \emph{Abandoning admissibility entirely (lane 3).} A complementary line of work treats admissibility as a constraint to be dropped in exchange for raw search efficiency or out-of-distribution generalization, and reports cost-ratio rather than optimality (PHIL~\citep{pandy2022learning}, Neural~A*~\citep{yonetani2021path}, and the broader learned-distance-oracle literature surveyed by \citet{choudhary2026empirical} sit here when they are deployed as direct heuristic surrogates without an admissible safety wrapper).
  \item \emph{Hypothesis-class restriction (lane 4, the lane AAC occupies).} Admissibility-preserving \emph{compression} of lookup-table heuristics has a long classical history that motivates this lane: pattern databases~\citep{culberson1998pdb} abstract away state variables and take a min over abstract distances, the \emph{compressed} pattern databases of \citet{felner2007pdb} additionally compress the lookup table while preserving admissibility, and merge-and-shrink abstractions~\citep{helmert2014mergeandshrink} build factored abstractions whose composition preserves admissibility. AAC is the differentiable analog of this compress-while-preserving-admissibility tradition for the specific ALT triangle-inequality family; the structural guarantee in our paper is the same idea (stay inside a class that is closed under the compression operator) implemented via row-stochastic convex combination rather than variable abstraction or value-table compression. The mechanisms are complementary across lanes: CEA gives loss-enforced approximate admissibility over unrestricted hypothesis classes (${\sim}10^{-6}$ violation rate, PAC bound); AAC gives zero-violation architectural guarantees for a structurally restricted (ALT-family) class.
\end{itemize} The closest prior work on \emph{admissible neural heuristics} is NEAR~\citep{shah2020near}, which learns neural relaxations of partial programs as admissible A* heuristics for differentiable program synthesis. NEAR achieves $\varepsilon$-admissibility via a generalization argument over the learned relaxation, whereas AAC achieves \emph{exact} admissibility by an architectural constraint (the row-stochastic max-of-admissibles construction of Proposition~\ref{thm:admissibility}); the domains also differ (program-architecture graphs vs.\ ALT shortest-path heuristics on graphs). These lines of work address different domains with different hypothesis classes and different guarantee mechanisms. Within the admissible landmark-based space, CDH~\citep{goldenberg2011cdh,goldenberg2017cdh} is architecturally admissible via offline per-vertex pivot-subset storage with bound substitution; AAC adds a \emph{differentiable} parameterization of the same compress-while-preserving-admissibility step, which is what enables end-to-end composition with learned-cost pipelines (Section~\ref{sec:contextual}).

\paragraph{Learned distance-oracle estimators.} A parallel line of work learns \emph{distance estimators} for point-to-point shortest paths on road networks: ndist2vec and vdist2vec-family node-embedding methods, hierarchical-embedding approaches~\citep{choudhary2026empirical}, and the broader ``learned distance oracle'' literature surveyed therein. These methods solve a related but different problem (approximate distance estimation, not admissible heuristic computation) and consequently return suboptimal paths in general, even at high expansion reduction. We focus on admissible heuristics and do not run a direct head-to-head on expansion reduction alone; a comparison is possible but requires the solution-quality caveat of Appendix~\ref{app:fastmap} (FastMap), where we quantify the cost-ratio penalty of giving up admissibility.

\paragraph{Theory of learned heuristics.} \citet{eden2022embeddings} formalize feature-based heuristics for A* search, proving dimension and label-length tradeoffs for embeddings that preserve admissibility. Concretely, their main lower-bound result establishes that for any $\ell$-label admissible scheme on a graph with $n$ vertices the worst-case heuristic gap can be as large as $\Omega(\mathrm{diam}(G)/\ell)$ (informally), and the matching upper bound is a labeling whose per-vertex storage scales with $\ell$. Our Theorem~\ref{thm:covering-radius} is the constructive instantiation of this label-length-vs-gap tradeoff inside the ALT family: choosing $m$ compressed dimensions (an $\ell{=}m$-label scheme that stores landmark-distance differences) yields a heuristic gap bounded by $2 r_m^{\mathrm{sym}}$, where $r_m$ depends on the graph through its $m$-center covering radius. The ALT-family instantiation makes both the upper bound (Gonzalez 2-approximation, Corollary~\ref{cor:fps-covering}) and the lower bound (covering radius cannot drop below the optimal $r_m^*$) explicit and computable per graph; the Eden, Indyk, and Xu framework gives the model-agnostic existence statement that this kind of tradeoff is unavoidable. \citet{sakaue2022sample} derive sample-complexity bounds for learning heuristic functions used by greedy best-first search (GBFS) and A*, establishing when finite data suffices to approximate a good heuristic. \citet{rayner2013subset} formulate \emph{learned heuristic-subset selection} as a submodular-coverage problem and give a greedy classical algorithm with approximation guarantees; AAC can be read as a continuous Gumbel-softmax relaxation of the same combinatorial selection problem, with the relaxation chosen so that gradients pass through to an outer pipeline (Section~\ref{sec:contextual}). The fact that our continuous relaxation matches but does not exceed FPS at matched memory on road networks (where \citet{rayner2013subset}'s greedy classical selector also yields only marginal gains over uniform pivot choice) is consistent with the broader picture that on graphs with near-optimal FPS coverage (Corollary~\ref{cor:fps-covering}), the residual headroom for any selection algorithm, classical or learned, is small. These theoretical and algorithmic results frame our contribution as a constructive instantiation: AAC provides a concrete differentiable architecture within the ``admissible-by-design'' program that all three lines of work motivate.

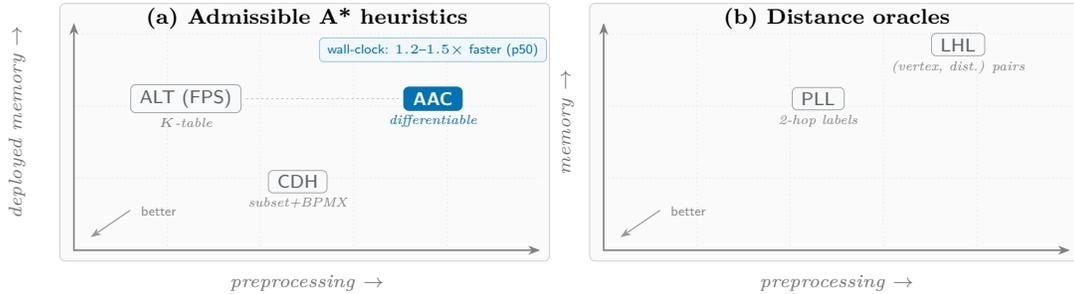
\begin{figure}[t]
\centering
\tikzset{
  ps frame/.style={
    draw=stageGray!30, line width=0.3pt,
    fill=stageGray!3, rounded corners=2pt,
  },
  ps grid/.style={draw=stageGray!12, line width=0.2pt, dash pattern=on 0.6pt off 1.0pt},
  ps axis/.style={
    -{Stealth[length=1.6mm,width=1.2mm]},
    line width=0.6pt, draw=stageGray!80, line cap=round,
  },
  ps axislabel/.style={font=\scriptsize\itshape, text=stageGray!90},
  ps chip/.style={
    rounded corners=1.8pt, draw=stageGray!55, fill=nodeFill,
    line width=0.35pt, inner xsep=3.4pt, inner ysep=1.8pt,
    font=\scriptsize\sffamily, text=stageGray,
  },
  ps chip aac/.style={
    ps chip, draw=aacBlue, fill=aacBlue, text=white,
    line width=0.7pt, font=\scriptsize\bfseries\sffamily,
    drop shadow={shadow xshift=0pt, shadow yshift=-0.4pt,
                 opacity=0.30, fill=aacBlue!50!black,
                 shadow blur radius=0.4pt},
  },
  ps tag/.style={font=\tiny\itshape, text=stageGray!80, inner sep=0pt,
                 anchor=north, yshift=-1.5pt},
  ps tag aac/.style={ps tag, text=aacBlue!90!black},
  ps better/.style={font=\tiny\sffamily, text=stageGray!75},
  ps better arrow/.style={
    -{Stealth[length=1.4mm,width=1mm]}, line width=0.45pt,
    draw=stageGray!65, line cap=round,
  },
  ps win note/.style={
    rounded corners=1.6pt, fill=aacBlue!6, draw=aacBlue!40, line width=0.28pt,
    inner xsep=3pt, inner ysep=2pt, font=\tiny\sffamily, text=aacBlue!95!black,
    align=center,
  },
}
\begin{minipage}{\textwidth}
\centering
\begin{tikzpicture}[x=1.24cm, y=0.96cm]
  \begin{scope}
  \draw[ps frame] (-0.15, -0.15) rectangle (5.10, 3.42);
  \node[font=\footnotesize\bfseries, anchor=north, inner sep=1pt]
    at (2.50, 3.40) {(a) Admissible A* heuristics};
  \foreach \x in {1, 2, 3, 4} \draw[ps grid] (\x, 0) -- (\x, 3.15);
  \foreach \y in {1, 2, 3}    \draw[ps grid] (0, \y) -- (5.00, \y);
  \draw[ps axis] (0, 0) -- (5.00, 0);
  \draw[ps axis] (0, 0) -- (0, 3.15);
  \node[ps axislabel, anchor=north]            at (2.50, -0.22) {preprocessing $\rightarrow$};
  \node[ps axislabel, rotate=90, anchor=south] at (-0.42, 1.78) {deployed memory $\rightarrow$};
  \draw[ps better arrow] (0.60, 0.55) -- (0.18, 0.18);
  \node[ps better, anchor=west] at (0.62, 0.55) {better};
  \node[ps chip]     (altA) at (1.20, 2.10) {ALT (FPS)};
  \node[ps tag]               at (altA.south) {$K$-table};
  \node[ps chip]     (cdhA) at (2.40, 0.95) {CDH};
  \node[ps tag]               at (cdhA.south) {subset+BPMX};
  \node[ps chip aac] (hitA) at (3.85, 2.10) {AAC};
  \node[ps tag aac]           at (hitA.south) {differentiable};
  \draw[stageGray!35, line width=0.32pt, dash pattern=on 0.9pt off 1.0pt]
    ($(altA.east) + (0.05, 0)$) -- ($(hitA.west) + (-0.05, 0)$);
  \node[ps win note, anchor=south] at ($(hitA.north) + (0, 0.34)$)
    {wall-clock: $1.2$--$1.5\times$ faster (p50)};
  \end{scope}
  \begin{scope}[shift={(5.68, 0)}]
  \draw[ps frame] (-0.15, -0.15) rectangle (5.10, 3.42);
  \node[font=\footnotesize\bfseries, anchor=north, inner sep=1pt]
    at (2.50, 3.40) {(b) Distance oracles};
  \foreach \x in {1, 2, 3, 4} \draw[ps grid] (\x, 0) -- (\x, 3.15);
  \foreach \y in {1, 2, 3}    \draw[ps grid] (0, \y) -- (5.00, \y);
  \draw[ps axis] (0, 0) -- (5.00, 0);
  \draw[ps axis] (0, 0) -- (0, 3.15);
  \node[ps axislabel, anchor=north]            at (2.50, -0.22) {preprocessing $\rightarrow$};
  \node[ps axislabel, rotate=90, anchor=south] at (-0.22, 1.78) {memory $\rightarrow$};
  \draw[ps better arrow] (0.60, 0.55) -- (0.18, 0.18);
  \node[ps better, anchor=west] at (0.62, 0.55) {better};
  \node[ps chip] (pllB) at (2.30, 2.10) {PLL};
  \node[ps tag]           at (pllB.south) {2-hop labels};
  \node[ps chip] (lhlB) at (3.80, 2.85) {LHL};
  \node[ps tag]           at (lhlB.south) {(vertex, dist.) pairs};
  \end{scope}
\end{tikzpicture}
\end{minipage}
\caption{Preprocessing vs.\ deployed memory for admissible landmark-based methods; the \emph{better} arrow marks the Pareto-favored corner \emph{within} each panel. \textbf{(a)} A* heuristics: ALT and AAC sit at the same vertical level (matched $B$/v, dashed link); AAC's rightward offset is extra \emph{offline} cost only ($K_0$ SSSPs + training). The boxed callout above AAC is not a plotted axis; it logs the orthogonal p50 wall-clock advantage at that matched configuration ($1.24$--$1.51\times$ on DIMACS, Table~\ref{tab:latency}). \textbf{(b)} Distance oracles answer $d(u,t)$ directly; memory units are not comparable across panels.}
\label{fig:pareto-schematic}
\end{figure}

\section{Experiments}
\label{sec:experiments}

The experiments evaluate AAC against FPS-ALT under matched per-vertex deployed-label memory across four graph classes, addressing three questions: (1)~How close does a differentiable admissible selector come to FPS's covering-radius ceiling on road networks? (2)~Does graph structure (non-metric, community, power-law) open residual headroom? (3)~What is the binding constraint on the gap? The consistent finding is that FPS-ALT is hard to beat at matched memory: the covering-radius bound (Theorem~\ref{thm:covering-radius}) is already tight on the tested graphs, and the residual gap is traced to training-objective drift, not architectural capacity (Section~\ref{sec:training-drift}). A secondary compatibility question (can the compressor be inserted into a differentiable pipeline without surrendering admissibility?) is answered affirmatively in Appendix~\ref{app:contextual}.

\paragraph{Experimental roadmap.} The empirical evaluation is organized as three complementary studies:
\begin{enumerate}
\item \textbf{Static road networks} (Sections~\ref{sec:dimacs}--\ref{sec:osmnx}): matched-memory comparison of AAC vs FPS-ALT on 4 DIMACS and 5 OSMnx graphs (4.6K--4.5M nodes). This is the primary regime where FPS coverage is strong; the headline finding is a small but statistically robust FPS-ALT lead on road networks.
\item \textbf{Non-road graphs} (Section~\ref{sec:synthetic}): synthetic stochastic block model (SBM, community structure) and Barab\'{a}si--Albert (BA, power-law) graphs at 10K nodes plus the real Open Graph Benchmark arXiv (OGB-arXiv) citation network at $\sim$170K nodes. Here the story splits by graph family: ALT still leads on SBM/BA, while the only crossover appears on OGB-arXiv at the larger budgets; this is also where the pre-registered prediction (Appendix~\ref{app:prereg}) was evaluated.
\item \textbf{Contextual differentiable planning} (Section~\ref{sec:contextual-exp}, Appendix~\ref{app:contextual}): compatibility check on Warcraft terrain that demonstrates the compressor can be dropped into a differentiable pipeline without surrendering deployment-time admissibility. This is a compatibility check, not a primary benchmark.
\end{enumerate}
Sections~\ref{sec:hybrid}--\ref{sec:selection-ablation} provide hybrid $\max(h_{\mathrm{AAC}}, h_{\mathrm{ALT}})$ and ablation evidence that cuts across studies 1 and 2. Significance testing uses FDR-corrected paired Wilcoxon signed-rank tests on per-query expansion counts.

\subsection{Methodological Protocol}
\label{sec:methodological-protocol}

\paragraph{Comparator scope.} Our matched-memory diagnostic covers four graph classes (DIMACS and OSMnx road networks; weighted SBM; Barab\'{a}si--Albert with edge costs; OGB-arXiv citation network) and one compatibility-check setting (Warcraft $12{\times}12$ grid-world, Section~\ref{sec:contextual-exp}). Within each class we vary graph size, query distribution, and landmark-pool construction; we do not claim coverage of dynamic or streaming graphs, graphs with negative-weight cycles, or non-metric settings beyond OGB-arXiv. We do not run head-to-head matched-memory comparisons against truncated-Gaussian likelihood models~\citep{nunezmolina2024admissible} or CEA-loss training~\citep{futuhi2026learning}; the architectural-vs.-loss-side admissibility distinction in Section~\ref{sec:related} is a typological argument, not a head-to-head benchmark, and is named as future work in Section~\ref{sec:future}. We do not claim AAC beats state-of-the-art static-routing engines (CH, CH-Potentials, hub labeling); we quantify the ALT-family ceiling at matched memory and identify when a differentiable admissible selector earns its slot.

We pre-registered our primary hypotheses and TOST~\citep{lakens2017tost} equivalence bounds before running the main experiments, following the ML pre-registration precedent of the NeurIPS 2020 Pre-registration Workshop~\citep{neurips2020prereg}; the verbatim prediction and audit trail are in Appendix~\ref{app:prereg}, with the timestamped pre-registration included in the released codebase (Appendix~\ref{app:reproducibility}). Our equivalence margin $\delta{=}1$\,\% is intentionally permissive given typical observed effect sizes ($0.12$--$3.9$\,\% across the matched-memory cells we report); ``TOST accepts equivalence within $\delta$'' therefore states that the gap is below $1$\,\%, not that it is statistically indistinguishable from zero. We also report the FDR-corrected per-cell Wilcoxon $p$-values (Table~\ref{tab:dimacs-wilcoxon-percell}) so that readers can interpret tightness without relying on the equivalence margin alone.

One pre-registered prediction (AAC beats FPS-ALT on OGB-arXiv at matched memory by $+2$--$+6$\,\% at $B{=}32$\,B/v narrowing to $+1$--$+3$\,\% at $B{=}128$\,B/v) was falsified; we report the failure in Section~\ref{sec:prereg-main} and interpret it in light of Theorem~\ref{thm:covering-radius}.

We use a directionality-aware query-accounting convention that reconciles directed and undirected ALT variants under a single cost metric (Section~\ref{sec:setup}, ``Matched deployed label memory''; full statement in Appendix~\ref{app:prereg}, E.2).

We use two visual badges to label tables: \badgeMain for headline tables that source the abstract and Section~\ref{sec:introduction} numbers, and \badgeAbl for detail and supporting tables.

\subsection{Experimental Setup}
\label{sec:setup}

\paragraph{Hardware.} All experiments ran on a single machine: Intel Core Ultra 9 285K CPU, 128\,GB RAM, NVIDIA GeForce RTX 5090 (32\,GB VRAM). Software: Python 3.11, PyTorch 2.11.0+cu130.

\paragraph{Seeds and queries.} All experiments use 5 seeds $\{42, 123, 456, 789, 1024\}$ unless otherwise noted in individual table captions. Error bars report $\pm$ one standard deviation across seeds. Source-target queries are sampled once (seed 42) and shared across all methods within each experiment. Seeds control different sources of randomness per method:
\begin{itemize}
\item \textbf{AAC}: FPS starting vertex for teacher landmark selection, and Gumbel-softmax training initialization;
\item \textbf{ALT}: FPS starting vertex for landmark selection. On all tested directed graphs (DIMACS and OSMnx), ALT shows zero variance across seeds (Tables~\ref{tab:main_results},~\ref{tab:osmnx}), indicating that FPS converges to the same landmark set regardless of starting vertex on these graphs;
\item \textbf{FastMap}: fully deterministic (fixed starting vertex); standard deviations are zero.
\end{itemize}

\paragraph{Datasets.}
\begin{itemize}
\item \textbf{DIMACS} road networks~\citep{demetrescu2009shortest}: NY ($\sim$264K nodes), BAY ($\sim$321K), COL ($\sim$436K), FLA ($\sim$1.07M). Directed, weighted.
\item \textbf{OSMnx} road networks~\citep{boeing2017osmnx}: Modena (30K nodes), Manhattan (4.6K), Berlin (28K), Los Angeles (50K), Netherlands (4.5M). Directed.
\item \textbf{Warcraft} 12$\times$12 grid maps~\citep{vlastelica2020differentiation}: 10,000 train / 1,000 test. 144 nodes, 8-connected. RGB terrain images (96$\times$96).
\end{itemize}

\paragraph{AAC training.} Static compression: $K_0 \in \{32, 64, 128\}$, $m \in \{8, 16, 32, 64\}$. Adam optimizer, lr$=$1e-3, 200 epochs, batch size 256, $\lambda_{\mathrm{cond}}=0.01$, $\lambda_{\mathrm{uniq}}=0$ (held out per Section~\ref{sec:training}; the $\lambda_{\mathrm{uniq}}\in\{0,0.01,0.1\}$ ablation in Appendix~\ref{app:ablation-studies} confirms the three values yield identical (to two decimals) unique-ratio $1.0$ at every cell), Gumbel-softmax temperature $\tau$ annealed exponentially from $1.0$ to $0.1$ (distinct from the contextual smooth-min $\beta$ schedule of Appendix~\ref{app:contextual-spec}). All distance computations in float64.

\paragraph{Baselines.}
\begin{itemize}
\item \textbf{ALT}: $K \in \{4, 8, 16, 32\}$ landmarks, farthest-point sampling. Memory: $2K \times 4$ bytes/vertex.
\item \textbf{FastMap}~\citep{cohen2018fastmap}: $d \in \{8, 16, 32, 64\}$ dimensions. L1 (Manhattan) heuristic in Euclidean embedding. Memory: $d \times 4$ bytes/vertex.
\item \textbf{Dijkstra}: no heuristic (expansion baseline).
\end{itemize}

\paragraph{Matched deployed label memory.} Throughout the paper, ``matched memory'' denotes the per-vertex bytes consumed by deployed landmark labels at query time. AAC stores $m$ floats per vertex ($4m$ bytes in float32); for directed graphs the budget is split $\lfloor m/2 \rfloor$ forward and $\lceil m/2 \rceil$ backward. ALT stores $2K$ floats per vertex on directed graphs ($8K$ bytes; both forward and backward distances) and $K$ floats on undirected graphs ($4K$ bytes). The matched-memory rule is therefore graph-dependent: at budget $B$ bytes/vertex, \textbf{directed} graphs match AAC $m{=}B/4$ against ALT $K{=}B/8$, while \textbf{undirected} graphs match AAC $m{=}B/4$ against ALT $K{=}B/4$. Offline preprocessing time, training cost, and query latency are \emph{not} matched; Table~\ref{tab:multi-axis-cost} reports those axes separately. All distance computations use float64; deployment labels are stored in float32 for the byte-per-vertex accounting. We observed no admissibility violations in any experiment.

\paragraph{Cross-table protocol note.} The \emph{val-split} convention uses 200 queries per seed, reserving the first 100 for choosing AAC's $K_0$ before reporting on the disjoint 100-query test split (Table~\ref{tab:valsplit}). The \emph{retrospective best-$K_0$} variant (Tables~\ref{tab:main_results},~\ref{tab:osmnx}) picks the best $K_0$ per cell after seeing all test queries; the ALT--AAC ordering is preserved under both conventions. Ablation tables retrain the compressor independently, so values for the same $(K_0, m)$ can differ across tables by up to ${\sim}3$\,\% due to Gumbel-softmax stochasticity.

\paragraph{Seeds and query counts.} Unless stated otherwise, every reported cell aggregates 5 seeds $\times$ 100 queries per seed (500 queries total per cell). Two graphs are reported with reduced replication where preprocessing cost makes 5 seeds prohibitive: \textbf{Netherlands} (OSMnx, ${\sim}2.4$M nodes, Table~\ref{tab:osmnx}) uses 3 seeds $\times$ 100 queries, and the \textbf{Warcraft} compatibility study (Appendix~\ref{app:contextual}, $12{\times}12$ grids) uses 3 seeds with the standard differentiable-planning evaluation protocol of \citet{vlastelica2020differentiation}. All other tables in the paper use the 5-seed default.

\begin{figure}[t]
\centering
\includegraphics[width=\textwidth]{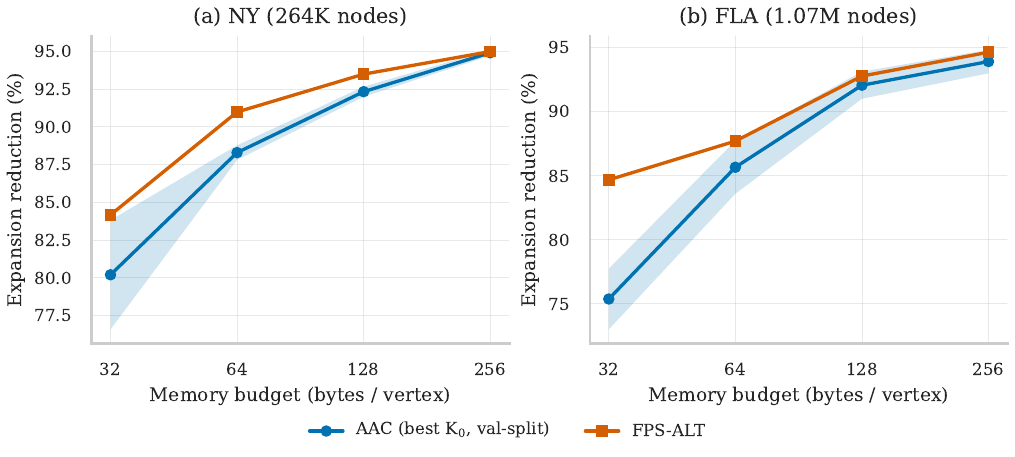}
\caption{Pareto frontier of memory budget vs.\ expansion reduction on DIMACS road networks (admissible methods only). \textbf{(a)}~NY (264K nodes); \textbf{(b)}~FLA (1.07M). AAC tracks FPS-ALT within a few percentage points at matched memory on these highway-dominated graphs while remaining admissible by construction (Table~\ref{tab:main-matched-memory}, Proposition~\ref{thm:admissibility}). Val-split means over 5 seeds with $\pm 1$ s.d.\ bands (\badgeMain protocol; canonical numbers in Table~\ref{tab:valsplit}, retrospective upper envelope in Table~\ref{tab:main_results}). FastMap excluded as inadmissible (Section~\ref{sec:fastmap}).}
\label{fig:pareto}
\end{figure}

\providecommand{\tostpass}{\ensuremath{\checkmark}}
\providecommand{\tostfail}{\ensuremath{\times}}
\providecommand{\tostna}{\textendash}
\begin{table}[t]
\centering
\caption{\badgeMain Headline matched-memory comparison on \emph{expansion count} at $B{=}64$\,B/v across all four graph classes: 9 road networks (4 DIMACS, 5 OSMnx), 2 synthetic non-road graphs (SBM, BA), and 1 real citation network (OGB-arXiv). Cells are mean expansion-reduction \% over Dijkstra ($\pm$\,std across 5 seeds; 3 seeds for Netherlands). FPS-ALT and AAC at matched landmark memory under the corrected per-graph-type accounting (Section~\ref{sec:setup}); CDH cell is the best of \texttt{CDH+sub} and \texttt{CDH+sub+BPMX} where the reference benchmark exists (Table~\ref{tab:cdh-reference}). TOST equivalence at $\delta{=}1$\,\%: \tostpass{} = accepted within $\delta$, \tostfail{} = rejected (boundary), \tostna{} = not applicable (DIMACS / OSMnx road graphs use the FDR-corrected per-cell Wilcoxon test of Table~\ref{tab:dimacs-wilcoxon-percell}, Section~\ref{sec:dimacs}, instead of TOST); see footnotes for details on the rejected SBM and OGB-arXiv cells. \emph{Wall-clock query latency at the same matched-memory configuration goes the other way:} AAC is $1.24$--$1.51\times$ faster than FPS-ALT at p50 on every DIMACS graph and at p95 on three of four (Table~\ref{tab:latency}); see Section~\ref{sec:dimacs} for the cache-locality mechanism. Per-budget detail: roads in Appendix Tables~\ref{tab:valsplit},~\ref{tab:osmnx},~\ref{tab:main_results}; non-road in Table~\ref{tab:matched-hybrid}; full CDH benchmark in Table~\ref{tab:cdh-reference}. ``ALT std $0.0$'' reflects FPS determinism given a fixed LCC seed vertex.}
\label{tab:main-matched-memory}
\small
\begin{tabularx}{\textwidth}{@{}l X
    S[table-format=2.2(2)]
    c
    c c@{}}
\toprule
\textbf{Graph class} & \textbf{Graph (size)}
  & {\textbf{AAC}} & {\textbf{FPS-ALT}} & {\textbf{CDH best}}
  & {\textbf{TOST $\delta{=}1$\,\%}} \\
\midrule
\multirow{4}{*}{DIMACS road}
  & NY (264K)          & \cellcolor{aaccolor} 88.9 \pm 1.2 & $\mathbf{91.1}$ & 47.1\,\%  & \tostna \\
  & BAY (321K)         & \cellcolor{aaccolor} 85.3 \pm 4.3 & $\mathbf{89.2}$ & {---}     & \tostna \\
  & COL (436K)         & \cellcolor{aaccolor} 85.6 \pm 2.2 & $\mathbf{86.5}$ & {---}     & \tostna \\
  & FLA (1.07M)        & \cellcolor{aaccolor} 86.2 \pm 1.3 & $\mathbf{87.6}$ & {---}     & \tostna \\
\midrule
\multirow{5}{*}{OSMnx road}
  & Modena (30K)       & \cellcolor{aaccolor} 87.7 \pm 1.2 & $\mathbf{91.2}$ & 52.8\,\%  & \tostna \\
  & Manhattan (4.6K)   & \cellcolor{aaccolor} 88.1 \pm 0.6 & $\mathbf{90.4}$ & {---}     & \tostna \\
  & Berlin (28K)       & \cellcolor{aaccolor} 91.3 \pm 0.7 & $\mathbf{92.8}$ & {---}     & \tostna \\
  & Los Angeles (50K)  & \cellcolor{aaccolor} 87.5 \pm 2.1 & $\mathbf{91.1}$ & {---}     & \tostna \\
  & Netherlands (4.5M) & \cellcolor{aaccolor} 87.3 \pm 0.4 & $\mathbf{91.2}$ & {---}     & \tostna \\
\midrule
\multirow{2}{*}{Synthetic}
  & SBM 10K            & \cellcolor{aaccolor} 94.02 \pm 0.48 & $\mathbf{94.52}$ & 30.0\,\%  & \tostfail{}\,\textsuperscript{a} \\
  & BA 10K             & \cellcolor{aaccolor} 93.54 \pm 0.19 & $\mathbf{94.25}$ & {---}     & \tostpass    \\
\midrule
Citation
  & OGB-arXiv (169K)   & \cellcolor{aaccolor} \bfseries 92.54 \pm 0.54 & 91.33 & {---} & \tostfail{}\,\textsuperscript{b,\,\dag} \\
\bottomrule
\end{tabularx}
\par\vspace{2pt}
{\footnotesize
$^{a}$\,SBM TOST is rejected at the boundary ($p_{\mathrm{lower}}{=}0.052$).
$^{b}$\,OGB-arXiv TOST is rejected; the pre-registered $+2$ to $+6$\,\% magnitude is falsified.
$^{\dagger}$\,AAC cell is bolded only because AAC is numerically higher; it does \emph{not} pass TOST equivalence at $\delta{=}1$\,\% (Section~\ref{sec:prereg-main}).}
\end{table}

\subsection{DIMACS Road Networks: AAC vs.\ ALT vs.\ FastMap}
\label{sec:dimacs}

Table~\ref{tab:valsplit} (Appendix~\ref{app:extended-experiments}) is the per-cell DIMACS detail behind Table~\ref{tab:main-matched-memory} under the \badgeMain val-split protocol. For each seed, 200 queries are split 100/100 for validation (AAC $K_0$ selection) and test reporting. At every matched budget, val-selected AAC generalizes with a mean val--test gap of $+0.3$ to $+2.5$\,\% and still underperforms ALT on test. The retrospective best-$K_0$ variant (Table~\ref{tab:main_results}) preserves the ordering. AAC incurs higher offline preprocessing ($14$--$52$\,s vs.\ ALT $1.4$--$5.0$\,s); see Table~\ref{tab:multi-axis-cost} for the full cost breakdown.

\paragraph{Wilcoxon analysis.} Per-cell paired Wilcoxon signed-rank tests (two-sided, 100 queries $\times$ 5 seeds per cell; Fisher combination across seeds, Benjamini--Hochberg FDR correction at $q{=}0.05$) show ALT achieves significantly fewer expansions than AAC at every one of 12 (graph, budget) cells, with 10/12 at combined $p<10^{-3}$ and the two exceptions (COL-128, FLA-64) still significant after FDR. Full per-cell p-values and the storage-asymmetric companion (AAC at 64 vs.\ ALT at 128 B/v) are in Appendix Table~\ref{tab:dimacs-wilcoxon-percell}. The mechanism (FPS captures highway-junction structure near-optimally, leaving little headroom for any selector) is explained by the covering-radius analysis (Theorem~\ref{thm:covering-radius}, Section~\ref{sec:discussion}).

\paragraph{Query latency.} Table~\ref{tab:latency} reports p50/p95 query latency at matched deployed label memory ($B{=}64$\,B/v: AAC $K_0{=}64,m{=}16$; ALT $K{=}8$, both giving the same $O(16)$-per-lookup heuristic-evaluation budget). At matched memory AAC is faster than ALT at both percentiles on every graph except BAY-p95, where ALT is $1.16\times$ faster; the p50 advantage is $1.24$--$1.51\times$ in AAC's favor and the p95 advantage is up to $1.75\times$ on NY. Despite ALT's lower expansion count under matched memory (Table~\ref{tab:dimacs-wilcoxon-percell}), AAC's compressed-label heuristic enjoys better cache locality (16 contiguous floats per direction vs.\ ALT's 8 floats from each of two directional tables), narrowing or reversing the per-query latency picture. Numbers are seed-42, $100$ queries each, generated by our timing harness (described in Appendix~\ref{app:reproducibility}).

\begin{table}[t]
\centering
\caption{\badgeMain Median query latency (p50/p95 in ms) on DIMACS road networks at \emph{matched deployed label memory} ($B{=}64$\,B/v: AAC $K_0{=}64,m{=}16$; ALT $K{=}8$). Bold marks the per-(graph,percentile) winner. At matched memory AAC is faster than ALT on every cell except BAY-p95. Both use the same A* implementation.}
\label{tab:latency}
\small
\begin{tabular}{l l S[table-format=3.1] S[table-format=4.1]}
\toprule
\textbf{Graph} & \textbf{Method}
  & {\textbf{p50 (ms)}} & {\textbf{p95 (ms)}} \\
\midrule
\multirow{2}{*}{NY}  & \cellcolor{aaccolor} AAC & \cellcolor{aaccolor} \bfseries  67.0 & \cellcolor{aaccolor} \bfseries  166.5 \\
                     & ALT                      &           88.1                       &           290.9                        \\
\multirow{2}{*}{BAY} & \cellcolor{aaccolor} AAC & \cellcolor{aaccolor} \bfseries 105.6 & \cellcolor{aaccolor}           571.4  \\
                     & ALT                      &          136.0                       & \bfseries  492.0                       \\
\multirow{2}{*}{COL} & \cellcolor{aaccolor} AAC & \cellcolor{aaccolor} \bfseries 132.4 & \cellcolor{aaccolor} \bfseries  549.9 \\
                     & ALT                      &          200.0                       &           814.8                        \\
\multirow{2}{*}{FLA} & \cellcolor{aaccolor} AAC & \cellcolor{aaccolor} \bfseries 377.7 & \cellcolor{aaccolor} \bfseries 1210.5 \\
                     & ALT                      &          468.4                       &          1362.1                        \\
\bottomrule
\end{tabular}
\end{table}

\paragraph{Full Pareto sweep.} Table~\ref{tab:pareto_detail} in Appendix~\ref{app:extended-experiments} shows all $K_0 \times m$ configurations. The best AAC configurations approach but do not surpass ALT at matched budgets on these graphs.

\subsubsection{FastMap on Directed Graphs: A Methodological Note}
\label{sec:fastmap}

FastMap~\citep{cohen2018fastmap} achieves 93--97\% expansion reduction on directed DIMACS graphs, but since its admissibility guarantee applies only to undirected graphs, \textbf{100\% of FastMap-guided A* paths are suboptimal} on these benchmarks (mean cost ratio 1.15--1.22, max 1.95). This illustrates a methodological point: \emph{expansion reduction alone is misleading when comparing across admissibility classes}. Full details in Appendix~\ref{app:fastmap}.

\subsection{OSMnx Directed Road Networks}
\label{sec:osmnx}

On directed city-scale networks (Modena, Manhattan, Berlin, Los Angeles, Netherlands), ALT outperforms or matches AAC at nearly all tested memory budgets. Table~\ref{tab:osmnx} (Appendix~\ref{app:extended-experiments}) summarizes the results across three budgets (32, 64, 128~B/v) and five graphs; the per-configuration sweep is in Table~\ref{tab:osmnx-full}. At 64~B/v, ALT leads by 1.5--3.9 percentage points. The gap is wider at 32~B/v (2.7--7.0\%) and narrows at 128~B/v (0.9--2.3\%), with Manhattan at 128~B/v being the sole crossover ($92.2\%$ vs.\ $92.1\%$; a pool-size effect at $m{\approx}K_0/2$, see Section~\ref{sec:theory-coverage}). The NY-DIMACS forced-first-$m$ extension (Section~\ref{sec:training-drift}, Appendix Table~\ref{tab:training-drift-road}) confirms the training-drift mechanism on a directed road graph.

\subsection{Hybrid $\max(h_{\text{AAC}}, h_{\text{ALT}})$ Evaluation}
\label{sec:hybrid}

\paragraph{Hybrid terminology.} We use two consistent labels throughout the paper, defined once here. \textbf{Matched-memory (primary):} a hybrid arm that consumes the same $B$ bytes/vertex as either pure arm; for the half/half hybrid this means AAC at $B/2$ B/v and ALT at $B/2$ B/v combined by pointwise max. \textbf{Doubled-memory ($2B$, robustness check):} a hybrid arm that runs AAC at $B$ B/v \emph{and} ALT at $B$ B/v in parallel and takes the pointwise max, consuming $2B$ bytes/vertex total. The matched-memory comparison is the scientifically primary question; the doubled-memory comparison speaks only to complementarity under a relaxed budget and lives in robustness tables.

Since both AAC and ALT produce admissible heuristics, their pointwise maximum $\max(h_{\text{AAC}}, h_{\text{ALT}})$ is also admissible and at least as informative as either component alone. Table~\ref{tab:hybrid} evaluates this hybrid on Modena, Manhattan, and NY at four budget tiers (32, 64, 96, 128~B/v), with a pure-ALT row at every tier to expose matched-budget comparisons.

\emph{ALT wins at matched total budget.} At every matched budget level on all three graphs, pure ALT outperforms the AAC+ALT hybrid. For example, at 96~B/v: Modena ALT $92.5$\% vs.\ hybrid $89.0$\% ($-3.5$\,\%); Manhattan ALT $91.6$\% vs.\ hybrid $90.7$\% ($-0.9$\,\%); NY ALT $93.1$\% vs.\ hybrid $91.3$\% ($-1.8$\,\%). Adding AAC to ALT at matched budget does \emph{not} beat spending the same bytes on more ALT landmarks. This is consistent with the covering-radius analysis (Theorem~\ref{thm:covering-radius}): on road networks FPS already captures most of the available landmark value, so the marginal bytes added as AAC yield less improvement than the same bytes added as additional FPS landmarks.

\emph{Complementarity is genuine at doubled memory.} On non-road graphs (Section~\ref{sec:synthetic}, Appendix~\ref{app:prereg}) the doubled-memory ($2B$) pointwise-max ensemble is the top performer, recovering $+1$ to $+4$ percentage points over pure ALT (Section~\ref{sec:matched-hybrid}). The matched-memory half/half hybrid is, however, dominated by both pure arms on every non-road cell, so the complementarity does not translate into a free win under a strict memory constraint. The full per-budget road-graph hybrid sweep is in Table~\ref{tab:hybrid} (Appendix~\ref{app:extended-experiments}).

\subsection{Admissibility Verification Under Early Stopping}
\label{sec:admissibility-verification}

The core claim of AAC is that admissibility holds architecturally, given exact teacher labels, regardless of training quality. Table~\ref{tab:admissibility-early-stopping} tests this directly: AAC is trained to epoch checkpoints $\{1, 5, 10, 50, 200\}$ on Modena ($K_0{=}64$, $m{=}16$) and evaluated against Dijkstra-computed shortest distances. Zero admissibility violations are observed at \emph{every} checkpoint across all 5 seeds, including the 1-epoch model, which already achieves $87.2\%$ expansion reduction while remaining provably safe. Performance is stable from epoch~1 onward, confirming that the row-stochastic architecture provides immediate utility without extensive training.

\paragraph{Path-cost optimality.} The admissibility check above bounds the pointwise heuristic $h(u,t) \leq d(u,t)$. On benchmarks whose per-run records log an ``all-optimal'' flag (the selection-strategy ablation, admissibility robustness sweep, coverage-aware regularization, and query-distribution experiments, totaling 213 runs across multiple graphs and seeds), a path-optimality audit confirms that A*-without-reopenings returned the Dijkstra-optimal path cost on every query of every run (zero path suboptimalities). In addition, the same audit checks 400 per-query path-cost records retained on all four DIMACS graphs (NY, FLA, BAY, COL; 100 queries each), and finds zero suboptimal AAC paths across all 400~queries. This addresses the directed-graph consistency caveat of Section~\ref{sec:graphs-astar}: the sentinel-masked landmark set never excluded reachable terms in a way that compromised optimality. For the remaining DIMACS/OSMnx main tables, multi-seed Wilcoxon, and hybrid runs, per-run path costs were checked at run time but not persisted alongside the expansion numbers; we log zero admissibility violations on these and rely on Proposition~\ref{thm:admissibility}'s architectural chain rather than a post-hoc audit of persisted records.

\paragraph{Sensitivity to query distribution and amortized cost.} The matched-memory ordering on \emph{expansion count} is robust to query distribution (uniform / hotspot / powerlaw on NY and Manhattan; Table~\ref{tab:query-distribution}). On wall-clock cost the picture is more nuanced: AAC pays $7$--$31{\times}$ in offline preprocessing (Table~\ref{tab:multi-axis-cost}), but at matched memory its per-query latency is \emph{lower} than ALT's at p50 on every DIMACS graph (Table~\ref{tab:latency}), so the offline cost amortizes at modest workloads ($N_{\mathrm{break}} \in \{170, 522, 1{,}276, 1{,}924\}$ queries on COL/FLA/BAY/NY; Figure~\ref{fig:amortized-cost}). Above these breakevens AAC is the cheaper total-wall-clock choice; below them FPS-ALT is. The full distribution sweep (Table~\ref{tab:query-distribution}), the multi-axis cost breakdown (Table~\ref{tab:multi-axis-cost}), and the amortized-cost curve (Figure~\ref{fig:amortized-cost}) live in Appendix~\ref{app:query-distribution-cost}.

\subsection{Selection Strategy Ablation}
\label{sec:selection-ablation}

To understand \emph{why} ALT leads, we compare five selection strategies at matched memory (64~B/v, $m{=}16$) on Modena and Manhattan: AAC (default), Random-Subset, FPS-Subset (canonical ALT), FPS-RR$_{10}$ (random-restart FPS, validation-best of 10 seed vertices), and Greedy-Max (query-adaptive oracle on the $K_0$ pool); full numbers in Appendix Table~\ref{tab:selection-ablation}. The ordering is Greedy-Max~$\geq$~FPS-Subset~$\gg$~AAC (default)~$>$~Random-Subset on both graphs; random-restart FPS scores $1$--$2$\,\% \emph{below} canonical FPS, ruling out a ``FPS is just lucky'' explanation. Even Greedy-Max, a query-adaptive oracle with knowledge of the evaluation queries, beats FPS by only $0.3$--$1.1$\,\%. The room for any selection algorithm to improve on FPS landmark quality is small. Duplicate collapse is not the bottleneck either: every AAC (default) model achieves effective unique-ratio $1.0$ across all tested configurations, including tight budgets at $m{=}4$ from the compression-curve sweep (Table~\ref{tab:compression-curve}).

\paragraph{Covering radius interpretation.} The FPS covering radii on road networks (symmetrized $r_m^{\mathrm{sym}}$ as defined in Appendix~\ref{app:directed}, reported separately for forward and backward when they differ) are: Modena ($K{=}16$) fwd $26{,}386$\,m, bwd $27{,}524$\,m; NY ($K{=}16$) fwd $352{,}682$\,m. To contextualize these values: random query pairs on Modena have a mean shortest-path distance of approximately $160{,}000$\,m, so $r_m^{\mathrm{sym}}$ at $K{=}16$ is ${\sim}16\%$ of a typical query distance, already tight. On NY, $r_m^{\mathrm{sym}}$ at $K{=}16$ is ${\sim}5\%$ of a typical cross-city distance (${\sim}7{,}000{,}000$\,m). FPS coverage at $K{=}64$ is tighter still: $13{,}113$\,m on Modena (fwd), $164{,}385$\,m on NY. These values are within the Gonzalez $2$-approximation guarantee (Corollary~\ref{cor:fps-covering}), confirming that FPS provides near-optimal spatial coverage on road networks. Full per-strategy numbers are in Appendix Table~\ref{tab:selection-ablation}.

\subsection{Non-Road Graphs: Matched-Memory Parity}
\label{sec:synthetic}

Under the per-graph-type accounting of Section~\ref{sec:setup} (undirected graphs match $m$ AAC floats against $K{=}m$ ALT floats, since ALT stores only one direction; directed graphs match $m$ vs.\ $K{=}m/2$), pure FPS-ALT shows a small but consistent lead of $0.12$--$1.34$\,\% in expansion reduction on weighted SBM and Barab\'{a}si--Albert, and the two arms split on the pre-registered OGB-arXiv benchmark at magnitudes ${\sim}5\times$ smaller than the pre-registered bands. The four non-road subsections that follow cover one graph each (\S\ref{sec:synthetic-sbm-ba}~SBM, \S\ref{sec:synthetic-ba}~BA, \S\ref{sec:prereg-main}~OGB-arXiv) plus the reference CDH baseline (\S\ref{sec:cdh-headtohead}); Section~\ref{sec:training-drift} adds a controlled training-objective drift diagnostic that explains \emph{why} the learned compressor fails to improve on FPS at matched memory.

\subsubsection{Stochastic Block Model (SBM, Weighted)}
\label{sec:synthetic-sbm-ba}

\textbf{Setup.} 10{,}000 nodes, 5 communities of 2{,}000 nodes each, $p_{\mathrm{in}}{=}0.05$, $p_{\mathrm{out}}{=}0.001$, edge weights uniform in $[1, 10]$ (undirected). Community structure creates clusters where FPS may waste landmarks on boundary vertices rather than distributing coverage within clusters. Under the matched-memory rule of Section~\ref{sec:setup}, we evaluate at three budget levels (32, 64, and 128~B/v) with matched configurations $K_0{=}32, m{=}8$ vs.\ ALT $K{=}8$; $K_0{=}64, m{=}16$ vs.\ ALT $K{=}16$; $K_0{=}128, m{=}32$ vs.\ ALT $K{=}32$ (5 seeds, 100 queries per seed).

\textbf{Result.} Per-seed paired differences (AAC~$-$~ALT in expansion-reduction \%) yield mean $-1.26$\,\% at $B{=}32$, $-0.50$\,\% at $B{=}64$, $-0.12$\,\% at $B{=}128$ (5 seeds). The paired TOST at $\delta{=}1$\,\%, $\alpha{=}0.05$ accepts equivalence within $\delta$ at $B{=}128$ only; at $B{=}32$ and $B{=}64$ the ALT lead exceeds the margin and the test rejects equivalence. The half/half hybrid arm is dominated at every budget; the doubled-memory pointwise-max ensemble $\max(h_{\mathrm{AAC}}, h_{\mathrm{ALT}})$ beats pure ALT by ${\sim}0.95$\,\% on SBM at $B{=}64$\,B/v (and by ${\sim}1.4$\,\% over pure AAC) at $2{\times}$ memory cost (Table~\ref{tab:matched-hybrid}). A query-adaptive Greedy-Max covering oracle (greedy selection from the $K_0{=}4m$ FPS pool that maximizes the average ALT heuristic over the evaluation queries) beats pure ALT by only $+0.85$\,\% at $B{=}32$, $+0.52$\,\% at $B{=}64$, $+0.41$\,\% at $B{=}128$: the headroom above FPS that any selection-based strategy could exploit is small.

\subsubsection{Barab\'{a}si--Albert (BA, with Edge Costs)}
\label{sec:synthetic-ba}

\textbf{Setup.} 10{,}000 nodes, preferential attachment with $m_{\mathrm{attach}}{=}5$, edge weights uniform in $[1, 10]$ (undirected). Skewed degree distribution creates high-degree hubs where FPS gravitates toward the periphery while optimal selection should cover the hub-dominated shortest paths. Same matched-memory configurations as SBM.

\textbf{Result.} Per-seed mean differences $-1.34$\,\% at $B{=}32$, $-0.71$\,\% at $B{=}64$, $-0.37$\,\% at $B{=}128$. TOST accepts equivalence within $\delta{=}1$\,\% at $B{=}64$ and $B{=}128$ and rejects at $B{=}32$. The pattern mirrors SBM: ALT marginally ahead at the tightest budget, equivalent within $1$\,\% at the larger budgets, with no cell where AAC leads. The half/half hybrid is dominated at every budget; the doubled-memory pointwise-max ensemble beats pure ALT by $+0.6$\,\% at $B{=}64$ at $2{\times}$ memory cost. Greedy-Max sits \emph{below} pure ALT at every budget on BA ($-0.43$, $-0.04$, $-0.08$\,\%); even a query-adaptive oracle from a $4{\times}$-larger FPS pool cannot beat the matched-memory FPS landmarks directly.

\subsubsection{OGB-arXiv (Citation Network)}
\label{sec:prereg-main}

We pre-registered (before any OGB-arXiv evaluation; the verbatim prediction is reproduced in Appendix~\ref{app:prereg}): AAC beats FPS-ALT on the symmetrized 169{,}343-node OGB-arXiv citation network~\citep{hu2020ogb} at matched memory, with bands $+2$ to $+6$\,\% at $B{=}32$\,B/v narrowing to $+1$ to $+3$\,\% at $B{=}128$\,B/v.
The observed matched-memory differences (5 seeds, 100 queries/seed) are $-1.13$\,\% at $B{=}32$ (direction inverted), $+1.21\pm0.54$\,\% at $B{=}64$, and $+0.87\pm0.20$\,\% at $B{=}128$: magnitudes ${\sim}5{\times}$ below the pre-registered bands at every budget, no cell achieving TOST equivalence within $\delta{=}1$\,\% at $\alpha{=}0.05$; the pre-registration is falsified. The AAC lead at $B{\geq}64$ is descriptive consistency-of-sign across seeds rather than a positive equivalence claim; effect magnitudes are within an order of magnitude of the seed-to-seed standard deviation, and we do not interpret OGB-arXiv at $B{\geq}64$ as a graph where AAC is demonstrably superior to FPS-ALT.
This is consistent with Theorem~\ref{thm:covering-radius}: OGB-arXiv's non-metric citation structure does place it in the regime where FPS's covering-radius bound is loose and a learned selector might gain ground, but the headroom turns out to be at most ${\sim}1$\,\% at the tested budgets, contradicting our initial expectation that the loose-bound regime would translate into a several-percentage-point matched-memory gap; the convention-independent zero-admissibility-violation prediction holds in 15/15 cells. On OGB-arXiv specifically the half/half hybrid is again dominated by both pure arms at matched memory, but Greedy-Max retains a modest $+1.5$ to $+2.5$\,\% advantage over pure ALT (Table~\ref{tab:matched-hybrid}), making OGB-arXiv the only non-road cell where pool-plus-query-adaptive selection demonstrably helps. The full audit trail is in Appendix~\ref{app:prereg}.

\paragraph{Cross-graph matched-memory hybrid summary.} \label{sec:matched-hybrid}The matched-memory hybrid pointwise-max comparison (Table~\ref{tab:matched-hybrid}) compares pure AAC, pure ALT, half/half hybrid, doubled-memory $\max(h_{\mathrm{AAC}}, h_{\mathrm{ALT}})$, and Greedy-Max oracle across all three non-road graphs at three budget tiers. The half/half hybrid is dominated on \emph{every} non-road cell: splitting the budget across two under-provisioned arms does not recover either arm's full-budget performance. The doubled-memory pointwise-max oracle exceeds each pure arm by $1.5$--$2.9$\,\% on the cells where either arm alone is weakest, but requires running both methods at deployment so it is not a matched-memory arm.

\subsubsection{Reference CDH Baseline at Matched Memory}
\label{sec:cdh-headtohead}

The closest classical admissible neighbor of AAC in the landmark-compression family is the Compressed Differential Heuristic (CDH) of \citet{goldenberg2011cdh,goldenberg2017cdh}: a per-vertex \emph{subset-storage} scheme on a $P$-pivot table that retains, per vertex, only the $r{<}P$ most informative entries plus the indices naming them. CDH and AAC are therefore the two natural ``compress-while-staying-admissible-via-triangle-inequality'' arms of the design space; we release a reference CDH implementation (top-$r$-farthest selection rule, $P{=}64$ pool) and benchmark it under exactly the matched-memory protocol of Sections~\ref{sec:dimacs}--\ref{sec:synthetic} (5 seeds $\times$ 100 queries; matched bytes per vertex; same FPS pool as ALT). Three CDH arms are reported, in increasing tightness: (a)~\textbf{CDH}, the strict intersection variant; (b)~\textbf{CDH+sub}, the Goldenberg upper/lower bound-substitution mode (Section~\ref{sec:related}; uses a $P\times P$ pivot--pivot side-table that costs a fixed $P^2$ floats off-heap and is not charged to the per-vertex budget); (c)~\textbf{CDH+sub+BPMX}, additionally enabling Felner-style one-step Bidirectional Pathmax during A* expansion (sound under closed-set A* without reopenings; this matches the original CDH evaluation protocol of \citet{goldenberg2017cdh}). Admissibility for all three arms on directed and undirected fixtures is covered by the released unit tests.

\paragraph{Results.} Table~\ref{tab:cdh-reference} reports the CDH benchmark at matched memory across SBM (undirected), OSMnx Modena (directed), and DIMACS NY (directed); per-seed numbers are deterministic given the LCC seed vertex. Across all three graphs the entire CDH family is dominated at every matched-memory cell tested by both direct FPS-ALT and by AAC: the gap is $44$--$77$\,\% in favor of FPS-ALT on SBM, $30$--$38$\,\% on Modena, and $31$--$56$\,\% on NY. On Modena, bound substitution does not move CDH because the per-vertex stored subsets at $r{\in}\{3,7,14\}$ overlap too sparsely for the side-table to fire often; BPMX (applied on top of bound substitution) adds a small ${\sim}0.5$--$2$\,\% expansion-count overhead at this graph scale while preserving admissibility (verified by the released unit tests). This is the matched-memory diagnostic's third complementary signal: the parity story extends \emph{outside} the AAC-vs-FPS axis to the closest classical-compressor axis as well.

\paragraph{Scope of the CDH benchmark.} Two factors contextualize the wider-than-expected gap. \emph{(a) Regime.} \citet{goldenberg2011cdh,goldenberg2017cdh} emphasize cell-grid game maps at tighter budgets ($\sim$8--32~B/v); we test road and synthetic graphs at $32$--$128$~B/v, where the FPS-ALT teacher is already strong (Theorem~\ref{thm:covering-radius}). \emph{(b) CDH parameter search.} We use the canonical top-$r$-farthest rule with $P{=}64$; a sweep over $(P, r)$ pairs and alternative selection rules is future work. The reference implementation and BPMX-enabled A* path are released so that further CDH tuning runs against an audited baseline.

\begin{table}[t]
\centering
\caption{\badgeMain Reference Compressed Differential Heuristics (CDH) benchmark at matched memory. CDH variants from \citet{goldenberg2011cdh,goldenberg2017cdh}: \texttt{CDH} (strict intersection), \texttt{CDH+sub} (Goldenberg upper/lower bound substitution; off-heap $P^2$ pivot side-table not charged to per-vertex budget), \texttt{CDH+sub+BPMX} (additionally enables Felner-style one-step Bidirectional Pathmax during A* expansion). Pool size $P{=}64$ in all configurations; $r$ is the per-vertex retained subset size adjusted to match $B$ B/v. Mean expansion-reduction \% over Dijkstra; 5 seeds $\times$ 100 queries per cell; CDH std is $0.00$ across seeds because the FPS-pivoted reference CDH is fully determined by the LCC seed vertex (admissibility verified by the released unit tests). AAC and FPS-ALT rows are reproduced from the canonical headline sources under the same val-split test protocol: SBM from Table~\ref{tab:matched-hybrid}, Modena from Table~\ref{tab:osmnx}, NY from Table~\ref{tab:valsplit} (test column). CDH is freshly run under the same matched-memory protocol. Discussion in Section~\ref{sec:cdh-headtohead}.}
\label{tab:cdh-reference}
\small
\setlength{\tabcolsep}{4pt}
\begin{tabular}{l l c c c}
\toprule
\textbf{Graph} & \textbf{Method}
  & {\textbf{32 B/v}} & {\textbf{64 B/v}} & {\textbf{128 B/v}} \\
\midrule
\multirow{3}{*}{SBM 10K (undir.)}
              & \cellcolor{aaccolor}AAC & \cellcolor{aaccolor} 88.7\,\% & \cellcolor{aaccolor} 94.0\,\% & \cellcolor{aaccolor} 96.9\,\% \\
              & FPS-ALT          & \bfseries 90.0\,\% & \bfseries 94.5\,\% & \bfseries 97.0\,\% \\
              & CDH              & 12.9\,\% & 30.0\,\% & 52.7\,\% \\
\midrule
\multirow{5}{*}{OSMnx Modena (30K, dir.)}
              & \cellcolor{aaccolor}AAC & \cellcolor{aaccolor} 80.7\,\% & \cellcolor{aaccolor} 87.7\,\% & \cellcolor{aaccolor} 91.9\,\% \\
              & FPS-ALT          & \bfseries 83.4\,\% & \bfseries 91.2\,\% & \bfseries 92.8\,\% \\
              & CDH              & 45.3\,\% & 52.8\,\% & 62.8\,\% \\
              & CDH+sub          & 45.3\,\% & 52.8\,\% & 62.8\,\% \\
              & CDH+sub+BPMX     & 43.5\,\% & 50.7\,\% & 61.2\,\% \\
\midrule
\multirow{5}{*}{DIMACS NY (264K, dir.)}
              & \cellcolor{aaccolor}AAC & \cellcolor{aaccolor} 81.3\,\% & \cellcolor{aaccolor} 88.9\,\% & \cellcolor{aaccolor} 92.6\,\% \\
              & FPS-ALT          & \bfseries 85.7\,\% & \bfseries 91.1\,\% & \bfseries 93.7\,\% \\
              & CDH              & 29.9\,\% & 46.7\,\% & 63.0\,\% \\
              & CDH+sub          & 29.9\,\% & 46.7\,\% & 63.0\,\% \\
              & CDH+sub+BPMX     & 30.1\,\% & 47.1\,\% & 63.1\,\% \\
\bottomrule
\end{tabular}
\end{table}

\subsection{Training-Objective Drift: A Controlled Ablation}
\label{sec:training-drift}

\textbf{Headline finding.} The first $m$ landmarks of AAC's $K_0$-landmark FPS pool \emph{are} the FPS-ALT $K{=}m$ landmarks (same FPS seed vertex), so identity one-hot selection on the first $m$ pool indices recovers FPS-ALT $K{=}m$ \emph{exactly} (Table~\ref{tab:training-drift}, second row: $535.7$ expansions, identical to FPS-ALT to the last digit). Yet the default-trained AAC selector drifts away from this subset and underperforms it by $0.71$\,\% at $200$ epochs and $1.45$\,\% at $500$ epochs. We read this as a clear diagnostic that the matched-memory parity reported throughout Sections~\ref{sec:synthetic-sbm-ba}--\ref{sec:matched-hybrid} need not reflect an architectural ceiling: the architecture admits FPS-ALT trivially. Below we replicate the diagnostic across two graph families, two budgets, three seeds, and five training-epoch checkpoints (Figure~\ref{fig:training-drift}) and a directed road graph (Appendix Table~\ref{tab:training-drift-road}); identity-on-first-$m$ initialization closes the gap entirely (Appendix Table~\ref{tab:training-drift-hp}), pinpointing initialization as the binding constraint and motivating it as a top-level contribution (\S\ref{sec:introduction}, C2).

The matched-memory parity in Sections~\ref{sec:synthetic-sbm-ba}--\ref{sec:matched-hybrid} therefore raises a sharp question: \emph{why} does the trained compressor underperform identity selection at matched memory? To isolate the mechanism, we run a controlled five-variant diagnostic experiment on SBM ($K_0{=}32, m{=}8$, seed 42):

\begin{table}[t]
\centering
\small
\caption{\badgeAbl Forced-first-$m$ diagnostic on SBM at matched memory (32\,B/v, $K_0{=}32$, $m{=}8$, seed 42, 100 queries). The first 8 landmarks of the $K_0{=}32$ FPS pool coincide with ALT's $K{=}8$ landmarks; forcing AAC to select them recovers ALT exactly. Training moves away from this subset and gets worse with more epochs.}
\label{tab:training-drift}
\begin{tabular}{l S[table-format=3.1] S[table-format=2.2, table-space-text-post={\,\%}]}
\toprule
\textbf{Configuration}
  & {\textbf{Mean expansions}} & {\textbf{Reduction}} \\
\midrule
FPS-ALT, $K{=}8$                                                & 535.7 & 89.95\,\% \\
AAC $K_0{=}32, m{=}8$, forced one-hot on first 8 (no training)  & 535.7 & 89.95\,\% \\
AAC $K_0{=}8, m{=}8$ trained 200 epochs (identity case)         & 535.7 & 89.95\,\% \\
AAC $K_0{=}32, m{=}8$ trained 200 epochs (default schedule)     & 573.4 & 89.24\,\% \\
AAC $K_0{=}32, m{=}8$ trained 500 epochs                        & 612.7 & 88.50\,\% \\
\bottomrule
\end{tabular}
\end{table}

Three observations follow. (i) The forced-first-8 baseline matches ALT \emph{exactly} (535.7 expansions, 89.95\%), confirming the heuristic plumbing is correct: there is no implementation bug separating AAC from ALT when the selection is forced to the ALT landmarks. (ii) With 200 epochs of Gumbel-softmax training the compressor moves \emph{away} from the ALT subset, scoring 89.24\% ($-0.71$\,\%). (iii) With 500 epochs it scores 88.50\%, \emph{worse} than the 200-epoch checkpoint. More training does not converge toward the ALT subset; it drifts further from it.

The mechanism is that block-sparse initialization in the compressor structurally distributes the $m$ rows across the $K_0$-landmark pool rather than concentrating them on the low-index FPS-optimal subset, and the gap-to-teacher gradient (Proposition~\ref{prop:gradient-equiv}) minimizes an objective whose global optimum on the teacher pool is not the same subset that minimizes covering radius (cf.\ Figure~\ref{fig:toy-p7} for a closed-form 1D illustration). Proposition~\ref{prop:gradient-equiv} asserts that gap-to-teacher and gap-to-distance gradients are identical, not that either objective's minimum coincides with the minimum of covering radius: precisely the gap diagnosed here. A direct causal test (adding a soft covering-radius penalty $\lambda_{\mathrm{cov}}\cdot\tilde{r}$ to the distillation loss) narrows the road-network gap by $1.3$--$1.4$\,\% on Modena/Manhattan while preserving admissibility, but is neutral-to-slightly-negative on SBM and BA, consistent with the covering-radius reading that road-network coverage slack is what the regularizer exploits (full sweep in Appendix~\ref{app:coverage-aware}). Our conclusion is narrower than a covering-radius-only reading would predict: architectural admissibility is cheap, but the training objective is the binding constraint on whether learned selection can match FPS at matched memory.

\paragraph{Multi-graph and multi-budget replication.} To check that the seed-42 SBM result above is not idiosyncratic, we re-ran the forced-first-$m$ vs.\ trained comparison on two graph families (SBM $5{\times}2000$, BA $10000$, $m{=}5$) at two budgets ($32$ and $64$\,B/v) across three seeds (42, 43, 44) and five training-epoch checkpoints ($0$, $50$, $200$, $500$, $1000$). Figure~\ref{fig:training-drift} plots the resulting expansion-reduction curves against the forced-first-$m$ and FPS-ALT $K{=}m$ horizontal references.

\begin{figure}[t]
\centering
\includegraphics[width=\textwidth]{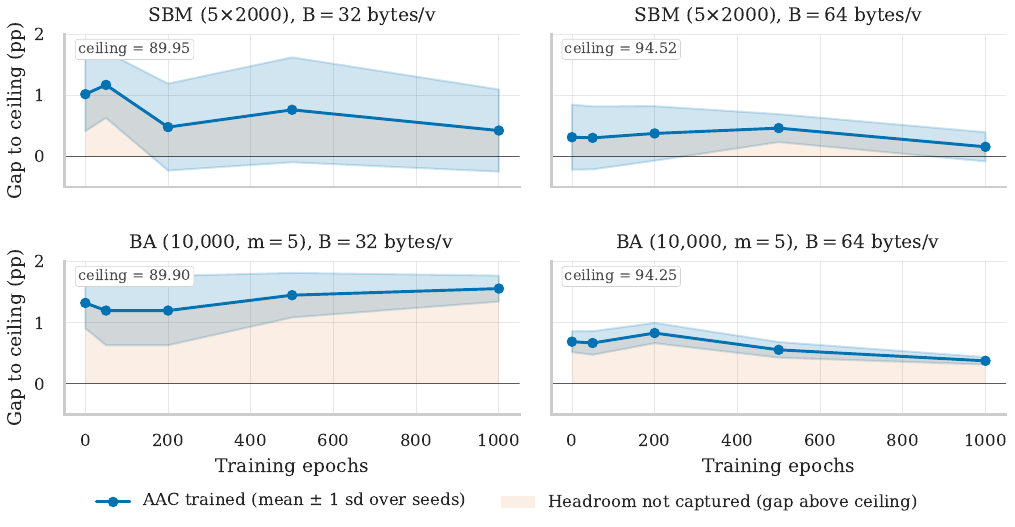}
\caption{Training-objective drift across two graph families (rows: SBM, BA) and two memory budgets (columns: $32$, $64$\,B/v). \textbf{Solid}: AAC expansion reduction (\%) vs.\ training epochs (mean over $3$ seeds, shaded band $\pm 1$ s.d.). \textbf{Dashed green}: forced-first-$m$ baseline ($W$ set to identity on the first $m$ FPS pool indices), the architectural ceiling. \textbf{Dotted red}: FPS-ALT $K{=}m$ reference, algebraically equal to forced-first-$m$ on these undirected cells. The trained selector stays below the ceiling across all four cells and the gap widens or oscillates rather than closing. Road-graph extension in Table~\ref{tab:training-drift-road}.}
\label{fig:training-drift}
\end{figure}

The pattern is consistent across all four (graph, budget) cells: the trained selector stays $0.3$--$1.4$\,\% \emph{below} forced-first-$m$ throughout the $0$--$1000$-epoch window, and additional training oscillates rather than closing the gap. Per-seed worst-case residuals reach ${\sim}0.8$\,\% on SBM $B{=}64$ during the training window (with the seed-mean residual smaller at any single checkpoint due to variance across the three seeds); BA $B{=}32$ is the tightest cell, with seed-mean residual within ${\sim}0.5$\,\% throughout. No cell shows monotone convergence toward the ALT identity subset under the default training schedule. The hyperparameter-and-initialization sweep below (Appendix Table~\ref{tab:training-drift-hp}) shows that an identity-on-first-$m$ initializer closes the gap entirely; the binding constraint in the regimes we test is initialization rather than architectural capacity.

\paragraph{ALT-pool (first $m$) arm.} A companion diagnostic (Appendix Table~\ref{tab:same-pool-firstm}) runs FPS-ALT using the first $m$ landmarks of the same $K_0$-landmark pool AAC is trained on, across seven graphs (SBM, BA, NY-DIMACS, Modena, Manhattan, Berlin, LA). On every graph and budget the arm is numerically equal to canonical FPS-ALT at matched memory, confirming the algebraic identity underlying the forced-first-$m$ construction: ``AAC with identity selection on the first-$m$ FPS indices'' is ALT-at-matched-memory. The residual AAC-vs-ALT gap on matched-memory non-road comparisons is training-objective drift off this subset, not architectural capacity.

\paragraph{Road-graph extension and initialization robustness.} The same forced-first-$m$ vs.\ trained comparison on DIMACS NY (264K nodes, directed, $K{=}8$, $B{=}64$\,B/v, three seeds; Appendix Table~\ref{tab:training-drift-road}) reproduces the qualitative pattern: the trained selector stays $1.7$--$3.3$\,\% below the FPS-ALT ceiling and the gap widens monotonically from $-1.67$\,\% at epoch~50 to $-3.28$\,\% at epoch~1000, ruling out the alternative explanation that the drift is specific to synthetic graphs. A hyperparameter sweep on the SBM $B{=}32$ cell over learning rate, batch size, and $K_0$ leaves the drift at $-0.6$ to $-1.7$\,\% under the default block-sparse initialization but \emph{eliminates} it ($+0.00$\,\% at every cell) under an identity-on-first-$m$ init that already sits at the ceiling (Appendix Table~\ref{tab:training-drift-hp}). The binding constraint is the joint (init, gradient) trajectory, not the optimization landscape itself.

\subsection{Compatibility Case Study: Warcraft $12{\times}12$ Grids}
\label{sec:contextual-exp}

Appendix~\ref{app:contextual} evaluates AAC in a differentiable planning pipeline on Warcraft $12{\times}12$ grid maps~\citep{vlastelica2020differentiation} as a compatibility check, not a primary benchmark. The frozen compressor composes with an end-to-end encoder and the full pipeline trains to completion while preserving deployment-time admissibility (zero violations on every evaluated instance). AAC's deployed heuristic carries an admissibility certificate w.r.t.\ the predicted-cost graph at every checkpoint (the encoder learns; the compressor stays a structural max-of-admissibles by Proposition~\ref{thm:admissibility}). Per-method numbers are in Table~\ref{tab:contextual-ablation}; closing the imitation-fidelity gap to blackbox differentiation under this constraint is left to future work.

\section{Discussion}
\label{sec:discussion}

\label{sec:discussion-cost}

\paragraph{The FPS ceiling and where it breaks.} The matched-memory evidence (Table~\ref{tab:main-matched-memory}) reveals a structural fact about ALT-family landmark selection: FPS's Gonzalez 2-approximation on covering radius (Corollary~\ref{cor:fps-covering}) is already tight on the graphs we tested, so no selector, classical or learned, can substantially improve expansion count at matched memory. The classical CDH compressor is dominated by both AAC and FPS-ALT at every matched-memory cell tested (Section~\ref{sec:cdh-headtohead}), confirming that the compression operator is not the bottleneck. On expansion count, FPS-ALT leads AAC on every metric regime under the val-split protocol (Table~\ref{tab:dimacs-wilcoxon-percell}); the architectural admissibility constraint itself is free, and the cost lives in the gap-to-teacher gradient under the default training schedule (Section~\ref{sec:training-drift}). On wall-clock latency at our chosen storage layout, the picture flips: AAC is faster on every DIMACS graph at p50 (Table~\ref{tab:latency}), with the interleaved-ALT layout as the open control for this claim. The only expansion-count cell where AAC leads under val-split is OGB-arXiv at $B{\geq}64$, precisely the regime where FPS coverage is \emph{not} near-optimal. The closest classical admissible compressor, CDH, is dominated by both AAC and FPS-ALT at every matched-memory cell tested (Section~\ref{sec:cdh-headtohead}), confirming that the compression operator itself is not the bottleneck in the ALT family. The residual headroom for learned selection lives in three directions (Section~\ref{sec:future}): non-FPS candidate pools, non-Euclidean or instance-cost graphs, and composition with classical admissible toolchains.

\paragraph{What architectural admissibility makes possible.} Architectural admissibility (Proposition~\ref{thm:admissibility}) is what made the matched-memory diagnostic possible: it removes optimization-dependent confounds from the comparison (the deployed heuristic is provably safe at every checkpoint, Table~\ref{tab:admissibility-early-stopping}), it lets the same compressor be inserted into a differentiable encoder pipeline as an existence proof (Appendix~\ref{app:contextual}), and at deployment it reduces to classical ALT on a learned subset (Proposition~\ref{prop:alt-special-case}) so the classical toolchain (bi-directional search, BPMX, CDH-style bound substitution) remains available.

\paragraph{A methodological by-product.} The accounting convention (whether ALT's $K$ counts forward landmarks only or both directions) can shift matched-memory comparisons by an order of magnitude even when implementation, training, and evaluation are held fixed. We therefore lock the convention and state it explicitly (Section~\ref{sec:setup}, Appendix~\ref{app:prereg}~E.2) before pre-registering effect sizes. The OSMnx Manhattan crossover at $m{=}32, 64$ on Table~\ref{tab:compression-curve} is the matching pool-size effect at $m{\approx}K_0/2$ explained in Section~\ref{sec:theory-coverage}.

\paragraph{Budget--performance tradeoff.} The compression curve (Table~\ref{tab:compression-curve}) reveals the three-way relationship between the full teacher (FPS-ALT at $K_0{=}64$, 512\,B/v), AAC at compression ratio $m/K_0$, and matched-budget ALT at $K{=}m/2$. On Modena: FPS-Full achieves 95.5\%, AAC at $m{=}16$ (64\,B/v) 84.9\%, and ALT at $K{=}8$ (64\,B/v) 91.2\%, confirming that at $8{\times}$ compression, lossy selection costs 10.6\,\% relative to the uncompressed teacher while direct FPS costs only 4.3\,\%. As $m$ approaches $K_0$ the crossover surfaces (Manhattan at $m{=}32$, AAC 92.2\% vs.\ ALT 92.1\%): this reflects access to a $K_0$-landmark pool versus a fresh $K_0/2$-landmark FPS run, not a violation of the capacity ceiling. At $m{=}K_0$ AAC effectively recovers the teacher (Proposition~\ref{prop:alt-special-case}), and the asymmetry vanishes on undirected graphs where the matched-memory rule is ALT $K{=}m$. The matched-memory question for memory-constrained deployment remains decided by FPS.

\subsection{Limitations}

\begin{itemize}[leftmargin=1.2em,topsep=2pt,itemsep=2pt]
\item \emph{We do not claim AAC replaces specialized routing engines.} Contraction hierarchies~\citep{geisberger2008contraction}, hub labeling~\citep{abraham2012hierarchical}, CRP~\citep{delling2011crp}, CCH~\citep{dibbelt2016cch}, and CH-Potentials~\citep{strasser2021chpotentials} achieve orders-of-magnitude speedup on static and dynamic road networks; we target graphs without strong hierarchy and settings where differentiability is required.
\item \emph{We find ALT ahead of AAC on expansion count on every road network we tested under the headline $64$\,B/v val-split protocol; the only retrospective best-$K_0$ exception we see is OSMnx Manhattan at $128$\,B/v, which we attribute to a pool-size effect (Section~\ref{sec:theory-coverage}).} On DIMACS at $128$\,B/v, FLA: ALT $92.8 \pm 0.0\%$ vs.\ AAC $92.1 \pm 1.1\%$ under the retrospective best-$K_0$ protocol of Table~\ref{tab:main_results} (val-split Table~\ref{tab:valsplit} test cells are tied at $91.8\% / 91.8\%$); the median per-seed Wilcoxon $p$-value at COL-$128$\,B/v is $p\approx 0.23$ but the Stouffer-combined $p\approx 1.3{\times}10^{-3}$ remains FDR-significant (Table~\ref{tab:dimacs-wilcoxon-percell}), consistent with FPS near-optimality on highway structure. On directed OSMnx city graphs at $64$\,B/v, ALT leads by $1.5$--$3.9$\,\% (Berlin, Manhattan, Modena, LA, Netherlands). \emph{The expansion-count limitation does not extend to wall-clock latency:} at the same $64$\,B/v matched-memory configuration AAC is faster than ALT at p50 on every DIMACS graph and at p95 on three of four (Table~\ref{tab:latency}), and amortizes its higher offline preprocessing within $170$--$1{,}924$ queries (Figure~\ref{fig:amortized-cost}).
\item \emph{Mechanism: covering radius binds on roads; training-objective drift binds elsewhere.} FPS's Gonzalez $2$-approximation on $r_m$ ensures near-optimal spatial coverage on road networks; the gap-to-teacher gradient (Proposition~\ref{prop:gradient-equiv}) is correlated with but not identical to covering-radius minimization (cf.\ Figure~\ref{fig:toy-p7}), and we observe the trained compressor drift \emph{away} from the FPS-from-pool optimum with more training on non-road graphs (Section~\ref{sec:training-drift}).
\item \emph{We treat the contextual variant as a compatibility check, not a performance result.} Blackbox differentiation achieves $54.7\%$ path match vs.\ AAC's $20.3 \pm 2.4\%$ on Warcraft $12{\times}12$ (Appendix~\ref{app:contextual}); the appendix establishes that architectural admissibility survives insertion into the differentiable pipeline. We leave closing the imitation-fidelity gap to future work.
\item \emph{Initialization is the binding constraint on the expansion-count gap.} Under default block-sparse initialization the trained selector drifts away from the FPS-from-pool optimum (Section~\ref{sec:training-drift}); identity-on-first-$m$ initialization closes the gap entirely (Appendix Table~\ref{tab:training-drift-hp}).
\item \emph{Seed and query-distribution scope.} Our static experiments use 5 seeds (3 for the 4.5M-node Netherlands graph and Warcraft); query-distribution sensitivity covers uniform/hotspot/powerlaw on NY and Manhattan (Appendix~\ref{app:query-distribution}). Production query logs may exhibit richer structure not captured here.
\end{itemize}

\subsection{Future Directions}
\label{sec:future}

Three falsifiable follow-ups open from the architectural guarantee and the matched-memory result. \textbf{(1)~Richer candidate pools.} Theorem~\ref{thm:covering-radius}'s covering ceiling is taken against the FPS pool. A pool drawn from non-FPS sources (planar partitioning, avoid/maxcover landmarks~\citep{goldberg2005computing}, boundary landmarks~\citep{efentakis2013landmarks}, or learned pool generators trained on instance features) could provide headroom that no FPS-pool compressor can recover. AAC is pool-agnostic, so this is a drop-in swap. \textbf{(2)~Non-Euclidean and instance-cost regimes.} The OGB-arXiv and Pareto-detail evidence both hint that residual headroom for learned selection lives in graphs whose metric is not well-approximated by FPS coverage (citation hubs, small-world structure, and predicted-cost grids; Appendix~\ref{app:contextual-spec}). A targeted study on these regimes paired with a non-FPS pool from~(1) is the natural follow-up; the subset-selection formulation of \citet{rayner2013subset} also suggests re-asking the question with non-landmark hypothesis classes (learned Euclidean embeddings~\citep{rayner2011euclidean}, submodular-coverage-aware selectors). \textbf{(3)~Composition with classical admissible toolchains.} Because AAC deploys as ALT-on-a-subset, it inherits BPMX propagation, CDH with bound substitution (Section~\ref{sec:cdh-headtohead}), CH-Potentials~\citep{strasser2021chpotentials} as an admissible upper bound, and bi-directional landmark tightening; a matched-memory benchmark of AAC $\circ$ CDH-with-BPMX would close the third side of the admissible landmark-compression triangle.

\section{Conclusion}
\label{sec:conclusion}

\paragraph{Summary.} AAC establishes that differentiability and deployment-time admissibility can coexist in a single ALT-family layer (Proposition~\ref{thm:admissibility}); the cost is in optimization, not in architecture: the binding constraint is training-objective drift under default initialization, not the hypothesis class (Section~\ref{sec:training-drift}). Under matched per-vertex memory, the differentiable construction trades a sub-$4$-percentage-point expansion deficit on metric regimes for end-to-end differentiability and a structural admissibility certificate; the headline numbers live in Table~\ref{tab:main-matched-memory}.

The matched-memory protocol (Section~\ref{sec:setup}; paired TOST + FDR-corrected Wilcoxon + forced-first-$m$ diagnostics + pre-registration audit in Appendix~\ref{app:prereg}) is the methodological contribution the community can re-use to compare any future learned admissible heuristic on common terms. We trace the parity to a covering-radius fact (Theorem~\ref{thm:covering-radius}): FPS's Gonzalez 2-approximation on $r_m$ is already tight on the tested graphs, and the one regime where the learned selector leads (OGB-arXiv at $B{\geq}64$) is precisely the one where FPS coverage is not near-optimal, motivating the pool-agnostic follow-up directions of Section~\ref{sec:future}. We provide the implementation and reproducing scripts at \url{https://github.com/anindex/aac}; see Appendix~\ref{app:reproducibility}.

\section*{Acknowledgements}

The authors acknowledge support from the VinUni Center for AI Research (CAIR).

\bibliography{references}
\bibliographystyle{tmlr}

\newpage
\appendix

\section{Extended Proofs}
\label{app:proofs}

\subsection*{Full Proof of Proposition~\ref{thm:admissibility} (Row-Stochastic Compression Preserves Admissibility, after \citealt{pearl1984heuristics})}
\label{app:admissibility-proof}

\begin{proof}
We prove both cases by showing that row-stochastic linear maps cannot amplify coordinate-wise differences.

\emph{Part (a): undirected case.}
Define $\delta_k := d(l_k, u) - d(l_k, t)$ for $k \in [K_0]$. From Eq.~\eqref{eq:compression}:
\[
y_i(u) - y_i(t) = \sum_{k=1}^{K_0} A_{i,k}\, \delta_k.
\]
Since $A_{i,k} \geq 0$ and $\sum_k A_{i,k} = 1$, this is a convex combination of $\{\delta_k\}_{k=1}^{K_0}$. For any convex combination:
\[
\min_k \delta_k \;\leq\; \sum_k A_{i,k}\, \delta_k \;\leq\; \max_k \delta_k.
\]
The upper bound follows because $\sum_k A_{i,k}\, \delta_k \leq \sum_k A_{i,k} \max_j \delta_j = \max_j \delta_j$. The lower bound is symmetric. Taking absolute values:
\[
\Big|\sum_k A_{i,k}\, \delta_k\Big| \leq \max\!\big(|\max_k \delta_k|,\; |\min_k \delta_k|\big) = \max_k |\delta_k|.
\]
To see this: if $\sum_k A_{i,k}\delta_k \geq 0$, then $|\sum_k A_{i,k}\delta_k| \leq \max_k \delta_k \leq |\max_k \delta_k| \leq \max_k|\delta_k|$. If $\sum_k A_{i,k}\delta_k < 0$, then $|\sum_k A_{i,k}\delta_k| = -\sum_k A_{i,k}\delta_k \leq -\min_k\delta_k = |\min_k\delta_k| \leq \max_k|\delta_k|$.

Therefore $|y_i(u) - y_i(t)| \leq \max_k |\delta_k|$ for every $i$, and taking the max over $i$:
\[
\hA(u,t) = \max_i |y_i(u) - y_i(t)| \leq \max_k |d(l_k,u) - d(l_k,t)| = \hALT(u,t).
\]
The final inequality $\hALT(u,t) \leq d(u,t)$ follows from Theorem~\ref{thm:alt-admissibility}.

\emph{Part (b): directed case.}
For the backward bound: define $\delta^{\mathrm{bwd}}_k := d_{\mathrm{in}}(k,u) - d_{\mathrm{in}}(k,t) = d(u,l_k) - d(t,l_k)$. Then:
\[
y^{\mathrm{bwd}}_i(u) - y^{\mathrm{bwd}}_i(t) = \sum_k A^{\mathrm{bwd}}_{i,k}\, \delta^{\mathrm{bwd}}_k \;\leq\; \max_k \delta^{\mathrm{bwd}}_k = \max_k\big(d(u,l_k) - d(t,l_k)\big),
\]
by the convex combination bound. For the forward bound: define $\delta^{\mathrm{fwd}}_k := d_{\mathrm{out}}(k,t) - d_{\mathrm{out}}(k,u) = d(l_k,t) - d(l_k,u)$. Then:
\[
y^{\mathrm{fwd}}_i(t) - y^{\mathrm{fwd}}_i(u) = \sum_k A^{\mathrm{fwd}}_{i,k}\, \delta^{\mathrm{fwd}}_k \;\leq\; \max_k \delta^{\mathrm{fwd}}_k = \max_k\big(d(l_k,t) - d(l_k,u)\big).
\]
Taking the max over $i$ for each bound and then the max over both bounds:
\[
\hA(u,t) \leq \max\!\Big(\max_k\big(d(u,l_k) - d(t,l_k)\big),\; \max_k\big(d(l_k,t) - d(l_k,u)\big),\; 0\Big) = \hALT(u,t).
\]
Admissibility follows from Theorem~\ref{thm:alt-admissibility}.
\end{proof}

We also provide the complete proof of Theorem~\ref{thm:covering-radius}.

\subsection*{Undirected Case}

Let $G = (V, E, w)$ be an undirected weighted graph with positive edge weights. Let $S = \{l_1, \ldots, l_m\} \subseteq L$ be an $m$-element landmark subset with covering radius $r_m = \max_{v \in V} \min_{l \in S} d(v, l)$.

\emph{Claim.} For all $u, t \in V$ with finite $d(u,t)$: $d(u,t) - \hALT^S(u,t) \leq 2\,r_m$.

\begin{proof}
The ALT heuristic on the subset $S$ is $\hALT^S(u,t) = \max_{l \in S} |d(l,u) - d(l,t)|$.

Choose $l^* = \arg\min_{l \in S} d(u, l)$. By the covering radius definition, $d(u, l^*) \leq r_m$.

Consider the forward bound for $l^*$:
\[
d(u,t) - \big[d(l^*, t) - d(l^*, u)\big] = d(u,t) + d(l^*, u) - d(l^*, t).
\]

By the triangle inequality, $d(u,t) \leq d(u, l^*) + d(l^*, t)$, so $d(l^*, t) \geq d(u,t) - d(u, l^*)$. Substituting:
\[
d(u,t) + d(l^*, u) - d(l^*, t) \leq d(u,t) + d(l^*, u) - \big[d(u,t) - d(u, l^*)\big] = d(l^*, u) + d(u, l^*).
\]

Since $G$ is undirected, $d(l^*, u) = d(u, l^*) \leq r_m$, so:
\[
d(u,t) + d(l^*, u) - d(l^*, t) \leq 2\,d(u, l^*) \leq 2\,r_m.
\]

Since $\hALT^S(u,t) \geq d(l^*, t) - d(l^*, u)$ (the max over all landmarks and bound types is at least the forward bound for $l^*$):
\[
d(u,t) - \hALT^S(u,t) \leq d(u,t) - \big[d(l^*, t) - d(l^*, u)\big] \leq 2\,r_m. \qedhere
\]
\end{proof}

\subsection*{Directed Case}

Let $G = (V, E, w)$ be a directed weighted graph. The ALT heuristic uses both forward bounds $d(l,t) - d(l,u)$ and backward bounds $d(u,l) - d(t,l)$.

Define the symmetrized covering radius:
\[
r_m^{\mathrm{sym}} := \max_{v \in V_{\mathrm{reach}}} \min_{l \in S} \max\!\big(d(l,v),\, d(v,l)\big),
\]
where $V_{\mathrm{reach}} = \{v \in V : \exists\, l \in S \text{ with } d(l,v) < \infty \text{ and } d(v,l) < \infty\}$.

\begin{proof}
We bound the gap for forward and backward bounds separately.

\emph{Forward bound.} Choose $l^* \in S$ minimizing $\max(d(l^*,u), d(u,l^*))$ over reachable landmarks. Then $d(l^*,u) \leq r_m^{\mathrm{sym}}$ and $d(u,l^*) \leq r_m^{\mathrm{sym}}$.

The forward gap for $l^*$ is:
\[
d(u,t) - \big[d(l^*,t) - d(l^*,u)\big] = d(u,t) + d(l^*,u) - d(l^*,t).
\]

By the triangle inequality (directed): $d(l^*,t) \geq d(u,t) - d(u,l^*)$, since $d(u,t) \leq d(u,l^*) + d(l^*,t)$ (using the path $u \to l^* \to t$; finiteness of $d(u,l^*)$ is guaranteed by the symmetrized covering radius definition which ensures $\max(d(l^*,u), d(u,l^*)) \leq r_m^{\mathrm{sym}} < \infty$). Proceeding:
\begin{align*}
d(u,t) + d(l^*,u) - d(l^*,t) &\leq d(u,t) + d(l^*,u) - \big[d(u,t) - d(u,l^*)\big] \\
&= d(l^*,u) + d(u,l^*) \\
&\leq 2\,r_m^{\mathrm{sym}}.
\end{align*}

\emph{Backward bound.} Choose $l^\dagger \in S$ minimizing $\max(d(l^\dagger,t), d(t,l^\dagger))$. Then $d(l^\dagger,t) \leq r_m^{\mathrm{sym}}$ and $d(t,l^\dagger) \leq r_m^{\mathrm{sym}}$.

The backward gap for $l^\dagger$ is:
\[
d(u,t) - \big[d(u,l^\dagger) - d(t,l^\dagger)\big] = d(u,t) - d(u,l^\dagger) + d(t,l^\dagger).
\]

By the directed triangle inequality, $d(u,t) \leq d(u,l^\dagger) + d(l^\dagger,t)$, so $d(u,t) - d(u,l^\dagger) \leq d(l^\dagger,t)$:
\begin{align*}
d(u,t) - d(u,l^\dagger) + d(t,l^\dagger) &\leq d(l^\dagger,t) + d(t,l^\dagger) \\
&\leq 2\,r_m^{\mathrm{sym}}.
\end{align*}

Taking the max over all landmarks and both bound types for the ALT heuristic:
\[
d(u,t) - \hALT^S(u,t) \leq 2\,r_m^{\mathrm{sym}}. \qedhere
\]
\end{proof}

\subsection*{Proof of Corollary~\ref{cor:fps-covering}}

\begin{proof}
If $S$ is selected by farthest-point sampling~\citep{gonzalez1985clustering}, the Gonzalez 2-approximation theorem for the $K$-center problem guarantees $r_m \leq 2\,r_m^*$, where $r_m^*$ is the minimum possible covering radius over all $m$-element subsets. Substituting into the bound of Theorem~\ref{thm:covering-radius}:
\[
d(u,t) - \hALT^S(u,t) \leq 2\,r_m \leq 2 \cdot 2\,r_m^* = 4\,r_m^*.  \qedhere
\]
\end{proof}

\subsection*{Compressor Sampling and Inference Details}
\label{app:compressor-details}

The training-time selection matrix is sampled from a hard Gumbel-softmax with straight-through estimator~\citep{jang2017categorical,maddison2017concrete,bengio2013estimating}:
\begin{equation}
A = \mathrm{GumbelSoftmax}(W / \tau,\ \mathrm{hard}=\mathrm{True}).
\label{eq:appA-gumbel}
\end{equation}
The forward pass returns one-hot rows; the backward pass uses the soft Gumbel-softmax Jacobian. At deployment, hard argmax over $W$ yields one-hot rows directly:
\begin{equation}
y_i(v) = d(l_{j^*(i)}, v), \qquad j^*(i) = \argmax_j W_{i,j}.
\label{eq:appA-inference}
\end{equation}
Both forms are special cases of the row-stochastic compressor of Section~\ref{sec:compressor} (Eq.~\ref{eq:compression}), so Proposition~\ref{thm:admissibility} applies pointwise.

\subsection*{Smooth-Max Differentiable Surrogate}
\label{app:smooth-max}

This section establishes an exact-label observation: the hard $\max$ in the heuristic evaluation can be smoothed without breaking admissibility. This result is \emph{not} the source of the contextual deployment guarantee (that comes from recomputing exact distances at deployment, Section~\ref{sec:contextual}); it completes the end-to-end differentiable chain for training. The static experiments in Section~\ref{sec:dimacs}--\ref{sec:osmnx} do \emph{not} use this surrogate: they train with the straight-through Gumbel-softmax estimator of Section~\ref{sec:compressor} and the hard $\max$. The smooth-max analysis below is used only in the contextual variant (Section~\ref{sec:contextual}), where it completes the end-to-end differentiable chain.

\begin{theorem}[Smooth lower bound]
\label{thm:smooth}
For $\mathbf{x} = (x_1, \ldots, x_m) \in \R^m$ and temperature $T > 0$, define:
\begin{equation}
\label{eq:smooth-max}
M_T(\mathbf{x}) := \frac{1}{T}\log\sum_{i=1}^{m} \exp(T x_i) - \frac{\log m}{T}.
\end{equation}
Then $M_T(\mathbf{x}) \leq \max(\mathbf{x})$ for all $\mathbf{x}$, with $M_T(\mathbf{x}) \to \max(\mathbf{x})$ as $T \to \infty$.
\end{theorem}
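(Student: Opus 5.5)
The plan is to sandwich the log-sum-exp between two elementary bounds on $\sum_i \exp(Tx_i)$, expressed in terms of $x^\star := \max(\mathbf{x})$. Since $T>0$, the map $s \mapsto \exp(Ts)$ is increasing. Hence each term satisfies $\exp(Tx_i) \le \exp(Tx^\star)$, giving the upper estimate $\sum_{i=1}^m \exp(Tx_i) \le m\exp(Tx^\star)$. Conversely, all terms are positive and one of them equals $\exp(Tx^\star)$, so $\sum_{i=1}^m \exp(Tx_i) \ge \exp(Tx^\star)$.

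For the inequality $M_T(\mathbf{x}) \le \max(\mathbf{x})$, apply $\frac{1}{T}\log(\cdot)$ to the upper estimate; this is monotone because $T>0$. It yields
\[
\tfrac{1}{T}\log\sum_i \exp(Tx_i) \le x^\star + \tfrac{\log m}{T}.
\]
Subtracting $\frac{\log m}{T}$ gives exactly $M_T(\mathbf{x}) \le x^\star$. The normalization $-\frac{\log m}{T}$ in Eq.~\eqref{eq:smooth-max} is chosen precisely to cancel the worst case, which occurs when all $x_i$ are equal. The bound is therefore tight at $\mathbf{x} = c\mathbf{1}$, which is worth remarking.

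For convergence, apply the same map to the lower estimate:
\[
M_T(\mathbf{x}) \ge x^\star - \tfrac{\log m}{T}.
\]
Combining this with the first inequality gives $x^\star - \frac{\log m}{T} \le M_T(\mathbf{x}) \le x^\star$. The width of this interval is $\frac{\log m}{T} \to 0$ as $T\to\infty$, uniformly in $\mathbf{x}$, which proves the limit.

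There is no real obstacle; the only care needed is to use $T>0$ for the monotonicity of $\exp(T\cdot)$ and of $\frac{1}{T}\log$. I would close by noting the consequence for the heuristic. Substituting $\mathbf{x}$ with the per-row bounds inside Eq.~\eqref{eq:hit-directed} gives a smoothed heuristic that is at most $\hA \le \hALT \le d$ by Proposition~\ref{thm:admissibility}, so admissibility is preserved. If the clamp at $0$ is kept, it should be applied after $M_T$ as $\max(0,\cdot)$, since that preserves the bound.
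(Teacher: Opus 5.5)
Your proposal is correct and follows the same route as the paper: both sandwich $\log\sum_i \exp(Tx_i)$ between $Tx^\ast$ and $Tx^\ast + \log m$, then divide by $T$ and subtract $\frac{\log m}{T}$ to get $x^\ast - \frac{\log m}{T} \le M_T(\mathbf{x}) \le x^\ast$. Your extra remarks, that the bound is attained at $\mathbf{x} = c\mathbf{1}$ and that the convergence is uniform in $\mathbf{x}$, are correct additions and not needed for the statement.
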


\begin{proof}
Let $x^* = \max_i x_i$. The standard log-sum-exp bound gives:
\begin{equation}
\label{eq:lse-bounds}
T x^* \;\leq\; \log\sum_{i=1}^m \exp(T x_i) \;\leq\; T x^* + \log m.
\end{equation}
\emph{Left inequality:} $\sum_i \exp(Tx_i) \geq \exp(Tx^*)$, so $\log\sum \geq Tx^*$.
\emph{Right inequality:} $\exp(Tx_i) \leq \exp(Tx^*)$ for all $i$, so $\sum_i \exp(Tx_i) \leq m \exp(Tx^*)$, giving $\log\sum \leq Tx^* + \log m$.
Dividing Eq.~\eqref{eq:lse-bounds} by $T$ and subtracting $\frac{\log m}{T}$:
\[
x^* - \frac{\log m}{T} \;\leq\; M_T(\mathbf{x}) \;\leq\; x^*.
\]
The right inequality gives $M_T(\mathbf{x}) \leq \max(\mathbf{x})$. The left inequality gives $M_T(\mathbf{x}) \geq \max(\mathbf{x}) - \frac{\log m}{T} \to \max(\mathbf{x})$ as $T \to \infty$.
\end{proof}

\begin{corollary}[End-to-end admissibility chain]
\label{cor:chain}
Replacing $\max$ with the smooth log-sum-exp $M_T$ in the compressed heuristic yields a smooth heuristic $\tilde{h}_A$ satisfying:
\begin{equation}
\tilde{h}_A(u,t) \;\leq\; \hA(u,t) \;\leq\; \hALT(u,t) \;\leq\; d(u,t).
\end{equation}
The first inequality is Theorem~\ref{thm:smooth}, the second is Proposition~\ref{thm:admissibility}, and the third is Theorem~\ref{thm:alt-admissibility}. In the static exact-label setting, this chain preserves a lower bound throughout training; it does not apply to the contextual smooth-min proxy of Section~\ref{sec:contextual}, where deployment-time admissibility on the predicted-cost graph is handled by re-running exact Dijkstra from the selected landmarks.
\end{corollary}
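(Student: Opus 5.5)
The plan is to compose three inequalities that are already available, and to check one structural point the statement leaves implicit: $\hA$ in Eq.~\eqref{eq:hit-directed} is built from \emph{nested} maxima. There is an inner max over compressed rows $i$, an outer max over the backward bound, the forward bound and $0$, and in the undirected case the absolute value $|x| = \max(x,-x)$. So ``replacing $\max$ with $M_T$'' may mean replacing any subset of these maxima. I will fix the most aggressive reading, in which every $\max$ is replaced by $M_T$ with its own arity, and prove the first inequality $\tilde h_A \le \hA$ for that reading. Any partial replacement is then a special case of the same argument.

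\textbf{Step 1 (monotonicity of $M_T$).} First I would record that $M_T$ is coordinatewise nondecreasing. Its partial derivatives are the softmax weights $\exp(Tx_i)/\sum_j \exp(Tx_j) \ge 0$; equivalently, $\log\sum\exp$ is monotone and the offset $\tfrac{\log m}{T}$ does not depend on $\mathbf{x}$. Combined with Theorem~\ref{thm:smooth}, this gives the key lemma: if $x_i \le z_i$ for all $i$, then
\[
M_T(\mathbf{x}) \le M_T(\mathbf{z}) \le \max(\mathbf{z}).
\]

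\textbf{Step 2 (first inequality by induction on nesting depth).} Let $E$ be the expression tree of $\hA$, with leaves $y^{\mathrm{fwd}}_i(t)-y^{\mathrm{fwd}}_i(u)$, $y^{\mathrm{bwd}}_i(u)-y^{\mathrm{bwd}}_i(t)$, the constant $0$, and $\pm(y_i(u)-y_i(t))$, and with internal nodes that are maxima. Let $\tilde E$ be the same tree with every internal $\max$ replaced by $M_T$. I would show by induction from the leaves upward that each node of $\tilde E$ is at most the corresponding node of $E$. Leaves are identical, so the base case is trivial. At an internal node the children satisfy the inequality by the induction hypothesis, and the lemma from Step~1 transfers it to the parent. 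At the root this gives $\tilde h_A(u,t) \le \hA(u,t)$ for every $T>0$ and every row-stochastic $A$.

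\textbf{Step 3 (closing the chain).} The second inequality $\hA \le \hALT$ is Proposition~\ref{thm:admissibility}. It holds for every row-stochastic $A$, including the soft Gumbel-softmax samples, not only the one-hot deployment matrices. The third inequality $\hALT \le d$ is Theorem~\ref{thm:alt-admissibility}. Chaining the three inequalities gives the displayed statement pointwise in $(u,t)$ and uniformly in $\theta$ and $T$. This is the sense in which the lower bound is preserved throughout training in the static exact-label setting, since the teacher labels entering $\hALT$ are exact Dijkstra distances.

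\textbf{Main obstacle and scope.} The only real care point is Step~2: Theorem~\ref{thm:smooth} alone bounds a single flat max, and it is monotonicity that makes the bound survive composition. I would also remark that $M_T$ applied to $(\cdot,\cdot,0)$ can be negative, because the clamp at $0$ is lost. This does not harm the upper-bound (admissibility) direction claimed, but it means $\tilde h_A$ need not be nonnegative. Finally, I would note the scope caveat in the statement's last sentence. The argument uses $d$ as the true metric of the graph on which the labels are exact, so it does not cover the contextual smooth-min proxy, whose labels are not exact distances. No additional proof is needed for that case, only the observation that the hypotheses of Theorem~\ref{thm:alt-admissibility} fail there.
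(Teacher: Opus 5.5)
Your proposal is correct and follows the paper's route. The paper's entire justification is the three-link citation chain (Theorem~\ref{thm:smooth}, Proposition~\ref{thm:admissibility}, Theorem~\ref{thm:alt-admissibility}), which is exactly your Step~3. Your Steps~1--2 add something the paper leaves implicit: Theorem~\ref{thm:smooth} only bounds a single flat $\max$, so coordinatewise monotonicity of $M_T$ is genuinely needed if the nested maxima of Eq.~\eqref{eq:hit-directed} (and the absolute value) are each smoothed separately. Flattening them into one $\max$ over all terms would instead let Theorem~\ref{thm:smooth} apply directly. Your remark that $\tilde h_A$ may be negative once the clamp at $0$ is smoothed is also correct, and it is harmless for admissibility.
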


\subsection*{Directed-Graph Extension: Derivation, Memory Accounting, and Covering Radius}
\label{app:directed}

This subsection collects the directed-graph extensions of the AAC architecture, the matched-memory accounting convention, and the covering-radius bound; the main-text statement is the compressed paragraph at the end of Section~\ref{sec:method}.

\textbf{(a)~Compressor.} For undirected graphs ($d_{\mathrm{out}} = d_{\mathrm{in}}$), a single matrix $A \in \R^{m\times K_0}$ acts on a single label tensor. For directed graphs, we instantiate two independent matrices $A_{\mathrm{fwd}}, A_{\mathrm{bwd}}$ of sizes $m_{\mathrm{fwd}}{\times}K_0$ and $m_{\mathrm{bwd}}{\times}K_0$ with $m_{\mathrm{fwd}} + m_{\mathrm{bwd}} = m$ (default $m_{\mathrm{fwd}} = \lfloor m/2 \rfloor$). Both matrices are row-stochastic; Proposition~\ref{thm:admissibility}(b) handles the directed case with the same convex-combination argument applied separately to the forward and backward bound types of Eq.~\eqref{eq:hit-directed}.

\textbf{(b)~Memory accounting.} On directed graphs ALT stores both $d_{\mathrm{out}}$ and $d_{\mathrm{in}}$ ($2K$ floats/vertex); on undirected graphs $d_{\mathrm{in}} \equiv d_{\mathrm{out}}$ and ALT stores only $K$ floats/vertex. Consequently the matched-deployed-label-memory rule at $B$ B/v is AAC $m{=}B/4$ vs.\ ALT $K{=}B/8$ (directed) and AAC $m{=}B/4$ vs.\ ALT $K{=}B/4$ (undirected). AAC stores $m$ floats per vertex in both cases; for directed graphs the budget is split $\lfloor m/2 \rfloor$ forward and $\lceil m/2 \rceil$ backward.

\textbf{(c)~Covering radius.} The covering-radius bound (Theorem~\ref{thm:covering-radius}) uses the symmetrized form $r_m^{\mathrm{sym}} := \max_{v \in V_{\mathrm{reach}}} \min_{l \in S} \max(d(l,v), d(v,l))$ on directed graphs and the standard $r_m$ on undirected graphs. The Gonzalez 2-approximation guarantee on $r_m$ is preserved under symmetrization because a 2-approximation of an upper-bound metric is still a 2-approximation of any quantity bounded above by that metric.

\textbf{(d)~Sentinel masking.} Sentinel masking (Section~\ref{sec:background}) removes landmarks with infinite distances from the $\max$ computation. This can only reduce the heuristic value (removing terms from a $\max$ cannot increase it), so the bound $d(u,t) - \hALT^S(u,t) \leq 2\,r_m^{\mathrm{sym}}$ is preserved over the reachable subset: the actual gap may exceed $2\,r_m^{\mathrm{sym}}$ only if $u$ or $t$ has no reachable landmark, in which case the heuristic falls back to $h = 0$ (Dijkstra behavior, documented in the reachability assumption of Section~\ref{sec:graphs-astar}).

\section{Extended Experiment Tables}
\label{app:extended-experiments}

This appendix collects per-budget detail tables that source the main-text headline (Table~\ref{tab:main-matched-memory}) and the additional robustness checks. The material is organized into three thematic groups: (B.1) per-graph headline detail tables that source the matched-memory comparison; (B.2) hybrid and admissibility-stress robustness checks; (B.3) the cost decomposition (query-distribution sensitivity, multi-axis cost, and amortized wall-clock cost).

\subsection*{B.1\quad Per-graph headline detail}
\label{app:headline-detail}

The next three tables source the main-text matched-memory result (Table~\ref{tab:main-matched-memory}) at a finer per-graph granularity. Table~\ref{tab:valsplit} reports the validation-split protocol on DIMACS NY/FLA and OSMnx Modena/Manhattan with a held-out test set. Table~\ref{tab:osmnx} is the per-budget summary across all five OSMnx city graphs. Table~\ref{tab:osmnx-full} is the full per-$(K_0, m, \mathrm{B/v})$ configuration sweep on the same five graphs.

\begin{table}[t]
\centering
\caption{\badgeMain \textbf{Validation-split protocol} addressing retrospective $K_0$-selection concerns. For each seed, 200 queries are split into a 100-query \emph{validation} set (used to pick the best AAC $K_0$) and a disjoint 100-query \emph{test} set (used for reporting). ALT has no hyperparameter to tune so val selection is vacuous. The \textbf{Test} columns are the reported numbers. The generalization gap (Val$-$Test) is small ($\leq 2.5$\,\% in the mean), confirming the protocol is not query-selecting. At every matched budget, ALT equals or exceeds val-selected AAC on test, consistent with the retrospective Tables~\ref{tab:main_results} and~\ref{tab:osmnx}. 5 seeds per row; ALT std is zero because FPS is deterministic given the LCC-anchor start vertex.}
\label{tab:valsplit}
\small
\begin{tabular}{l l r
    S[table-format=2.1(1)] S[table-format=2.1(1)] S[table-format=+1.1] l}
\toprule
\textbf{Graph} & \textbf{Method} & \textbf{B/v}
  & {\textbf{Val (\%)}} & {\textbf{Test (\%)}} & {\textbf{Gap (\%)}}
  & \textbf{$K_0$ picks} \\
\midrule
\multirow{6}{*}{NY}        & \cellcolor{aaccolor} AAC (val-sel.) & \cellcolor{aaccolor}  32 & \cellcolor{aaccolor} 83.2 \pm 2.2 & \cellcolor{aaccolor} 81.3 \pm 2.7         & \cellcolor{aaccolor} +1.9 & \cellcolor{aaccolor} \{32, 128\}     \\
                           & ALT                                 &  32                      & 85.1 \pm 0.0                      & \bfseries 85.7 \pm 0.0                    & -0.6                      & {---}                                \\
                           & \cellcolor{aaccolor} AAC (val-sel.) & \cellcolor{aaccolor}  64 & \cellcolor{aaccolor} 89.6 \pm 0.5 & \cellcolor{aaccolor} 88.9 \pm 1.2         & \cellcolor{aaccolor} +0.7 & \cellcolor{aaccolor} \{64, 128\}     \\
                           & ALT                                 &  64                      & 91.2 \pm 0.0                      & \bfseries 91.1 \pm 0.0                    & +0.1                      & {---}                                \\
                           & \cellcolor{aaccolor} AAC (val-sel.) & \cellcolor{aaccolor} 128 & \cellcolor{aaccolor} 92.9 \pm 0.3 & \cellcolor{aaccolor} 92.6 \pm 0.3         & \cellcolor{aaccolor} +0.3 & \cellcolor{aaccolor} \{32, 128\}     \\
                           & ALT                                 & 128                      & 93.7 \pm 0.0                      & \bfseries 93.7 \pm 0.0                    & +0.0                      & {---}                                \\
\midrule
\multirow{6}{*}{FLA}       & \cellcolor{aaccolor} AAC (val-sel.) & \cellcolor{aaccolor}  32 & \cellcolor{aaccolor} 78.8 \pm 2.1 & \cellcolor{aaccolor} 76.3 \pm 3.7         & \cellcolor{aaccolor} +2.5 & \cellcolor{aaccolor} \{32, 64, 128\} \\
                           & ALT                                 &  32                      & 85.7 \pm 0.0                      & \bfseries 84.5 \pm 0.0                    & +1.1                      & {---}                                \\
                           & \cellcolor{aaccolor} AAC (val-sel.) & \cellcolor{aaccolor}  64 & \cellcolor{aaccolor} 86.7 \pm 1.6 & \cellcolor{aaccolor} 86.2 \pm 1.3         & \cellcolor{aaccolor} +0.5 & \cellcolor{aaccolor} \{32, 64\}      \\
                           & ALT                                 &  64                      & 88.5 \pm 0.0                      & \bfseries 87.6 \pm 0.0                    & +0.8                      & {---}                                \\
                           & \cellcolor{aaccolor} AAC (val-sel.) & \cellcolor{aaccolor} 128 & \cellcolor{aaccolor} 92.4 \pm 0.8 & \cellcolor{aaccolor} 91.8 \pm 0.7         & \cellcolor{aaccolor} +0.6 & \cellcolor{aaccolor} \{32, 64\}      \\
                           & ALT                                 & 128                      & 92.8 \pm 0.0                      & \bfseries 91.8 \pm 0.0                    & +1.1                      & {---}                                \\
\midrule
\multirow{6}{*}{Modena}    & \cellcolor{aaccolor} AAC (val-sel.) & \cellcolor{aaccolor}  32 & \cellcolor{aaccolor} 82.5 \pm 1.5 & \cellcolor{aaccolor} 81.3 \pm 1.6         & \cellcolor{aaccolor} +1.2 & \cellcolor{aaccolor} \{32, 64, 128\} \\
                           & ALT                                 &  32                      & 83.4 \pm 0.0                      & \bfseries 82.8 \pm 0.0                    & +0.5                      & {---}                                \\
                           & \cellcolor{aaccolor} AAC (val-sel.) & \cellcolor{aaccolor}  64 & \cellcolor{aaccolor} 88.5 \pm 1.3 & \cellcolor{aaccolor} 87.7 \pm 1.5         & \cellcolor{aaccolor} +0.8 & \cellcolor{aaccolor} \{32, 64, 128\} \\
                           & ALT                                 &  64                      & 91.2 \pm 0.0                      & \bfseries 89.3 \pm 0.0                    & +1.9                      & {---}                                \\
                           & \cellcolor{aaccolor} AAC (val-sel.) & \cellcolor{aaccolor} 128 & \cellcolor{aaccolor} 93.0 \pm 0.7 & \cellcolor{aaccolor} \bfseries 91.7 \pm 0.6 & \cellcolor{aaccolor} +1.3 & \cellcolor{aaccolor} \{32, 64\}      \\
                           & ALT                                 & 128                      & 92.8 \pm 0.0                      & 91.5 \pm 0.0                              & +1.4                      & {---}                                \\
\midrule
\multirow{6}{*}{Manhattan} & \cellcolor{aaccolor} AAC (val-sel.) & \cellcolor{aaccolor}  32 & \cellcolor{aaccolor} 81.2 \pm 2.9 & \cellcolor{aaccolor} 80.1 \pm 3.5         & \cellcolor{aaccolor} +1.1 & \cellcolor{aaccolor} \{32, 64\}      \\
                           & ALT                                 &  32                      & 83.9 \pm 0.0                      & \bfseries 84.8 \pm 0.0                    & -0.9                      & {---}                                \\
                           & \cellcolor{aaccolor} AAC (val-sel.) & \cellcolor{aaccolor}  64 & \cellcolor{aaccolor} 88.4 \pm 1.1 & \cellcolor{aaccolor} 87.6 \pm 1.0         & \cellcolor{aaccolor} +0.8 & \cellcolor{aaccolor} \{32, 64, 128\} \\
                           & ALT                                 &  64                      & 90.4 \pm 0.0                      & \bfseries 90.3 \pm 0.0                    & +0.2                      & {---}                                \\
                           & \cellcolor{aaccolor} AAC (val-sel.) & \cellcolor{aaccolor} 128 & \cellcolor{aaccolor} 92.3 \pm 0.4 & \cellcolor{aaccolor} 91.3 \pm 0.4         & \cellcolor{aaccolor} +1.0 & \cellcolor{aaccolor} \{32, 64, 128\} \\
                           & ALT                                 & 128                      & 92.1 \pm 0.0                      & \bfseries 91.7 \pm 0.0                    & +0.4                      & {---}                                \\
\bottomrule
\end{tabular}
\end{table}

\begin{table}[t]
\centering
\caption{\badgeAbl OSMnx road network results across scales: expansion reduction (\%) at matched memory budgets.
AAC rows show the best $K_0$ per budget (descriptive Pareto, same protocol as Table~\ref{tab:main_results}).
Bold: best admissible method per budget per graph.
All configurations verified at 0 admissibility violations.
Mean $\pm$ std over 5 seeds (3 for Netherlands).
\emph{Cross-table note:} AAC cells use a per-budget best-$K_0$ retrospective protocol; values for the same nominal $(K_0, m)$ may differ from Tables~\ref{tab:selection-ablation} and~\ref{tab:compression-curve} by training stochasticity within the reported standard deviation (see Section~\ref{sec:setup}, ``Cross-table protocol'').}
\label{tab:osmnx}
\small
\setlength{\tabcolsep}{5pt}
\begin{tabular}{@{}>{\raggedright\arraybackslash}p{2.35cm} l r
    S[table-format=2.1(1)] S[table-format=2.1(1)] S[table-format=2.1(1)]@{}}
\toprule
\textbf{Scale / Graph} & \textbf{Method} & \textbf{Nodes}
  & {\textbf{32 B/v}} & {\textbf{64 B/v}} & {\textbf{128 B/v}} \\
\midrule
\multicolumn{6}{@{}l}{\textit{Small-scale}} \\
\midrule
\multirow{2}{*}{Modena}      & \cellcolor{aaccolor} AAC & \cellcolor{aaccolor} 30K  & \cellcolor{aaccolor} 80.7 \pm 2.9 & \cellcolor{aaccolor} 87.7 \pm 1.2 & \cellcolor{aaccolor} 91.9 \pm 0.7 \\
                             & ALT                      & 30K                       & \bfseries 83.4 \pm 0.0            & \bfseries 91.2 \pm 0.0            & \bfseries 92.8 \pm 0.0            \\
\midrule
\multirow{2}{*}{Manhattan}   & \cellcolor{aaccolor} AAC & \cellcolor{aaccolor} 4.6K & \cellcolor{aaccolor} 79.2 \pm 4.7 & \cellcolor{aaccolor} 88.1 \pm 0.6 & \cellcolor{aaccolor} \bfseries 92.2 \pm 0.2 \\
                             & ALT                      & 4.6K                      & \bfseries 83.9 \pm 0.0            & \bfseries 90.4 \pm 0.0            & 92.1 \pm 0.0                      \\
\midrule
\multicolumn{6}{@{}l}{\textit{City-scale}} \\
\midrule
\multirow{2}{*}{Berlin}      & \cellcolor{aaccolor} AAC & \cellcolor{aaccolor} 28K  & \cellcolor{aaccolor} 83.7 \pm 2.6 & \cellcolor{aaccolor} 91.3 \pm 0.7 & \cellcolor{aaccolor} 94.2 \pm 0.4 \\
                             & ALT                      & 28K                       & \bfseries 86.5 \pm 0.0            & \bfseries 92.8 \pm 0.0            & \bfseries 95.1 \pm 0.0            \\
\midrule
\multirow{2}{*}{Los Angeles} & \cellcolor{aaccolor} AAC & \cellcolor{aaccolor} 50K  & \cellcolor{aaccolor} 79.3 \pm 3.5 & \cellcolor{aaccolor} 87.5 \pm 2.1 & \cellcolor{aaccolor} 92.1 \pm 1.4 \\
                             & ALT                      & 50K                       & \bfseries 85.4 \pm 0.0            & \bfseries 91.1 \pm 0.0            & \bfseries 94.2 \pm 0.0            \\
\midrule
\multicolumn{6}{@{}l}{\textit{Country-scale}} \\
\midrule
\multirow{2}{*}{Netherlands} & \cellcolor{aaccolor} AAC & \cellcolor{aaccolor} 4.5M & \cellcolor{aaccolor} 79.9 \pm 1.7 & \cellcolor{aaccolor} 87.3 \pm 0.4 & \cellcolor{aaccolor} 91.9 \pm 1.4 \\
                             & ALT                      & 4.5M                      & \bfseries 86.9 \pm 0.0            & \bfseries 91.2 \pm 0.0            & \bfseries 94.2 \pm 0.0            \\
\bottomrule
\end{tabular}
\par\vspace{2pt}
{\footnotesize Manhattan-128 AAC ($92.2$\,\%) is bolded only because AAC numerically exceeds ALT ($92.1$\,\%); this is a pool-size effect at $m{\approx}K_0/2$ (Section~\ref{sec:theory-coverage}). Under the val-split protocol of Table~\ref{tab:valsplit} ALT leads at this cell ($91.7$\,\% vs.\ $91.3$\,\%), so the crossover is retrospective best-$K_0$ only.}
\end{table}

\begin{table*}[h!]
\centering
\caption{\badgeAbl Full per-configuration OSMnx results. Mean $\pm$ std over 5 seeds (3 for Netherlands). Bold: best reduction per (graph, memory budget). All configurations verified at 0 admissibility violations.}
\label{tab:osmnx-full}
\small
\begin{tabular}{l l c c r S[table-format=2.1(1)] S[table-format=4.0]}
\toprule
\textbf{Graph} & \textbf{Method} & $K_0$ & $m$ & \textbf{B/v}
  & {\textbf{Reduction (\%)}} & {\textbf{Latency (ms)}} \\
\midrule
\multirow{5}{*}{Manhattan (4.6K)}    & \cellcolor{aaccolor} AAC & \cellcolor{aaccolor} 16  & \cellcolor{aaccolor}  8 & \cellcolor{aaccolor}  32 & \cellcolor{aaccolor} 79.2 \pm 4.7         & \cellcolor{aaccolor}    4 \\
                                     & \cellcolor{aaccolor} AAC & \cellcolor{aaccolor} 32  & \cellcolor{aaccolor}  8 & \cellcolor{aaccolor}  32 & \cellcolor{aaccolor} 77.2 \pm 6.6         & \cellcolor{aaccolor}    5 \\
                                     & \cellcolor{aaccolor} AAC & \cellcolor{aaccolor} 32  & \cellcolor{aaccolor} 16 & \cellcolor{aaccolor}  64 & \cellcolor{aaccolor} 88.1 \pm 0.6         & \cellcolor{aaccolor}    3 \\
                                     & \cellcolor{aaccolor} AAC & \cellcolor{aaccolor} 64  & \cellcolor{aaccolor} 16 & \cellcolor{aaccolor}  64 & \cellcolor{aaccolor} 87.4 \pm 1.3         & \cellcolor{aaccolor}    3 \\
                                     & \cellcolor{aaccolor} AAC & \cellcolor{aaccolor} 64  & \cellcolor{aaccolor} 32 & \cellcolor{aaccolor} 128 & \cellcolor{aaccolor} \bfseries 92.2 \pm 0.2 & \cellcolor{aaccolor}    2 \\
                                     & ALT                      & {--}                     & {--}                    &  32                      & \bfseries 83.9 \pm 0.0                    &    3 \\
                                     & ALT                      & {--}                     & {--}                    &  64                      & \bfseries 90.4 \pm 0.0                    &    2 \\
                                     & ALT                      & {--}                     & {--}                    & 128                      & 92.1 \pm 0.0                              &    2 \\
                                     & ALT                      & {--}                     & {--}                    & 256                      & \bfseries 93.5 \pm 0.0                    &    2 \\
\midrule
\multirow{5}{*}{Modena (30K)}        & \cellcolor{aaccolor} AAC & \cellcolor{aaccolor} 16  & \cellcolor{aaccolor}  8 & \cellcolor{aaccolor}  32 & \cellcolor{aaccolor} 80.7 \pm 2.9         & \cellcolor{aaccolor}   26 \\
                                     & \cellcolor{aaccolor} AAC & \cellcolor{aaccolor} 32  & \cellcolor{aaccolor}  8 & \cellcolor{aaccolor}  32 & \cellcolor{aaccolor} 80.1 \pm 4.7         & \cellcolor{aaccolor}   27 \\
                                     & \cellcolor{aaccolor} AAC & \cellcolor{aaccolor} 32  & \cellcolor{aaccolor} 16 & \cellcolor{aaccolor}  64 & \cellcolor{aaccolor} 87.7 \pm 1.2         & \cellcolor{aaccolor}   18 \\
                                     & \cellcolor{aaccolor} AAC & \cellcolor{aaccolor} 64  & \cellcolor{aaccolor} 16 & \cellcolor{aaccolor}  64 & \cellcolor{aaccolor} 84.9 \pm 1.8         & \cellcolor{aaccolor}   22 \\
                                     & \cellcolor{aaccolor} AAC & \cellcolor{aaccolor} 64  & \cellcolor{aaccolor} 32 & \cellcolor{aaccolor} 128 & \cellcolor{aaccolor} 91.9 \pm 0.7         & \cellcolor{aaccolor}   13 \\
                                     & ALT                      & {--}                     & {--}                    &  32                      & \bfseries 83.4 \pm 0.0                    &   22 \\
                                     & ALT                      & {--}                     & {--}                    &  64                      & \bfseries 91.2 \pm 0.0                    &   13 \\
                                     & ALT                      & {--}                     & {--}                    & 128                      & \bfseries 92.8 \pm 0.0                    &   11 \\
                                     & ALT                      & {--}                     & {--}                    & 256                      & \bfseries 94.4 \pm 0.0                    &    9 \\
\midrule
\multirow{5}{*}{Berlin (28K)}        & \cellcolor{aaccolor} AAC & \cellcolor{aaccolor} 16  & \cellcolor{aaccolor}  8 & \cellcolor{aaccolor}  32 & \cellcolor{aaccolor} 81.3 \pm 5.2         & \cellcolor{aaccolor}   33 \\
                                     & \cellcolor{aaccolor} AAC & \cellcolor{aaccolor} 32  & \cellcolor{aaccolor}  8 & \cellcolor{aaccolor}  32 & \cellcolor{aaccolor} 83.7 \pm 2.6         & \cellcolor{aaccolor}   36 \\
                                     & \cellcolor{aaccolor} AAC & \cellcolor{aaccolor} 32  & \cellcolor{aaccolor} 16 & \cellcolor{aaccolor}  64 & \cellcolor{aaccolor} 91.3 \pm 0.7         & \cellcolor{aaccolor}   21 \\
                                     & \cellcolor{aaccolor} AAC & \cellcolor{aaccolor} 64  & \cellcolor{aaccolor} 16 & \cellcolor{aaccolor}  64 & \cellcolor{aaccolor} 89.8 \pm 0.9         & \cellcolor{aaccolor}   22 \\
                                     & \cellcolor{aaccolor} AAC & \cellcolor{aaccolor} 64  & \cellcolor{aaccolor} 32 & \cellcolor{aaccolor} 128 & \cellcolor{aaccolor} 94.2 \pm 0.4         & \cellcolor{aaccolor}   17 \\
                                     & ALT                      & {--}                     & {--}                    &  32                      & \bfseries 86.5 \pm 0.0                    &   22 \\
                                     & ALT                      & {--}                     & {--}                    &  64                      & \bfseries 92.8 \pm 0.0                    &   17 \\
                                     & ALT                      & {--}                     & {--}                    & 128                      & \bfseries 95.1 \pm 0.0                    &   14 \\
                                     & ALT                      & {--}                     & {--}                    & 256                      & \bfseries 96.3 \pm 0.0                    &   11 \\
\midrule
\multirow{5}{*}{Los Angeles (50K)}   & \cellcolor{aaccolor} AAC & \cellcolor{aaccolor} 16  & \cellcolor{aaccolor}  8 & \cellcolor{aaccolor}  32 & \cellcolor{aaccolor} 79.1 \pm 4.7         & \cellcolor{aaccolor}   77 \\
                                     & \cellcolor{aaccolor} AAC & \cellcolor{aaccolor} 32  & \cellcolor{aaccolor}  8 & \cellcolor{aaccolor}  32 & \cellcolor{aaccolor} 79.3 \pm 3.5         & \cellcolor{aaccolor}   75 \\
                                     & \cellcolor{aaccolor} AAC & \cellcolor{aaccolor} 32  & \cellcolor{aaccolor} 16 & \cellcolor{aaccolor}  64 & \cellcolor{aaccolor} 87.5 \pm 2.1         & \cellcolor{aaccolor}   46 \\
                                     & \cellcolor{aaccolor} AAC & \cellcolor{aaccolor} 64  & \cellcolor{aaccolor} 16 & \cellcolor{aaccolor}  64 & \cellcolor{aaccolor} 86.9 \pm 0.9         & \cellcolor{aaccolor}   47 \\
                                     & \cellcolor{aaccolor} AAC & \cellcolor{aaccolor} 64  & \cellcolor{aaccolor} 32 & \cellcolor{aaccolor} 128 & \cellcolor{aaccolor} 92.1 \pm 1.4         & \cellcolor{aaccolor}   38 \\
                                     & ALT                      & {--}                     & {--}                    &  32                      & \bfseries 85.4 \pm 0.0                    &   54 \\
                                     & ALT                      & {--}                     & {--}                    &  64                      & \bfseries 91.1 \pm 0.0                    &   39 \\
                                     & ALT                      & {--}                     & {--}                    & 128                      & \bfseries 94.2 \pm 0.0                    &   31 \\
                                     & ALT                      & {--}                     & {--}                    & 256                      & \bfseries 95.3 \pm 0.0                    &   27 \\
\midrule
\multirow{5}{*}{Netherlands (4.5M)}  & \cellcolor{aaccolor} AAC & \cellcolor{aaccolor} 16  & \cellcolor{aaccolor}  8 & \cellcolor{aaccolor}  32 & \cellcolor{aaccolor} 79.9 \pm 1.7         & \cellcolor{aaccolor} 3652 \\
                                     & \cellcolor{aaccolor} AAC & \cellcolor{aaccolor} 32  & \cellcolor{aaccolor}  8 & \cellcolor{aaccolor}  32 & \cellcolor{aaccolor} 71.4 \pm 3.9         & \cellcolor{aaccolor} 5527 \\
                                     & \cellcolor{aaccolor} AAC & \cellcolor{aaccolor} 32  & \cellcolor{aaccolor} 16 & \cellcolor{aaccolor}  64 & \cellcolor{aaccolor} 87.3 \pm 0.4         & \cellcolor{aaccolor} 2393 \\
                                     & \cellcolor{aaccolor} AAC & \cellcolor{aaccolor} 64  & \cellcolor{aaccolor} 16 & \cellcolor{aaccolor}  64 & \cellcolor{aaccolor} 86.8 \pm 0.7         & \cellcolor{aaccolor} 2488 \\
                                     & \cellcolor{aaccolor} AAC & \cellcolor{aaccolor} 64  & \cellcolor{aaccolor} 32 & \cellcolor{aaccolor} 128 & \cellcolor{aaccolor} 91.9 \pm 1.4         & \cellcolor{aaccolor} 1560 \\
                                     & ALT                      & {--}                     & {--}                    &  32                      & \bfseries 86.9 \pm 0.0                    & 1324 \\
                                     & ALT                      & {--}                     & {--}                    &  64                      & \bfseries 91.2 \pm 0.0                    &  919 \\
                                     & ALT                      & {--}                     & {--}                    & 128                      & \bfseries 94.2 \pm 0.0                    &  680 \\
                                     & ALT                      & {--}                     & {--}                    & 256                      & \bfseries 95.9 \pm 0.0                    &  495 \\
\bottomrule
\end{tabular}
\end{table*}

\subsection*{B.2\quad Hybrid and admissibility-stress robustness}
\label{app:hybrid-admissibility}

This subsection documents the robustness checks that test what happens at the boundaries of the matched-memory protocol. Table~\ref{tab:matched-hybrid} reports the matched-memory hybrid $\max(h_{\mathrm{AAC}}, h_{\mathrm{ALT}})$ on the non-road graphs (SBM, BA, OGB-arXiv) at the per-graph-type fair accounting of Section~\ref{sec:methodological-protocol}. Table~\ref{tab:hybrid} reports the doubled-memory pointwise-max hybrid on the directed road networks: a negative result at the same total budget, where pure ALT outperforms the AAC+ALT combination. Table~\ref{tab:admissibility-early-stopping} documents the admissibility verification under early stopping, confirming that the architectural admissibility guarantee survives every checkpoint we tested.

\begin{table}[t]
\centering
\caption{\badgeMain Matched-memory hybrid comparison on non-road graphs under the corrected per-graph-type memory accounting of Section~\ref{sec:setup}. All graphs are undirected: pure AAC uses $m{=}B/4$ with $K_0{=}4m$; pure ALT uses $K{=}B/4$; hybrid-half splits the budget equally ($m{=}B/8$ AAC + $K{=}B/8$ ALT, pointwise max). Greedy-Max is a query-adaptive covering oracle: greedily picks $m$ landmarks from the same $K_0{=}4m$ FPS pool to maximize the average ALT bound over the test queries --- not deployable (peeks at queries), serves as an upper bound on any selection-based non-learned FPS-pool strategy. The doubled-memory $\max(h_{\text{AAC}},h_{\text{ALT}})$ runs both at deployment ($2B$ B/v total, $2{\times}$ compute) and is therefore not a matched-budget arm. Mean expansion reduction vs.\ Dijkstra (\%); 5 seeds $\times$ 100 queries per cell; ALT std is $0.00$ (deterministic FPS). Discussion of the per-cell ordering and Greedy-Max headroom in Section~\ref{sec:matched-hybrid}.}
\label{tab:matched-hybrid}
\footnotesize
\setlength{\tabcolsep}{3pt}
\begin{tabular}{@{}l r r
    S[table-format=2.2(2)] S[table-format=2.2]
    S[table-format=2.2(2)] S[table-format=2.2(2)] S[table-format=2.2(2)]@{}}
\toprule
\textbf{Graph} & \textbf{$B$ (B/v)} & \textbf{Dij.\ exp.}
  & {\textbf{Pure AAC}}      & {\textbf{Greedy-Max}}
  & {\textbf{Pure ALT}}      & {\textbf{Hybrid-half}}
  & {\textbf{$\max(h_{\mathrm{AAC}}, h_{\mathrm{ALT}})$}} \\
\midrule
\multirow{3}{*}{SBM 10k}        &  32 & 5{,}330  & \cellcolor{aaccolor} 88.69 \pm 0.54 & 90.80 & \bfseries 89.95 \pm 0.00 & 88.64 \pm 0.20 & 91.45 \pm 0.19 \\
                                &  64 & 5{,}330  & \cellcolor{aaccolor} 94.02 \pm 0.48 & 95.04 & \bfseries 94.52 \pm 0.00 & 93.57 \pm 0.17 & 95.47 \pm 0.16 \\
                                & 128 & 5{,}330  & \cellcolor{aaccolor} 96.92 \pm 0.12 & 97.45 & \bfseries 97.04 \pm 0.00 & 96.69 \pm 0.15 & 97.53 \pm 0.09 \\
\midrule
\multirow{3}{*}{BA 10k}         &  32 & 4{,}764  & \cellcolor{aaccolor} 88.56 \pm 0.78 & 89.47 & \bfseries 89.90 \pm 0.00 & 89.20 \pm 0.18 & 91.10 \pm 0.26 \\
                                &  64 & 4{,}764  & \cellcolor{aaccolor} 93.54 \pm 0.19 & 94.21 & \bfseries 94.25 \pm 0.00 & 93.66 \pm 0.17 & 94.90 \pm 0.08 \\
                                & 128 & 4{,}764  & \cellcolor{aaccolor} 96.56 \pm 0.12 & 96.85 & \bfseries 96.93 \pm 0.00 & 96.35 \pm 0.07 & 97.28 \pm 0.04 \\
\midrule
\multirow{3}{*}{OGB-arXiv 169k} &  32 & 78{,}617 & \cellcolor{aaccolor} 85.02 \pm 1.42 & 88.69 & \bfseries 86.15 \pm 0.00 & 84.42 \pm 0.82 & 88.23 \pm 0.50 \\
                                &  64 & 78{,}617 & \cellcolor{aaccolor} \bfseries 92.54 \pm 0.54 & 93.17 & 91.33 \pm 0.00 & 90.92 \pm 0.53 & 93.37 \pm 0.37 \\
                                & 128 & 78{,}617 & \cellcolor{aaccolor} \bfseries 96.03 \pm 0.20 & 96.66 & 95.16 \pm 0.00 & 95.03 \pm 0.22 & 96.30 \pm 0.14 \\
\bottomrule
\end{tabular}
\end{table}

\begin{table}[t]
\centering
\caption{\badgeAbl Hybrid $\max(h_{\text{AAC}}, h_{\text{ALT}})$ evaluation: expansion reduction (\%) on directed road networks. AAC is trained from a $K_0$ teacher pool ($K_0{=}16$ for $m{=}8$, $K_0{=}32$ for $m{=}16$, $K_0{=}64$ for $m{=}32$). The hybrid is admissible by construction (pointwise max of two admissible heuristics). Budget is the sum of AAC and ALT storage. Rows within each graph are sorted by total budget (B/v) to expose matched-budget comparisons. \textbf{Bold marks the best Exp.\ Red.\ per matched-budget group} (Std is reported for variance context only and is never bolded). At every matched budget, pure ALT outperforms the hybrid AAC+ALT combination: a clean negative result, where adding AAC to ALT at matched budget does \emph{not} beat spending the same bytes on more ALT landmarks. Mean $\pm$ std over 5 seeds.}
\label{tab:hybrid}
\small
\begin{tabular}{l l c S[table-format=2.1] S[table-format=1.1]}
\toprule
\textbf{Graph} & \textbf{Method} & \textbf{B/v}
  & {\textbf{Exp.\ Red.\ (\%)}} & {\textbf{Std}} \\
\midrule
\multirow{10}{*}{Modena}    & \cellcolor{aaccolor} AAC ($m{=}8$)        & \cellcolor{aaccolor}  32 & \cellcolor{aaccolor} 80.7 & \cellcolor{aaccolor} 2.9 \\
                            & ALT ($K{=}4$)                             &  32                      & \bfseries 83.4            & 0.0                      \\
\cmidrule{2-5}
                            & \cellcolor{aaccolor} AAC ($m{=}16$)       & \cellcolor{aaccolor}  64 & \cellcolor{aaccolor} 86.8 & \cellcolor{aaccolor} 1.9 \\
                            & AAC+ALT ($m{=}8, K{=}4$)                  &  64                      & 87.6                      & 1.0                      \\
                            & ALT ($K{=}8$)                             &  64                      & \bfseries 91.2            & 0.0                      \\
\cmidrule{2-5}
                            & AAC+ALT ($m{=}16, K{=}4$)                 &  96                      & 89.0                      & 1.3                      \\
                            & ALT ($K{=}12$)                            &  96                      & \bfseries 92.5            & 0.0                      \\
\cmidrule{2-5}
                            & \cellcolor{aaccolor} AAC ($m{=}32$)       & \cellcolor{aaccolor} 128 & \cellcolor{aaccolor} 91.4 & \cellcolor{aaccolor} 1.0 \\
                            & ALT ($K{=}16$)                            & 128                      & \bfseries 92.8            & 0.0                      \\
\midrule
\multirow{10}{*}{Manhattan} & \cellcolor{aaccolor} AAC ($m{=}8$)        & \cellcolor{aaccolor}  32 & \cellcolor{aaccolor} 79.2 & \cellcolor{aaccolor} 4.7 \\
                            & ALT ($K{=}4$)                             &  32                      & \bfseries 83.9            & 0.0                      \\
\cmidrule{2-5}
                            & \cellcolor{aaccolor} AAC ($m{=}16$)       & \cellcolor{aaccolor}  64 & \cellcolor{aaccolor} 86.8 & \cellcolor{aaccolor} 1.7 \\
                            & AAC+ALT ($m{=}8, K{=}4$)                  &  64                      & 87.8                      & 1.4                      \\
                            & ALT ($K{=}8$)                             &  64                      & \bfseries 90.4            & 0.0                      \\
\cmidrule{2-5}
                            & AAC+ALT ($m{=}16, K{=}4$)                 &  96                      & 90.7                      & 0.8                      \\
                            & ALT ($K{=}12$)                            &  96                      & \bfseries 91.6            & 0.0                      \\
\cmidrule{2-5}
                            & \cellcolor{aaccolor} AAC ($m{=}32$)       & \cellcolor{aaccolor} 128 & \cellcolor{aaccolor} 91.6 & \cellcolor{aaccolor} 1.2 \\
                            & ALT ($K{=}16$)                            & 128                      & \bfseries 92.1            & 0.0                      \\
\midrule
\multirow{10}{*}{NY}        & \cellcolor{aaccolor} AAC ($m{=}8$)        & \cellcolor{aaccolor}  32 & \cellcolor{aaccolor} 78.4 & \cellcolor{aaccolor} 5.5 \\
                            & ALT ($K{=}4$)                             &  32                      & \bfseries 85.1            & 0.0                      \\
\cmidrule{2-5}
                            & \cellcolor{aaccolor} AAC ($m{=}16$)       & \cellcolor{aaccolor}  64 & \cellcolor{aaccolor} 89.2 & \cellcolor{aaccolor} 0.6 \\
                            & AAC+ALT ($m{=}8, K{=}4$)                  &  64                      & 89.2                      & 1.0                      \\
                            & ALT ($K{=}8$)                             &  64                      & \bfseries 91.2            & 0.0                      \\
\cmidrule{2-5}
                            & AAC+ALT ($m{=}16, K{=}4$)                 &  96                      & 91.3                      & 0.6                      \\
                            & ALT ($K{=}12$)                            &  96                      & \bfseries 93.1            & 0.0                      \\
\cmidrule{2-5}
                            & \cellcolor{aaccolor} AAC ($m{=}32$)       & \cellcolor{aaccolor} 128 & \cellcolor{aaccolor} 92.8 & \cellcolor{aaccolor} 0.5 \\
                            & ALT ($K{=}16$)                            & 128                      & \bfseries 93.7            & 0.0                      \\
\bottomrule
\end{tabular}
\end{table}

\begin{table}[ht!]
\centering
\caption{\badgeAbl \textbf{Admissibility under early stopping} (Modena, $K_0{=}64$, $m{=}16$). AAC is trained to epoch checkpoints $\{1, 5, 10, 50, 200\}$ and evaluated on 100 queries (5 seeds). Admissibility violations are counted by comparing $h(u,t)$ against Dijkstra-computed $d(u,t)$ for all query pairs. Zero violations are observed at \emph{every} checkpoint, confirming the architectural guarantee empirically: even a 1-epoch model is provably safe. Performance is strong from epoch~1 ($87.2\%$ reduction), peaks at epoch~10 ($88.0\%$), and remains stable through epoch~200 ($86.2\%$).}
\label{tab:admissibility-early-stopping}
\small
\begin{tabular}{r S[table-format=2.1(1)] c l}
\toprule
\textbf{Epochs} & {\textbf{Exp.\ Red.\ (\%)} $\uparrow$}
  & \textbf{Violations} & \textbf{Status} \\
\midrule
\rowcolor{aaccolor}   1 & 87.2 \pm 1.8 & 0 & \checkmark \\
\rowcolor{aaccolor}   5 & 84.8 \pm 3.0 & 0 & \checkmark \\
\rowcolor{aaccolor}  10 & 88.0 \pm 1.2 & 0 & \checkmark \\
\rowcolor{aaccolor}  50 & 85.5 \pm 3.1 & 0 & \checkmark \\
\rowcolor{aaccolor} 200 & 86.2 \pm 2.4 & 0 & \checkmark \\
\bottomrule
\end{tabular}
\end{table}

\subsection*{B.3\quad Query-Distribution Sensitivity, Multi-Axis Cost, and Amortized Cost}
\label{app:query-distribution-cost}
\label{app:query-distribution}
\label{app:amortized-cost}

This subsection consolidates the three robustness checks on cost: how the matched-memory comparison behaves under non-uniform query distributions (Table~\ref{tab:query-distribution}), how the three cost axes (online memory, offline preprocessing, query latency) decompose at the canonical $B{=}64$\,B/v configuration (Table~\ref{tab:multi-axis-cost}), and at what query workload the larger AAC offline cost is amortized by its lower per-query latency (Figure~\ref{fig:amortized-cost}).

\paragraph{Query-distribution sensitivity.} A natural hypothesis is that learned selection should outperform FPS when queries cluster in specific regions, concentrating landmarks where they are most needed. We test this with three query modes: \emph{uniform} (i.i.d.\ from the largest connected component), \emph{hotspot} (90\% of endpoints drawn from a small high-degree cluster), and \emph{powerlaw} (endpoints drawn proportional to degree raised to exponent 1.5). Table~\ref{tab:query-distribution} reports results on NY (264K nodes, DIMACS) and Manhattan (4.6K nodes, OSMnx) at 64~B/v, 5 seeds, 100 queries per seed. ALT maintains its advantage under all three query distributions on both graphs. Under hotspot queries, AAC's seed variance increases substantially (NY: $\pm 6.1\%$ hotspot vs.\ $\pm 1.8\%$ uniform), indicating sensitivity to the interaction between learned landmark placement and query concentration. The Hybrid $\max(h_{\text{AAC}}, h_{\text{ALT}})$ is the top performer across all six distribution-graph combinations in Table~\ref{tab:query-distribution}, but at doubled memory (128~B/v vs.\ 64~B/v for the pure AAC and ALT columns); at matched memory, pure ALT at 128~B/v still leads (see Table~\ref{tab:hybrid}). The cross-distribution pattern suggests that learned and FPS-based landmarks capture complementary spatial structure, with the hybrid's stability advantage most useful as a variance-reduction wrapper around AAC rather than as a budget-for-budget replacement for ALT.

\begin{table}[t]
\centering
\caption{\badgeAbl Expansion reduction (\%) across query distribution modes on NY (264K nodes, DIMACS) and Manhattan (4.6K nodes, OSMnx) at 64~B/v. Hybrid uses 128~B/v (sum of AAC + ALT budgets). Mean $\pm$ std over 5 seeds, 100 queries per seed. ALT leads AAC on all modes; the Hybrid $\max(h_{\text{AAC}}, h_{\text{ALT}})$ is the top performer across all distributions \emph{but at doubled memory ($2B$ B/v)} --- at matched memory (Hybrid 64\,B/v vs.\ pure ALT 128\,B/v), pure ALT still leads on the road graphs (see Table~\ref{tab:hybrid} for the matched-memory road-graph hybrid comparison; Table~\ref{tab:matched-hybrid} covers the non-road analogue on SBM/BA/OGB-arXiv).}
\label{tab:query-distribution}
\small
\begin{tabular}{l l
    S[table-format=2.1(1)] S[table-format=2.1(1)] S[table-format=2.1(1)]}
\toprule
\textbf{Graph} & \textbf{Mode}
  & {\textbf{AAC}} & {\textbf{ALT}} & {\textbf{Hybrid}} \\
\midrule
\multirow{3}{*}{NY}        & Uniform  & \cellcolor{aaccolor} 87.4 \pm 1.8 & 91.6 \pm 0.3 & \bfseries 93.4 \pm 0.3 \\
                           & Hotspot  & \cellcolor{aaccolor} 85.8 \pm 6.1 & 90.8 \pm 2.2 & \bfseries 92.8 \pm 1.8 \\
                           & Powerlaw & \cellcolor{aaccolor} 88.5 \pm 2.0 & 91.4 \pm 0.4 & \bfseries 93.7 \pm 0.7 \\
\midrule
\multirow{3}{*}{Manhattan} & Uniform  & \cellcolor{aaccolor} 87.1 \pm 0.8 & 90.1 \pm 0.5 & \bfseries 91.8 \pm 0.7 \\
                           & Hotspot  & \cellcolor{aaccolor} 87.5 \pm 1.8 & 89.1 \pm 2.6 & \bfseries 91.4 \pm 1.7 \\
                           & Powerlaw & \cellcolor{aaccolor} 87.6 \pm 2.2 & 90.5 \pm 0.7 & \bfseries 92.1 \pm 0.9 \\
\bottomrule
\end{tabular}
\end{table}

\paragraph{Multi-axis cost breakdown.} ``Matched memory'' in this paper controls the deployed per-vertex \emph{label} memory; the offline preprocessing cost and the per-query latency are not matched. Table~\ref{tab:multi-axis-cost} consolidates the three axes (online label memory, offline preprocessing time, p50/p95 query latency) on the four DIMACS graphs at the canonical $B{=}64$\,B/v configuration, following the multi-axis cost-presentation pattern of recent learned-distance-index surveys~\citep{choudhary2026empirical}. Online label memory is matched by construction; AAC pays $7$--$31{\times}$ in additional offline preprocessing time (${\sim}13$--$52$\,s for AAC vs.\ ${\sim}1.4$--$5.0$\,s for ALT, the difference being $K_0$ SSSPs plus compressor training versus $K$ SSSPs alone); but contrary to the naive ``ALT is also faster online'' story, median A* query latency at \emph{matched memory} is in AAC's favor at p50 on every graph and at p95 on three of four (Table~\ref{tab:latency}; the BAY-p95 cell is the sole exception where ALT is $1.16{\times}$ faster). The compressed-label heuristic's contiguous 16-float-per-direction layout has better cache behavior than ALT's two 8-landmark tables, so AAC's lower expansion-count efficiency is partially compensated by lower per-expansion overhead. We use the descriptor ``matched deployed label memory'' rather than ``matched memory'' in the abstract and the captions of Tables~\ref{tab:dimacs-wilcoxon-percell} and~\ref{tab:matched-hybrid} to keep this scope explicit; in body text we use ``matched memory'' as the short-hand once the convention is established.

\begin{table}[t]
\centering
\caption{\badgeMain Multi-axis cost summary on DIMACS road networks, matched-deployed-label-memory configuration ($B{=}64$\,B/v: AAC $K_0{=}64,m{=}16$; ALT $K{=}8$). Online memory is the deployed per-vertex label memory matched across methods. Offline preprocessing is $K_0$ Dijkstras plus compressor training for AAC, $K$ Dijkstras for ALT. Query latency is from Table~\ref{tab:latency} (median over 100 queries, seed 42). Bold per (graph, axis) cell marks the cheaper method.}
\label{tab:multi-axis-cost}
\small
\begin{tabular}{l l r
    S[table-format=2.1]
    S[table-format=3.1]
    S[table-format=4.1]}
\toprule
\textbf{Graph} & \textbf{Method} & \textbf{Online (B/v)}
  & {\textbf{Offline (s)}} & {\textbf{p50 query (ms)}} & {\textbf{p95 query (ms)}} \\
\midrule
\multirow{2}{*}{NY}  & \cellcolor{aaccolor} AAC      & \cellcolor{aaccolor} 64 & \cellcolor{aaccolor} 42.0 & \cellcolor{aaccolor} \bfseries 67.0  & \cellcolor{aaccolor} \bfseries  166.5 \\
                     & FPS-ALT                       & 64                      & \bfseries 1.4             &           88.1                       &           290.9                       \\
\multirow{2}{*}{BAY} & \cellcolor{aaccolor} AAC      & \cellcolor{aaccolor} 64 & \cellcolor{aaccolor} 40.4 & \cellcolor{aaccolor} \bfseries 105.6 & \cellcolor{aaccolor}           571.4  \\
                     & FPS-ALT                       & 64                      & \bfseries 1.6             &          136.0                       & \bfseries  492.0                      \\
\multirow{2}{*}{COL} & \cellcolor{aaccolor} AAC      & \cellcolor{aaccolor} 64 & \cellcolor{aaccolor} 13.6 & \cellcolor{aaccolor} \bfseries 132.4 & \cellcolor{aaccolor} \bfseries  549.9 \\
                     & FPS-ALT                       & 64                      & \bfseries 2.1             &          200.0                       &           814.8                       \\
\multirow{2}{*}{FLA} & \cellcolor{aaccolor} AAC      & \cellcolor{aaccolor} 64 & \cellcolor{aaccolor} 52.3 & \cellcolor{aaccolor} \bfseries 377.7 & \cellcolor{aaccolor} \bfseries 1210.5 \\
                     & FPS-ALT                       & 64                      & \bfseries 5.0             &          468.4                       &          1362.1                       \\
\bottomrule
\end{tabular}
\end{table}

\paragraph{Amortized wall-clock cost.} A practitioner choosing between the two methods at a fixed memory budget must amortize the offline cost over the deployment's query workload $N$. Under the matched-memory timing of Table~\ref{tab:multi-axis-cost} (AAC $K_0{=}64,m{=}16$ vs.\ ALT $K{=}8$, both at $64$\,B/v), AAC's larger offline cost is amortized by its lower p50 query latency at \emph{finite, modest} workloads on every DIMACS graph: breakeven $N\approx 1{,}924$ on NY, $1{,}276$ on BAY, $170$ on COL, and $522$ on FLA (formula $N_{\mathrm{break}} = (T_{\mathrm{AAC,offline}} - T_{\mathrm{ALT,offline}}) / (T_{\mathrm{ALT,p50}} - T_{\mathrm{AAC,p50}})$). Under the matched protocol AAC is the cheaper total-wall-clock choice on every DIMACS graph above ${\sim}10^{3}$ queries. Cost regimes below that (single-shot ad-hoc queries, small unit tests, and benchmark probes) still favor FPS-ALT.

\begin{figure}[t]
\centering
\includegraphics[width=\textwidth]{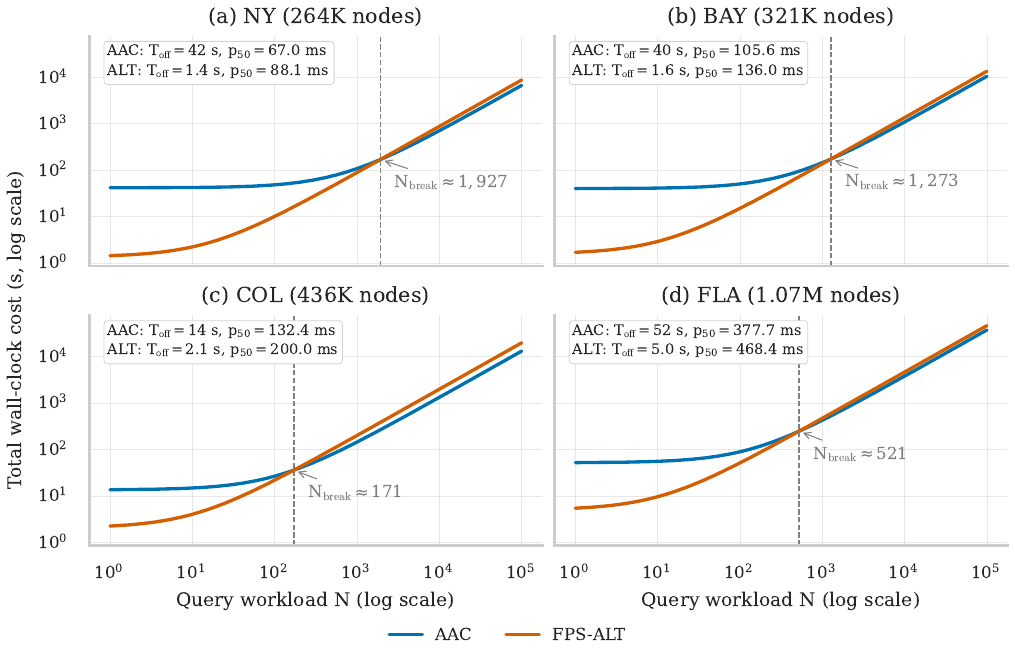}
\caption{Total wall-clock cost vs.\ query workload $N$ on DIMACS at $B{=}64$\,B/v: $T_{\mathrm{total}}(N) = T_{\mathrm{off}} + N\,T_{\mathrm{p50}}$ (both from Table~\ref{tab:multi-axis-cost}). Vertical dashed line: breakeven $N$ where AAC's offline cost is amortized by its lower p50 latency ($\approx 1{,}924$ NY, $1{,}276$ BAY, $170$ COL, $522$ FLA). AAC dominates above breakeven; FPS-ALT below. Per-panel insets report the $T_{\mathrm{off}}$ and $p_{50}$ values driving the curves.}
\label{fig:amortized-cost}
\end{figure}

\subsection*{B.4\quad DIMACS retrospective and Pareto sweep}
\label{app:dimacs-retrospective}

This subsection collects the retrospective best-$K_0$ tables for DIMACS that complement the val-split protocol of Table~\ref{tab:valsplit}. Table~\ref{tab:main_results} is the retrospective companion to the validation-split table: it shows the best-performing $K_0$ per budget for AAC, selected after looking at all 100 query results per cell. The unblinded upper envelope of the AAC curve serves two purposes: it exposes the maximum performance any test-time $K_0$ choice could deliver, and it sources the preprocessing numbers used by Table~\ref{tab:multi-axis-cost} and the latency table in the main text. Table~\ref{tab:pareto_detail} extends the same retrospective protocol to the full $K_0 \times m$ Pareto sweep on NY and FLA, exposing the trade-off curve at finer granularity. Table~\ref{tab:dimacs-wilcoxon-percell} reports per-cell combined Wilcoxon p-values across 5 seeds for the AAC vs.\ ALT matched-budget comparison on all four DIMACS road graphs, referenced from Section~\ref{sec:dimacs}.

\begin{table}[t]
\centering
\caption{\badgeAbl \textbf{Retrospective best-$K_0$ summary} (appendix companion to the validation-split protocol in Table~\ref{tab:valsplit}). AAC vs.\ ALT at equal matched-deployed-label-memory budgets on DIMACS NY and FLA (5-seed means, 100 queries per graph). Best expansion reduction per (graph, memory) group in bold. AAC rows show the best-performing $K_0$ configuration per budget, selected \emph{retrospectively} from the full Pareto sweep below; the validation-split version in the main text (Table~\ref{tab:valsplit}) recovers the same ordering with a val--test gap of at most $2.5$\,\%. A separate controlled Wilcoxon comparison (Section~\ref{sec:dimacs}, Table~\ref{tab:dimacs-wilcoxon-percell}) confirms ALT's advantage on all four DIMACS graphs.}
\label{tab:main_results}
\small
\begin{tabular}{l l r S[table-format=2.1(1)] S[table-format=6.0] S[table-format=3.1]}
\toprule
\textbf{Graph} & \textbf{Method} & \textbf{Bytes/V}
  & {\textbf{Exp.\ Red.\ (\%)}} & {\textbf{Mean Exp.}} & {\textbf{Preproc.\ (s)}} \\
\midrule
\multirow{6}{*}{NY}  & \cellcolor{aaccolor} AAC & \cellcolor{aaccolor}  32 & \cellcolor{aaccolor} 80.2 \pm 3.6 & \cellcolor{aaccolor}  24834 & \cellcolor{aaccolor}  89.2 \\
                     & ALT                      &  32                      & \bfseries 84.2 \pm 0.0            &           19855            &   1.5                    \\
                     & \cellcolor{aaccolor} AAC & \cellcolor{aaccolor}  64 & \cellcolor{aaccolor} 88.3 \pm 0.5 & \cellcolor{aaccolor}  14677 & \cellcolor{aaccolor} 108.4 \\
                     & ALT                      &  64                      & \bfseries 91.0 \pm 0.0            &           11299            &   2.2                    \\
                     & \cellcolor{aaccolor} AAC & \cellcolor{aaccolor} 128 & \cellcolor{aaccolor} 92.3 \pm 0.3 & \cellcolor{aaccolor}   9609 & \cellcolor{aaccolor} 106.6 \\
                     & ALT                      & 128                      & \bfseries 93.5 \pm 0.0            &            8152            &   3.1                    \\
\midrule
\multirow{6}{*}{FLA} & \cellcolor{aaccolor} AAC & \cellcolor{aaccolor}  32 & \cellcolor{aaccolor} 75.4 \pm 2.4 & \cellcolor{aaccolor} 129176 & \cellcolor{aaccolor}  95.2 \\
                     & ALT                      &  32                      & \bfseries 84.7 \pm 0.0            &           80400            &   3.2                    \\
                     & \cellcolor{aaccolor} AAC & \cellcolor{aaccolor}  64 & \cellcolor{aaccolor} 85.7 \pm 2.1 & \cellcolor{aaccolor}  75242 & \cellcolor{aaccolor}  64.6 \\
                     & ALT                      &  64                      & \bfseries 87.7 \pm 0.0            &           64543            &   4.9                    \\
                     & \cellcolor{aaccolor} AAC & \cellcolor{aaccolor} 128 & \cellcolor{aaccolor} 92.1 \pm 1.1 & \cellcolor{aaccolor}  41656 & \cellcolor{aaccolor}  53.9 \\
                     & ALT                      & 128                      & \bfseries 92.8 \pm 0.0            &           37920            &   8.2                    \\
\bottomrule
\end{tabular}
\end{table}

\begin{table}[ht!]
\centering
\caption{\badgeAbl Full Pareto sweep detail for admissible methods (5-seed means $\pm$ std, 100 queries per graph). Best expansion reduction per (graph, memory budget) in bold. FastMap is excluded (inadmissible; see Section~\ref{sec:fastmap} for its 93--97\% expansion reduction with 100\% suboptimal paths).}
\label{tab:pareto_detail}
\small
\begin{tabular}{l l l r S[table-format=2.1(1)]}
\toprule
\textbf{Graph} & \textbf{Method} & \textbf{Config} & \textbf{Bytes/V}
  & {\textbf{Reduction (\%)}} \\
\midrule
\multirow{15}{*}{NY}  & \cellcolor{aaccolor} AAC & \cellcolor{aaccolor} $K_0{=}32$,\,\,$m{=}8$   & \cellcolor{aaccolor}  32 & \cellcolor{aaccolor} 80.2 \pm 3.6 \\
                      & \cellcolor{aaccolor} AAC & \cellcolor{aaccolor} $K_0{=}32$,\,$m{=}16$    & \cellcolor{aaccolor}  64 & \cellcolor{aaccolor} 88.3 \pm 0.5 \\
                      & \cellcolor{aaccolor} AAC & \cellcolor{aaccolor} $K_0{=}32$,\,$m{=}32$    & \cellcolor{aaccolor} 128 & \cellcolor{aaccolor} 92.3 \pm 0.3 \\
                      & \cellcolor{aaccolor} AAC & \cellcolor{aaccolor} $K_0{=}64$,\,\,$m{=}8$   & \cellcolor{aaccolor}  32 & \cellcolor{aaccolor} 77.3 \pm 4.5 \\
                      & \cellcolor{aaccolor} AAC & \cellcolor{aaccolor} $K_0{=}64$,\,$m{=}16$    & \cellcolor{aaccolor}  64 & \cellcolor{aaccolor} 87.8 \pm 1.9 \\
                      & \cellcolor{aaccolor} AAC & \cellcolor{aaccolor} $K_0{=}64$,\,$m{=}32$    & \cellcolor{aaccolor} 128 & \cellcolor{aaccolor} 91.8 \pm 0.5 \\
                      & \cellcolor{aaccolor} AAC & \cellcolor{aaccolor} $K_0{=}64$,\,$m{=}64$    & \cellcolor{aaccolor} 256 & \cellcolor{aaccolor} 94.9 \pm 0.2 \\
                      & \cellcolor{aaccolor} AAC & \cellcolor{aaccolor} $K_0{=}128$,\,\,$m{=}8$  & \cellcolor{aaccolor}  32 & \cellcolor{aaccolor} 73.9 \pm 7.9 \\
                      & \cellcolor{aaccolor} AAC & \cellcolor{aaccolor} $K_0{=}128$,\,$m{=}16$   & \cellcolor{aaccolor}  64 & \cellcolor{aaccolor} 87.6 \pm 0.7 \\
                      & \cellcolor{aaccolor} AAC & \cellcolor{aaccolor} $K_0{=}128$,\,$m{=}32$   & \cellcolor{aaccolor} 128 & \cellcolor{aaccolor} 91.6 \pm 0.9 \\
                      & \cellcolor{aaccolor} AAC & \cellcolor{aaccolor} $K_0{=}128$,\,$m{=}64$   & \cellcolor{aaccolor} 256 & \cellcolor{aaccolor} 94.4 \pm 0.5 \\
\cmidrule{2-5}
                      & ALT & $K{=}4$                  &  32 & \bfseries 84.2 \pm 0.0 \\
                      & ALT & $K{=}8$                  &  64 & \bfseries 91.0 \pm 0.0 \\
                      & ALT & $K{=}16$                 & 128 & \bfseries 93.5 \pm 0.0 \\
                      & ALT & $K{=}32$                 & 256 & \bfseries 95.0 \pm 0.0 \\
\midrule
\multirow{15}{*}{FLA} & \cellcolor{aaccolor} AAC & \cellcolor{aaccolor} $K_0{=}32$,\,\,$m{=}8$   & \cellcolor{aaccolor}  32 & \cellcolor{aaccolor} 74.5 \pm 3.9 \\
                      & \cellcolor{aaccolor} AAC & \cellcolor{aaccolor} $K_0{=}32$,\,$m{=}16$    & \cellcolor{aaccolor}  64 & \cellcolor{aaccolor} 85.7 \pm 2.1 \\
                      & \cellcolor{aaccolor} AAC & \cellcolor{aaccolor} $K_0{=}32$,\,$m{=}32$    & \cellcolor{aaccolor} 128 & \cellcolor{aaccolor} 92.1 \pm 1.1 \\
                      & \cellcolor{aaccolor} AAC & \cellcolor{aaccolor} $K_0{=}64$,\,\,$m{=}8$   & \cellcolor{aaccolor}  32 & \cellcolor{aaccolor} 73.8 \pm 2.0 \\
                      & \cellcolor{aaccolor} AAC & \cellcolor{aaccolor} $K_0{=}64$,\,$m{=}16$    & \cellcolor{aaccolor}  64 & \cellcolor{aaccolor} 85.3 \pm 2.1 \\
                      & \cellcolor{aaccolor} AAC & \cellcolor{aaccolor} $K_0{=}64$,\,$m{=}32$    & \cellcolor{aaccolor} 128 & \cellcolor{aaccolor} 90.7 \pm 0.7 \\
                      & \cellcolor{aaccolor} AAC & \cellcolor{aaccolor} $K_0{=}64$,\,$m{=}64$    & \cellcolor{aaccolor} 256 & \cellcolor{aaccolor} 93.9 \pm 0.9 \\
                      & \cellcolor{aaccolor} AAC & \cellcolor{aaccolor} $K_0{=}128$,\,\,$m{=}8$  & \cellcolor{aaccolor}  32 & \cellcolor{aaccolor} 75.4 \pm 2.4 \\
                      & \cellcolor{aaccolor} AAC & \cellcolor{aaccolor} $K_0{=}128$,\,$m{=}16$   & \cellcolor{aaccolor}  64 & \cellcolor{aaccolor} 81.9 \pm 3.4 \\
                      & \cellcolor{aaccolor} AAC & \cellcolor{aaccolor} $K_0{=}128$,\,$m{=}32$   & \cellcolor{aaccolor} 128 & \cellcolor{aaccolor} 90.6 \pm 1.0 \\
                      & \cellcolor{aaccolor} AAC & \cellcolor{aaccolor} $K_0{=}128$,\,$m{=}64$   & \cellcolor{aaccolor} 256 & \cellcolor{aaccolor} 92.9 \pm 0.6 \\
\cmidrule{2-5}
                      & ALT & $K{=}4$                  &  32 & \bfseries 84.7 \pm 0.0 \\
                      & ALT & $K{=}8$                  &  64 & \bfseries 87.7 \pm 0.0 \\
                      & ALT & $K{=}16$                 & 128 & \bfseries 92.8 \pm 0.0 \\
                      & ALT & $K{=}32$                 & 256 & \bfseries 94.6 \pm 0.0 \\
\bottomrule
\end{tabular}
\end{table}

\begin{table}[t]
\centering
\caption{\badgeMain Per-cell paired Wilcoxon p-values on DIMACS at matched deployed label memory (AAC vs.\ FPS-ALT, two-sided signed-rank, 100 queries $\times$ 5 seeds). We report both Fisher's method and Stouffer's $z$-score combination across seeds, plus the median per-seed $p$ for reference. Every cell is significant under Benjamini--Hochberg FDR correction at $q{=}0.05$ under both combinators; the worst Stouffer-combined cell is COL-128 at $p{=}1.3\times 10^{-3}$, and the worst Fisher-combined cell is FLA-64 at $p{=}1.6\times 10^{-2}$ (both still FDR-significant). Caveats in Section~\ref{sec:dimacs}.}
\label{tab:dimacs-wilcoxon-percell}
\small
\begin{tabular}{l r
    S[table-format=1.1e+2] S[table-format=1.1e+2] S[table-format=1.1e+2] l}
\toprule
\textbf{Graph} & \textbf{Budget (B/v)}
  & {\textbf{Fisher $p$}} & {\textbf{Stouffer $p$}}
  & {\textbf{Median per-seed $p$}} & \textbf{FDR $q{=}0.05$} \\
\midrule
\multirow{3}{*}{NY}  &  32 & 2.2e-17 & 5.9e-20 & 1.3e-5  & \checkmark \\
                     &  64 & 7.8e-13 & 4.6e-14 & 2.0e-3  & \checkmark \\
                     & 128 & 8.3e-15 & 8.1e-17 & 7.3e-5  & \checkmark \\
\midrule
\multirow{3}{*}{BAY} &  32 & 2.0e-25 & 1.7e-26 & 1.4e-7  & \checkmark \\
                     &  64 & 3.0e-9  & 1.2e-7  & 8.2e-2  & \checkmark \\
                     & 128 & 6.2e-5  & 2.7e-4  & 4.0e-1  & \checkmark \\
\midrule
\multirow{3}{*}{COL} &  32 & 8.6e-12 & 1.2e-9  & 1.0e-1  & \checkmark \\
                     &  64 & 4.8e-5  & 2.6e-5  & 1.9e-1  & \checkmark \\
                     & 128 & 1.1e-2  & 1.3e-3  & 2.3e-1  & \checkmark{} (worst Stouffer) \\
\midrule
\multirow{3}{*}{FLA} &  32 & 6.0e-24 & 3.9e-25 & 1.8e-6  & \checkmark \\
                     &  64 & 1.6e-2  & 5.5e-4  & 1.4e-1  & \checkmark{} (worst Fisher) \\
                     & 128 & 6.0e-8  & 5.3e-9  & 2.4e-2  & \checkmark \\
\bottomrule
\end{tabular}
\end{table}

\subsection*{B.5\quad Mechanism diagnostics: ALT-pool, coverage-aware regularizer, training drift}
\label{app:mechanism-diagnostics}

This subsection collects the controlled diagnostics that isolate which mechanism inside the AAC training loop binds the matched-memory gap. The ALT-pool (first-$m$) arm tests whether the gap is architectural (it is not). The coverage-aware regularizer sweep tests whether the gap is closed by directly penalizing covering radius (partially, on roads only). The road-graph and hyperparameter robustness checks confirm that the training-drift pattern is not a synthetic-graph artifact and is stable under (learning rate, batch size, $K_0$) perturbations.

\paragraph{ALT-pool (first $m$) diagnostic.} Table~\ref{tab:same-pool-firstm} reports the ALT-pool (first $m$) arm described in Section~\ref{sec:training-drift}: FPS-ALT evaluated on the first $m$ landmarks of the same $K_0$-landmark pool AAC is trained on, for seven graphs across both directed and undirected regimes. The arm equals pure FPS-ALT at matched memory on every graph and budget tested, confirming that the architecture admits the FPS-ALT subset and that any residual AAC gap is training-objective drift.

\begin{table}[t]
\centering
\caption{\badgeAbl \textbf{ALT-pool (first $m$) diagnostic}: ALT run on the first $m$ landmarks of the \emph{same} $K_0$-landmark FPS teacher pool that AAC is trained on, vs.\ canonical FPS-ALT at matched memory (ALT $K{=}m$ on undirected graphs, $K{=}m/2$ on directed graphs). The two arms agree exactly to two decimal places on all seven graphs and three budgets tested, because FPS is greedy-incremental: the first $K$ landmarks of a $K_0$-step FPS run \emph{are} the $K$-step FPS landmarks. The arm therefore makes explicit that the FPS-ALT matched-memory subset is inside the row-stochastic admissibility set AAC searches: any residual AAC-vs-ALT gap is training-objective drift (Section~\ref{sec:training-drift}), not architectural capacity. 5 seeds $\times$ 200 queries per cell aggregated full-CSV; ALT has zero variance across seeds (deterministic FPS given the LCC seed). Numbers may differ from Table~\ref{tab:valsplit}'s val-split TEST column (5 seeds $\times$ 100 disjoint test queries) by sampling variance within the val-split standard deviations --- the protocols agree on the ALT--AAC ordering at every cell.}
\label{tab:same-pool-firstm}
\small
\begin{tabularx}{\textwidth}{@{}X l r
    S[table-format=2.2] S[table-format=2.2]@{}}
\toprule
\textbf{Graph} & \textbf{Type} & \textbf{$B$ (B/v)}
  & {\textbf{ALT-pool first-$m$ red.\ (\%)}}
  & {\textbf{FPS-ALT matched red.\ (\%)}} \\
\midrule
\multirow{3}{*}{SBM ($5{\times}2000$)}     & \multirow{3}{*}{undirected} &  32 & 89.95 & 89.95 \\
                                           &                             &  64 & 94.52 & 94.52 \\
                                           &                             & 128 & 97.04 & 97.04 \\
\midrule
\multirow{3}{*}{BA ($10{,}000$, $m{=}5$)}  & \multirow{3}{*}{undirected} &  32 & 89.90 & 89.90 \\
                                           &                             &  64 & 94.25 & 94.25 \\
                                           &                             & 128 & 96.93 & 96.93 \\
\midrule
\multirow{3}{*}{NY (DIMACS)}               & \multirow{3}{*}{directed}   &  32 & 85.11 & 85.11 \\
                                           &                             &  64 & 91.20 & 91.20 \\
                                           &                             & 128 & 93.73 & 93.73 \\
\midrule
\multirow{3}{*}{Modena (OSMnx)}            & \multirow{3}{*}{directed}   &  32 & 83.38 & 83.38 \\
                                           &                             &  64 & 91.20 & 91.20 \\
                                           &                             & 128 & 92.85 & 92.85 \\
\midrule
\multirow{3}{*}{Manhattan (OSMnx)}         & \multirow{3}{*}{directed}   &  32 & 83.86 & 83.86 \\
                                           &                             &  64 & 90.44 & 90.44 \\
                                           &                             & 128 & 92.13 & 92.13 \\
\midrule
\multirow{3}{*}{Berlin (OSMnx)}            & \multirow{3}{*}{directed}   &  32 & 86.46 & 86.46 \\
                                           &                             &  64 & 92.77 & 92.77 \\
                                           &                             & 128 & 95.14 & 95.14 \\
\midrule
\multirow{3}{*}{Los Angeles (OSMnx)}       & \multirow{3}{*}{directed}   &  32 & 85.41 & 85.41 \\
                                           &                             &  64 & 91.11 & 91.11 \\
                                           &                             & 128 & 94.20 & 94.20 \\
\bottomrule
\end{tabularx}
\end{table}

\paragraph{Coverage-aware regularizer sweep.}
\label{app:coverage-aware}
If FPS's near-optimal coverage radius (Corollary~\ref{cor:fps-covering}) is the binding constraint on road networks, then directly penalizing the learner's covering radius during training is the cleanest causal test available. We add a soft covering-radius penalty $\lambda_{\mathrm{cov}} \cdot \tilde{r}$ to the standard distillation loss, where $\tilde{r}$ is a differentiable log-sum-exp approximation to the covering radius. Across two OSMnx graphs (Modena, Manhattan), five seeds, and $\lambda_{\mathrm{cov}} \in \{0,\, 10^{-3},\, 10^{-2},\, 10^{-1}\}$ with $K_0{=}64, m{=}16$ (64~B/v), the regularizer narrows the ALT gap by $1.3 \pm 1.2$\,\% on Modena (from $5.1$ to $3.8$\,\%) and $1.4 \pm 2.2$\,\% on Manhattan (from $4.4$ to $3.1$\,\%), while preserving admissibility in every run; the improvement plateaus at $\lambda_{\mathrm{cov}} \geq 10^{-3}$. This is a partial yes (directly optimizing covering radius does help), but the residual $3$--$4$\,\% gap after coverage-aware regularization suggests the FPS candidate pool is the binding constraint on road networks. Table~\ref{tab:coverage-aware-nonroad} reports the same regularizer sweep on SBM and BA. On both synthetic graphs the regularizer is neutral to slightly negative: on SBM every non-zero $\lambda_{\mathrm{cov}}$ is statistically indistinguishable from the baseline ($p\gtrsim 0.4$), on BA the regularizer drifts the learner $\sim 0.6$\,\% \emph{below} the $\lambda_{\mathrm{cov}}{=}0$ baseline. This is the pattern predicted by the covering-radius reading: small-diameter synthetic graphs leave little covering slack for the regularizer to close, and the road-network $1.3$--$1.4$\,\% gain is the regime where the extra signal matters. We report it as \emph{partial evidence for the covering-radius mechanism} rather than a general fix.

\begin{table}[t]
\centering
\caption{\badgeAbl \textbf{Coverage-aware regularizer on non-road graphs.} AAC trained with the differentiable covering-radius regularizer $R_{\mathrm{cov}}$ at four strengths $\lambda_{\mathrm{cov}}\in\{0, 10^{-3}, 10^{-2}, 10^{-1}\}$, evaluated against the matched-memory FPS-ALT reference at the same deployed label memory. Setup: $K_0{=}32$, $m{=}8$, $200$ training epochs, batch size $256$, learning rate $10^{-3}$, 100 queries per seed. SBM results from 5 seeds; BA results from 3 completed seeds. The Wilcoxon $p$-value column is the paired test of AAC expansions at that $\lambda_{\mathrm{cov}}$ against the same-seed $\lambda_{\mathrm{cov}}{=}0$ baseline, with $p$-values combined across seeds (Fisher). On non-road synthetic graphs the regularizer fails to close the ALT gap: on SBM it is statistically indistinguishable from $\lambda_{\mathrm{cov}}{=}0$ ($p\gtrsim 0.4$); on BA it is neutral-to-slightly-negative. This is consistent with the covering-radius reading that, on small-diameter synthetic graphs, FPS is already near-optimal and there is little covering slack for the regularizer to close. The road-network coverage-aware evaluation in Appendix~\ref{app:extended-experiments} (Modena/Manhattan) closed a $1.3$--$1.4$\,\% gap, so the non-road null finding here bounds the regularizer's effective scope rather than overturning it.}
\label{tab:coverage-aware-nonroad}
\small
\begin{tabular}{l c
    S[table-format=2.2] S[table-format=2.2] l}
\toprule
\textbf{Graph} & \textbf{$\lambda_{\mathrm{cov}}$}
  & {\textbf{AAC red. (\%)}} & {\textbf{ALT red. (\%)}}
  & \textbf{Wilcoxon $p$ vs.\ baseline} \\
\midrule
\multirow{4}{*}{SBM ($5{\times}2000$)}    & $0$       & \cellcolor{aaccolor} 88.69 & 89.95 & $1.00$ (baseline) \\
                                          & $10^{-3}$ & \cellcolor{aaccolor} 88.78 & 89.95 & $\approx 0.41$ \\
                                          & $10^{-2}$ & \cellcolor{aaccolor} 88.68 & 89.95 & $\approx 0.45$ \\
                                          & $10^{-1}$ & \cellcolor{aaccolor} 88.65 & 89.95 & $\approx 0.47$ \\
\midrule
\multirow{4}{*}{BA ($10{,}000$, $m{=}5$)} & $0$       & \cellcolor{aaccolor} 89.03 & 89.90 & $1.00$ (baseline) \\
                                          & $10^{-3}$ & \cellcolor{aaccolor} 88.43 & 89.90 & $< 10^{-3}$ (worse) \\
                                          & $10^{-2}$ & \cellcolor{aaccolor} 88.43 & 89.90 & $< 10^{-3}$ (worse) \\
                                          & $10^{-1}$ & \cellcolor{aaccolor} 88.43 & 89.90 & $< 10^{-3}$ (worse) \\
\midrule
\multicolumn{5}{@{}l}{\emph{OGB-arXiv (${\sim}170$K nodes, symmetrized LCC): deferred; see Appendix~\ref{app:coverage-aware} for discussion.}} \\
\bottomrule
\end{tabular}
\end{table}

\paragraph{Training-drift diagnostic on a road graph.}
\label{app:training-drift-road}
To check that the training-objective drift is not specific to non-road synthetic graphs, we re-run the forced-first-$m$ vs.\ trained comparison on DIMACS NY (264K nodes, directed, $K_{\mathrm{ALT}}{=}8$, matched memory $B{=}64$\,B/v) with three seeds (42, 123, 456) and the same five training-epoch checkpoints used in the SBM/BA experiment of Section~\ref{sec:training-drift}. Table~\ref{tab:training-drift-road} reports the resulting drift: at the architectural ceiling ($91.20\%$, equal to FPS-ALT $K{=}8$ on this cell), the trained selector stays $1.7$--$3.3$\,\% below across the full $0$--$1000$-epoch window, and the gap \emph{widens monotonically with more training} from $-1.67$\,\% at epoch~50 to $-3.28$\,\% at epoch~1000.

\begin{table}[t]
\centering
\caption{\badgeAbl \textbf{Training-objective drift on a road graph (DIMACS NY).} Extension of the Section~\ref{sec:training-drift} diagnostic to a directed road network ($K_{\mathrm{ALT}}{=}8$, matched memory $B{=}64$\,B/v: AAC $K_0{=}32$, $m_{\mathrm{fwd}}{+}m_{\mathrm{bwd}}{=}8{+}8$; ALT $K{=}8$). Per-cell mean $\pm$ std over 3 seeds (42, 123, 456) of test-set expansion reduction at five training-epoch checkpoints; 100 queries per seed. The \textbf{forced-first-$m$} arm (AAC with one-hot rows on the first $K_{\mathrm{ALT}}$ pool indices in each direction) algebraically equals FPS-ALT $K{=}8$ to rounding ($91.20\%$); the \textbf{trained} compressor stays $1.7$--$3.3$\,\% \emph{below} this architectural ceiling across the entire $0$--$1000$-epoch window, with the gap \emph{widening} from $-1.67$\,\% at epoch~50 to $-3.28$\,\% at epoch~1000 --- the multi-graph SBM/BA pattern of Figure~\ref{fig:training-drift} reproduced on a real road graph.}
\label{tab:training-drift-road}
\small
\begin{tabularx}{\textwidth}{@{}X
    S[table-format=2.2(2)] S[table-format=2.2(2)] S[table-format=2.2(2)]
    S[table-format=2.2(2)] S[table-format=2.2(2)]@{}}
\toprule
\textbf{Variant}
  & {\textbf{ep.\ 0}} & {\textbf{ep.\ 50}} & {\textbf{ep.\ 200}}
  & {\textbf{ep.\ 500}} & {\textbf{ep.\ 1000}} \\
\midrule
\rowcolor{aaccolor} AAC (trained, default; mean over 3 seeds) & 89.13 \pm 0.62 & 89.53 \pm 0.24 & 88.82 \pm 0.04 & 88.36 \pm 1.08 & 87.92 \pm 1.53 \\
forced-first-$m$ (arch.\ ceiling)                              & 91.20          & 91.20          & 91.20          & 91.20          & 91.20          \\
FPS-ALT $K{=}8$ (matched-memory reference)                    & 91.20          & 91.20          & 91.20          & 91.20          & 91.20          \\
\midrule
gap (AAC $-$ ceiling, \%)                                     & {$-2.07$}      & {$-1.67$}      & {$-2.38$}      & {$-2.84$}      & {$-3.28$}      \\
\bottomrule
\end{tabularx}
\end{table}

\paragraph{Hyperparameter and initialization robustness for training drift.}
\label{app:training-drift-hp}
Table~\ref{tab:training-drift-hp} reports a hyperparameter-and-initialization sweep on the SBM $B{=}32$ cell described at the end of Section~\ref{sec:training-drift}. The take-away is that the training drift documented above is robust across the (learning rate, batch size, $K_0$) hyperparameter grid we test, but vanishes when the compressor is initialized at the architectural ceiling (\emph{identity-on-first-$m$} init), so the binding constraint is the default block-sparse initialization rather than a fundamental optimization barrier on this cell.

\begin{table}[t]
\centering
\caption{\badgeAbl \textbf{Hyperparameter and initialization sweep on the SBM $B{=}32$ training-drift cell.} Each row aggregates 3 seeds (mean $\pm$ std) at one (learning rate, batch size, $K_0$, init) cell, $200$ training epochs throughout. ``Trained gap'' is AAC (trained) $-$ forced-first-$m$ baseline (architectural ceiling $89.95\%$ at $K_0{=}32$, $m{=}8$) in percentage points. \emph{block\_sparse} is the default block-sparse initialization used throughout the paper; \emph{identity\_first\_m} is a soft one-hot init on the first-$m$ pool indices that starts at the ceiling. Across the entire (lr, batch) grid the default-init drift is $-0.5$ to $-1.5$\,\%; under the identity init the trained selector \emph{stays at the architectural ceiling} ($+0.00$\,\% std $0.00$) at every cell. The table below shows the $K_0{=}32$ subset of a $36$-cell sweep ($3$ learning rates $\times$ $3$ batch sizes $\times$ $2$ $K_0$s $\times$ $2$ inits); the $K_0{=}64$ subset reproduces the same qualitative pattern.}
\label{tab:training-drift-hp}
\small
\begin{tabular}{S[table-format=1.0e+1] r l
    S[table-format=2.2(2)] S[table-format=+1.2]}
\toprule
{\textbf{lr}} & \textbf{batch} & \textbf{init}
  & {\textbf{trained red.\ (\%)}} & {\textbf{trained gap (\%)}} \\
\midrule
                    3e-4 & 128 & block\_sparse      & 88.93 \pm 0.59 & -1.02 \\
\rowcolor{aaccolor} 3e-4 & 128 & identity\_first\_m & 89.95 \pm 0.00 & +0.00 \\
                    3e-4 & 256 & block\_sparse      & 89.24 \pm 1.02 & -0.71 \\
\rowcolor{aaccolor} 3e-4 & 256 & identity\_first\_m & 89.95 \pm 0.00 & +0.00 \\
                    3e-4 & 512 & block\_sparse      & 88.93 \pm 0.59 & -1.02 \\
\rowcolor{aaccolor} 3e-4 & 512 & identity\_first\_m & 89.95 \pm 0.00 & +0.00 \\
                    1e-3 & 128 & block\_sparse      & 89.11 \pm 0.28 & -0.84 \\
\rowcolor{aaccolor} 1e-3 & 128 & identity\_first\_m & 89.95 \pm 0.00 & +0.00 \\
                    1e-3 & 256 & block\_sparse      & 89.46 \pm 0.71 & -0.49 \\
\rowcolor{aaccolor} 1e-3 & 256 & identity\_first\_m & 89.95 \pm 0.00 & +0.00 \\
                    1e-3 & 512 & block\_sparse      & 89.45 \pm 0.41 & -0.50 \\
\rowcolor{aaccolor} 1e-3 & 512 & identity\_first\_m & 89.95 \pm 0.00 & +0.00 \\
\bottomrule
\end{tabular}
\end{table}

\section{Ablation Studies}
\label{app:ablation-studies}

We conduct three targeted ablation experiments on the OSMnx city graphs (Modena and Manhattan) to isolate the contributions of AAC's key design choices. All experiments use $K_0{=}64$ teacher landmarks, seeds $\{42, 123, 456, 789, 1024\}$, and 100 random query pairs per configuration.

\subsection*{Selection Strategy}

To understand \emph{why} ALT leads, Table~\ref{tab:selection-ablation} compares five selection strategies at matched memory (64~B/v, $m{=}16$): (i)~AAC (default) (trained row-stochastic selection from $K_0{=}64$ teachers), (ii)~Random-Subset (random $m$ landmarks from the same $K_0$ pool), (iii)~FPS-Subset (standard ALT with $K{=}8$ FPS-selected landmarks, deterministic seed vertex from the largest SCC), (iv)~FPS-RR$_{10}$ (random-restart FPS: 10 random seed vertices, best chosen on a held-out 100-query validation split, reported on a disjoint 100-query test split), and (v)~Greedy-Max (greedily selects landmarks from the $K_0$ pool maximising average heuristic over the evaluation queries; a query-adaptive oracle upper bound). FPS-Subset outperforms both pool-based learning methods on both graphs (Modena: 91.2\% vs.\ 88.2\% and 86.7\%; Manhattan: 90.4\% vs.\ 87.6\% and 86.7\%). Random-restart FPS does not improve on the canonical FPS-Subset: on both graphs, the validation-best of 10 random seed vertices scores $89.2\%$ on test, $1$--$2$\,\% \emph{below} the canonical seed-vertex FPS, ruling out a ``FPS is just lucky'' explanation. Greedy-Max slightly outperforms FPS on both graphs (91.5\% vs.\ 91.2\% on Modena; 91.5\% vs.\ 90.4\% on Manhattan), showing that query-adaptive selection from the teacher pool \emph{can} beat FPS, but even with oracle access the gain is modest ($+$0.3--1.1\,\%). Main-text discussion is in Section~\ref{sec:selection-ablation}.

\begin{table}[ht!]
\centering
\caption{\badgeAbl Selection strategy ablation ($K_0{=}64$, $m{=}16$, 64~B/v; see Section~\ref{sec:methodological-protocol}). AAC (default): trained row-stochastic selection. Random-Subset: random $m$ landmarks from the $K_0$ teacher pool. FPS-Subset: standard ALT with $K{=}m/2{=}8$ FPS-selected landmarks (no teacher pool, deterministic seed vertex from the largest SCC). FPS-RR$_{10}$: random-restart FPS, $R{=}10$ random seed vertices, best on a held-out 100-query validation split (test on disjoint 100-query split). Greedy-Max: greedily selects landmarks from the $K_0$ pool maximizing average ALT heuristic over the evaluation queries (query-adaptive oracle; see text). Mean $\pm$ std over 5 seeds, 100 queries. \emph{Cross-table note:} this ablation uses an independent retraining protocol; values for the same nominal $(K_0, m)$ may differ from Tables~\ref{tab:compression-curve} and~\ref{tab:osmnx} by training stochasticity within the reported standard deviation.}
\label{tab:selection-ablation}
\small
\begin{tabular}{l l S[table-format=2.1] S[table-format=1.1]}
\toprule
\textbf{Graph} & \textbf{Method}
  & {\textbf{Exp.\ Red.\ (\%)} $\uparrow$} & {\textbf{Std}} \\
\midrule
\multirow{5}{*}{Modena}    & \cellcolor{aaccolor} AAC (default)                  & \cellcolor{aaccolor} 88.2  & \cellcolor{aaccolor} 1.4 \\
                           & FPS-Subset (ALT $K{=}8$, canonical seed)            & 91.2                       & 0.0                      \\
                           & FPS-RR$_{10}$ (val-selected best of 10 restarts)    & 89.2                       & 0.2                      \\
                           & Greedy-Max$^\dagger$                                & \bfseries 91.5             & 0.0                      \\
                           & Random-Subset                                       & 86.7                       & 0.7                      \\
\midrule
\multirow{5}{*}{Manhattan} & \cellcolor{aaccolor} AAC (default)                  & \cellcolor{aaccolor} 87.6  & \cellcolor{aaccolor} 0.9 \\
                           & FPS-Subset (ALT $K{=}8$, canonical seed)            & 90.4                       & 0.0                      \\
                           & FPS-RR$_{10}$ (val-selected best of 10 restarts)    & 89.2                       & 0.5                      \\
                           & Greedy-Max$^\dagger$                                & \bfseries 91.5             & 0.0                      \\
                           & Random-Subset                                       & 86.7                       & 2.1                      \\
\bottomrule
\multicolumn{4}{@{}l}{\footnotesize $^\dagger$ Query-adaptive oracle: selects from $K_0$ pool using evaluation queries.} \\
\end{tabular}
\end{table}

\subsection*{Admissibility Under Early Stopping}

See Section~\ref{sec:admissibility-verification} in the main text (Table~\ref{tab:admissibility-early-stopping}).

\subsection*{Compression Efficiency Curve}

Table~\ref{tab:compression-curve} sweeps $m \in \{4, 8, 16, 32, 64\}$ at fixed $K_0{=}64$, comparing AAC vs.\ ALT at matched memory ($2m$ floats per direction). ALT outperforms AAC at high compression ratios; the gap narrows and reverses at low compression. The gap is largest at high compression ($m{=}4$--$8$) where AAC is 12.7\,\% behind ALT on Modena, and narrows as $m$ approaches $K_0$: at $m{=}32$ and $m{=}64$ on Manhattan, AAC slightly surpasses ALT (92.2\% vs.\ 92.1\% at $m{=}32$; 93.7\% vs.\ 93.5\% at $m{=}64$). This suggests that AAC's learned compression recovers ALT-level performance given sufficient capacity, and can marginally outperform FPS at large $m$.

\paragraph{Reconciling with the capacity ceiling.} These crossover results do not contradict Proposition~\ref{thm:admissibility}. The capacity ceiling says the learner cannot exceed the \emph{teacher with $K_0$ landmarks} (FPS-ALT at $K{=}K_0$). It does \emph{not} say AAC with $m$ dimensions cannot surpass FPS-ALT at $K{=}m/2$, which is the matched-memory baseline in the table. At $m{=}64$ from $K_0{=}64$, AAC selects 64~landmarks from the 64-teacher pool (effectively identity, modulo selection order), matching the teacher. But Table~\ref{tab:compression-curve} compares against FPS-ALT at $K{=}32$ (matched memory: $2 \times 32 = 64$ floats/direction, same as $m{=}64$ for AAC). Since FPS-at-32 and best-subset-of-64-of-size-32 are different selection procedures on different-sized pools, the crossover reflects the advantage of pool access rather than a violation of the capacity ceiling. At $m{=}K_0$, AAC recovers the teacher exactly (Proposition~\ref{prop:alt-special-case}), while the matched-budget ALT baseline uses $K{=}K_0/2$, strictly fewer landmarks from fresh FPS. The proper three-way decomposition (FPS-at-$K_0$ full teacher vs.\ AAC-at-$m$ vs.\ FPS-at-$K{=}m/2$ matched budget) is presented in Section~\ref{sec:selection-ablation}.

\begin{table}[ht!]
\centering
\caption{\badgeAbl \textbf{Compression efficiency curve} ($K_0{=}64$, $m \in \{4, 8, 16, 32, 64\}$). AAC (compression from $K_0{=}64$ teachers to $m$ dimensions) vs.\ ALT ($K{=}m/2$ FPS landmarks) at matched memory budgets. Mean $\pm$ std over 5 seeds, 100 queries. ALT outperforms AAC at high compression ($m{=}4$--$16$); the gap narrows as $m$ approaches $K_0$ and reverses slightly on Manhattan at $m{=}32$ and $m{=}64$. Note: at $m{=}64$ ($m{=}K_0$), this is the non-compression regime where AAC effectively recovers the full teacher (Proposition~\ref{prop:alt-special-case}). \emph{Cross-table note:} this ablation uses an independent retraining protocol; values for the same nominal $(K_0, m)$ may differ from Tables~\ref{tab:selection-ablation} and~\ref{tab:osmnx} by training stochasticity within the reported standard deviation (see Section~\ref{sec:setup}, ``Cross-table protocol'').}
\label{tab:compression-curve}
\small
\begin{subtable}[t]{0.48\textwidth}
\centering
\caption{Modena (30K nodes)}
\begin{tabular}{r r S[table-format=2.1(2)] S[table-format=2.1]}
\toprule
{$m$} & \textbf{B/v} & {\textbf{AAC (\%)}} & {\textbf{ALT (\%)}} \\
\midrule
\rowcolor{aaccolor}  4 &  16 & 59.1 \pm 12.6 & \bfseries 71.8 \\
\rowcolor{aaccolor}  8 &  32 & 76.3 \pm 5.3  & \bfseries 83.4 \\
\rowcolor{aaccolor} 16 &  64 & 84.9 \pm 1.8  & \bfseries 91.2 \\
\rowcolor{aaccolor} 32 & 128 & 91.9 \pm 0.7  & \bfseries 92.8 \\
\rowcolor{aaccolor} 64 & 256 & 94.2 \pm 0.3  & \bfseries 94.4 \\
\bottomrule
\end{tabular}
\end{subtable}
\hfill
\begin{subtable}[t]{0.48\textwidth}
\centering
\caption{Manhattan (4.6K nodes)}
\begin{tabular}{r r S[table-format=2.1(1)] S[table-format=2.1]}
\toprule
{$m$} & \textbf{B/v} & {\textbf{AAC (\%)}} & {\textbf{ALT (\%)}} \\
\midrule
\rowcolor{aaccolor}  4 &  16 & 64.4 \pm 9.7 & \bfseries 72.8 \\
\rowcolor{aaccolor}  8 &  32 & 75.0 \pm 8.8 & \bfseries 83.9 \\
\rowcolor{aaccolor} 16 &  64 & 87.4 \pm 1.3 & \bfseries 90.4 \\
\rowcolor{aaccolor} 32 & 128 & \bfseries 92.2 \pm 0.2 & 92.1 \\
\rowcolor{aaccolor} 64 & 256 & \bfseries 93.7 \pm 0.2 & 93.5 \\
\bottomrule
\end{tabular}
\end{subtable}
\end{table}

\section{FastMap: Extended Out-of-Domain Analysis}
\label{app:fastmap}

FastMap~\citep{cohen2018fastmap} was designed and its admissibility guarantee proven for undirected graphs. We include it on the directed DIMACS benchmarks as a cautionary out-of-domain example, not as a critique of the method itself. FastMap's L1 embedding heuristic achieves the highest raw expansion reduction (93--97\%) on DIMACS graphs. However, our empirical verification across all four tested dimensions $d \in \{8, 16, 32, 64\}$ reveals that FastMap is inadmissible on these directed benchmarks: on all 800 queries across NY and FLA (100 queries $\times$ 2 graphs $\times$ 4 dimensions), \textbf{100\% of FastMap-guided A* paths are suboptimal}. The mean cost ratio (FastMap path cost / optimal cost) ranges from 1.15 to 1.22, with a maximum of 1.95.

This finding has a methodological implication: \emph{expansion reduction alone is a misleading metric when comparing admissible and inadmissible methods}. An inadmissible heuristic can achieve arbitrarily high expansion reduction by aggressively overestimating, pruning the search tree at the cost of solution quality. Expansion reduction alone should not be compared across admissibility classes without also reporting solution quality.

\section{Pre-Registration Protocol and Accounting Conventions}
\label{app:prereg}

\paragraph{Outcome.} We falsified the main claim of the pre-registration: the predicted bands of $+2$ to $+6$\,\% are not survived by the observed $+0.87$ to $+1.21$\,\% effect at $B \geq 64$, and the predicted direction is reversed at $B = 32$ where ALT leads by $1.13$\,\%. The convention-independent sub-prediction (zero admissibility violations) holds in 15/15 cells, both by architectural guarantee (Proposition~\ref{thm:admissibility}) and by empirical verification across the full $5$-seed $\times$ $3$-budget audit ($1{,}500$ queries total: $0$ violations and $\max d_{\mathrm{AAC}}/d_{\mathrm{Dijkstra}} = 1.000000000$ on every per-cell record). The directionality-aware accounting convention used throughout this paper (Section~\ref{sec:setup}, restated in E.2 below) is the methodological by-product that motivated the apparatus around the pre-registration.

\subsection*{E.1\quad Pre-Registration Protocol and Timestamps}
\label{app:prereg-protocol}

\begin{center}
\small
\begin{tabular}{rcccc}
\toprule
$B$ (B/v) & predicted gap (AAC$-$ALT) & observed gap (AAC$-$ALT) & TOST $\delta{=}1$\,\% & verdict \\
\midrule
32  & $+2$ to $+6$\,\%     & $-1.13$\,\% (ALT ahead) & rejected & \textbf{direction inverted} \\
64  & $+1.5$ to $+4.5$\,\% & $+1.21 \pm 0.54$\,\%    & rejected & magnitude $4{\times}$ low \\
128 & $+1$ to $+3$\,\%     & $+0.87 \pm 0.20$\,\%    & rejected & magnitude $2$--$3{\times}$ low \\
\bottomrule
\end{tabular}
\end{center}

\paragraph{Pre-registration protocol.} Before running any evaluation on OGB-arXiv~\citep{hu2020ogb}, we filed a timestamped prediction (included in the released codebase at \texttt{results/PREREG.md}; see Appendix~\ref{app:reproducibility}). The prediction was written based on the covering-radius theory (Theorem~\ref{thm:covering-radius}) and the synthetic-graph findings (Section~\ref{sec:synthetic}), and reads, verbatim: \emph{on OGB-arXiv (citation graph, $\sim$170K nodes, non-metric, high clustering), AAC should beat ALT at every budget level with a gap of roughly $+2$ to $+6$ percentage points at 32 B/v narrowing to $+1$ to $+3$\,\% at 128 B/v; the hybrid $\max(h_{\text{AAC}}, h_{\text{ALT}})$ should be the top performer; admissibility violations should be zero.} We then ran the evaluation under the convention of E.2 below and report the outcome in the table at the top of this section, whether the prediction holds or fails.

\paragraph{Setup.} OGB-arXiv is a directed citation graph (169{,}343 papers, 1{,}166{,}243 edges). We symmetrize (undirected), take the largest connected component (no change; the graph is already connected), and assign uniform-random edge weights in $[1, 10]$ to obtain a weighted undirected graph suitable for A* benchmarking. We run 100 uniform-random $s$-$t$ queries per seed, 5 seeds, at budgets $B \in \{32, 64, 128\}$ per vertex (AAC $K_0$/$m \in \{32/8, 64/16, 128/32\}$; ALT $K \in \{4, 8, 16\}$). A matched-total-budget hybrid companion study (Section~\ref{sec:matched-hybrid}) evaluates $\max(h_{\mathrm{AAC}}, h_{\mathrm{ALT}})$ on OGB-arXiv, BA, and SBM under a half/half budget split (each arm at $B/2$\,B/v). Reference oracle: Dijkstra (mean 78{,}616 expansions).

\subsection*{E.2\quad Directionality-Aware Accounting Convention}
\label{app:prereg-accounting}

\paragraph{Convention.} Matched memory at $B$ bytes per vertex matches the per-vertex deployed-label storage of the two arms. On directed graphs ALT stores both forward and backward landmark distances ($2K$ floats per vertex), so matched memory at $B$ B/v is AAC $m{=}B/4$ vs.\ ALT $K{=}B/8$. On undirected graphs ALT stores a single direction ($K$ floats per vertex), so matched memory at $B$ B/v is AAC $m{=}B/4$ vs.\ ALT $K{=}B/4$. This is the convention used everywhere in the paper; the OGB-arXiv outcome reported in E.1 above and summarised in Table~\ref{tab:matched-hybrid} uses the undirected rule (5 seeds, 100 queries per seed). One methodological lesson generalizes: pre-registration is necessary but not sufficient for matched-memory comparisons, because the accounting convention itself can swing the headline magnitudes; both the prediction and the convention must be fixed together before evaluation. The pre-registered protocol and bands are included in the released codebase (Appendix~\ref{app:reproducibility}).

\section{Contextual Variant: Pipeline Specification and Warcraft Compatibility Study}
\label{app:contextual-spec}
\label{app:contextual}

This appendix collects (i)~the formal differentiable training-proxy specification that Section~\ref{sec:contextual} points to and (ii)~the Warcraft $12{\times}12$ compatibility case study referenced from Section~\ref{sec:contextual-exp}. None of these details are needed to read or reproduce the static experiments (Sections~\ref{sec:dimacs}--\ref{sec:training-drift}).

\subsection*{Pipeline Specification}

\paragraph{Pipeline.} The pipeline composes as follows:
\begin{enumerate}
\item An encoder $f_\phi$ (CNN or ResNet) predicts cell costs $\hat{c} = f_\phi(\text{context})$ from input features.
\item Cell costs are converted to edge weights: $w(u,v) = \frac{1}{2}(\hat{c}_u + \hat{c}_v) \cdot \gamma_{uv}$, where $\gamma_{uv}$ is the Euclidean distance factor ($1$ for cardinal, $\sqrt{2}$ for diagonal neighbors).
\item A differentiable Bellman--Ford solver computes soft SSSP distances from $K$ anchors.
\item During training, the row-stochastic compressor operates on soft anchor distances as a differentiable proxy. At deployment, exact anchor distances are recomputed on the predicted-cost graph, and the resulting compressed heuristic is admissible for A* on that graph by Proposition~\ref{thm:admissibility}.
\end{enumerate}

\paragraph{Smooth Bellman--Ford.} Standard Bellman--Ford iterates $d(v) \leftarrow \min\!\big(d(v),\, \min_{(u,v) \in E} d(u) + w(u,v)\big)$. We replace the hard $\min$ with a differentiable smooth-min:
\begin{equation}
\label{eq:smoothmin}
\mathrm{smoothmin}_\beta(x_1, \ldots, x_n) = -\frac{1}{\beta}\log\sum_{i=1}^n \exp(-\beta x_i).
\end{equation}

\begin{proposition}[Smooth-min is a lower bound on hard min]
\label{prop:smoothmin}
$\mathrm{smoothmin}_\beta(\mathbf{x}) \leq \min(\mathbf{x})$ for all $\mathbf{x} \in \R^n$ and $\beta > 0$, with equality as $\beta \to \infty$.
\end{proposition}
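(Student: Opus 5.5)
The plan is to reduce the statement to the log-sum-exp sandwich already established in Theorem~\ref{thm:smooth}, applied to the negated vector. Concretely, for $\mathbf{x} \in \R^n$ set $\mathbf{z} := -\mathbf{x}$, so that $\max(\mathbf{z}) = -\min(\mathbf{x})$. Then
\[
\mathrm{smoothmin}_\beta(\mathbf{x}) = -\frac{1}{\beta}\log\sum_{i=1}^n \exp(\beta z_i),
\]
and the statement becomes a claim about the plain (unnormalized) log-sum-exp of $\mathbf{z}$ at temperature $\beta$.

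First, I will reprove the two-sided bound directly, exactly as in Eq.~\eqref{eq:lse-bounds}. Let $x_* := \min_i x_i$. One term of the sum equals $\exp(-\beta x_*)$, and every term is at most $\exp(-\beta x_*)$. Hence
\[
\exp(-\beta x_*) \le \sum_i \exp(-\beta x_i) \le n\,\exp(-\beta x_*).
\]
Taking logarithms, dividing by $-\beta<0$, and reversing the inequalities gives
\[
x_* - \frac{\log n}{\beta} \;\le\; \mathrm{smoothmin}_\beta(\mathbf{x}) \;\le\; x_*.
\]
The right inequality is the claimed lower bound on the hard min. Note that it holds without any $\log n$ correction, because the sign flip makes the unnormalized log-sum-exp already lie on the correct side. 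The left inequality shows that the gap is at most $\log n/\beta$, which tends to $0$ as $\beta \to \infty$. This yields the limit statement.

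The only point needing care is the phrase ``with equality as $\beta \to \infty$.'' For finite $\beta$ equality generally fails: if $n \ge 2$, the lower bound in the sum is strict, since all terms are positive. So I will state this part explicitly as convergence, $\lim_{\beta\to\infty}\mathrm{smoothmin}_\beta(\mathbf{x}) = \min(\mathbf{x})$, with the explicit rate $\log n/\beta$, and also note that equality holds trivially when $n=1$. I do not expect a real obstacle here. The argument is elementary, and the main risk is just getting the sign flip right when dividing by $-\beta$.
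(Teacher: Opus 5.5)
Your proof is correct and follows essentially the same route as the paper: both bound $\sum_i \exp(-\beta x_i)$ below by $\exp(-\beta \min_i x_i)$ and above by $n\exp(-\beta \min_i x_i)$, then take logarithms and multiply by $-1/\beta$ to get $\min(\mathbf{x}) - \frac{\log n}{\beta} \le \mathrm{smoothmin}_\beta(\mathbf{x}) \le \min(\mathbf{x})$. Your added remarks that equality is strict for finite $\beta$ when $n \ge 2$ (so the claim is really a limit) and that the upper bound needs no $\log n$ correction are accurate refinements.
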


\begin{proof}
$\log\sum_i \exp(-\beta x_i) \geq \log \exp(-\beta \min_i x_i) = -\beta \min_i x_i$. Multiplying by $-1/\beta$ (flipping the inequality): $-\frac{1}{\beta}\log\sum_i \exp(-\beta x_i) \leq \min_i x_i$. For convergence: $\log\sum_i \exp(-\beta x_i) \leq -\beta \min_i x_i + \log n$, so $\mathrm{smoothmin}_\beta(\mathbf{x}) \geq \min(\mathbf{x}) - \frac{\log n}{\beta} \to \min(\mathbf{x})$.
\end{proof}

\paragraph{Admissibility at deployment vs.\ during training.} Smooth-min underestimates the hard min at every Bellman--Ford iteration, so the soft per-vertex distances satisfy $\tilde{d}(l_k, v) \leq d(l_k, v)$. However, the ALT heuristic uses \emph{differences} of the form $d(l_k, t) - d(l_k, u)$, and underestimating both terms individually does not preserve a lower bound on their difference. Therefore, the smooth teacher heuristic used during training is \emph{not} guaranteed to be admissible with respect to true distances. At \emph{deployment}, this issue does not arise: exact distances are computed on the predicted-cost graph via standard Dijkstra, and the row-stochastic compression of these exact distances is admissible by Proposition~\ref{thm:admissibility}. The smooth pipeline serves only as a differentiable training proxy; admissibility is guaranteed architecturally at inference, not through the training dynamics. Temperature $\beta$ is annealed from 1.0 to 30.0 during training, implementing a smooth-to-hard curriculum that improves gradient flow in early training while approaching the exact solution at convergence.

\paragraph{Combined loss.}
\begin{equation}
\label{eq:ctx-loss}
\mathcal{L}_{\mathrm{ctx}} = \underbrace{(d_{\mathrm{true}} - \tilde{h}_A)_+}_{\text{gap-closing}} + \alpha_{\mathrm{cost}} \cdot \underbrace{\|\hat{c} - c_{\mathrm{gt}}\|^2}_{\text{cost supervision}} + \lambda_{\mathrm{cond}} \cdot R_{\mathrm{ent}},
\end{equation}
where $d_{\mathrm{true}}$ denotes the ground-truth shortest-path distance on the true-cost graph (computed offline via Dijkstra), and $\tilde{h}_A$ is the smooth compressed heuristic from the predicted-cost graph. The gap-closing term is a \emph{cross-graph path-quality surrogate}: $d_{\mathrm{true}}$ and $\tilde{h}_A$ live on different graphs, so the gap is not an admissibility bound in the classical sense. Admissibility at deployment is guaranteed only with respect to the predicted-cost graph (Proposition~\ref{thm:admissibility}). Cost supervision provides direct signal for the encoder; the gap-closing term provides signal for both encoder and compressor.

\paragraph{Encoder architectures.} We evaluate two encoders on 96$\times$96 RGB terrain images:
\begin{itemize}
\item \textbf{CNN}: 3 convolutional layers (3$\to$32$\to$64$\to$128 channels), AdaptiveAvgPool to grid size, 1$\times$1 conv to scalar costs, $\mathrm{softplus}(\cdot)+0.01$ for strict positivity.
\item \textbf{ResNet}: 5$\times$5 stem with batch normalization, 3 residual blocks (64 channels each), AdaptiveAvgPool, two 1$\times$1 convolutions, $\mathrm{softplus}(\cdot)+0.01$.
\end{itemize}

\subsection*{Warcraft Compatibility Study}

We report a \emph{compatibility} case study (demonstrating that AAC integrates into a differentiable planning pipeline with admissibility guaranteed at deployment) rather than claiming synergy or competitive advantage over specialized differentiable planners.

\paragraph{Setup.} We evaluate on Warcraft $12{\times}12$ grid maps~\citep{vlastelica2020differentiation}: 10,000 train / 1,000 test instances, 144 nodes, 8-connected. RGB terrain images ($96{\times}96$) serve as context; a CNN or ResNet encoder predicts edge costs, then the AAC compressor constructs admissible landmarks for A* search on the predicted-cost graph.

Table~\ref{tab:contextual-ablation} compares AAC contextual variants against two differentiable baselines. The comparison regime is heterogeneous: AAC results are 3-seed means, BB uses published numbers from \citet{vlastelica2020differentiation}, and DataSP is our reproduction with a single seed. We do not claim competitive performance and we omit the cost-regret column from Table~\ref{tab:contextual-ablation} because cost regret on dense $12{\times}12$ grids is not commensurable across methods with such different path-matching propensities. The numerical observation is narrowly that AAC with a ResNet encoder and cost supervision ($\alpha_{\mathrm{cost}}{=}10$) trains end-to-end to completion: the compressor stays frozen, yields zero admissibility violations on all $1000$ test instances, and the surrounding encoder learns. The blackbox method of \citet{vlastelica2020differentiation} remains substantially better on path-match accuracy ($54.7\%$ vs.\ $20.3 \pm 2.4\%$), likely because it differentiates through a full combinatorial shortest-path solution (Dijkstra in the forward pass), whereas AAC trains through a smooth label-construction surrogate built from soft shortest-path distances. We report these numbers solely to establish that the end-to-end differentiable pipeline trains to completion, not to argue AAC is preferable to blackbox differentiation on grid-map planning.

\begin{table}[t]
\centering
\caption{\badgeAbl Compatibility study on Warcraft $12{\times}12$ grids: AAC integrates into a differentiable planning pipeline and retains deployment-time architectural admissibility on the predicted-cost graph. The table reports (a) whether the pipeline trains end-to-end and (b) the resulting path-match accuracy. The cost-regret column is omitted because the metric is not commensurable across methods with very different path-matching propensities (on $12{\times}12$ grids many paths have near-identical cost, so a method that produces \emph{entirely different} paths can still hit a near-zero cost regret while having a near-zero path match); cost-regret numbers remain in the released artifact. AAC entries are mean $\pm$ std over 3 seeds; Warcraft 12$\times$12 (10{,}000 train / 1{,}000 test). At deployment, AAC requires per-instance anchor SSSP on the predicted-cost graph followed by $O(m)$ per-query heuristic evaluation; BB uses Dijkstra-based combinatorial differentiation; DataSP uses $O(V^3)$ Floyd-Warshall. Admissibility column reports the empirical violation rate over the 1{,}000 test instances. The comparison is not parity-style and is included for context only --- the baselines do not provide admissibility guarantees.}
\label{tab:contextual-ablation}
\small
\setlength{\tabcolsep}{4pt}
\begin{tabular}{llccc}
\toprule
\textbf{Method} & \textbf{Configuration} & \textbf{Path Match} $\uparrow$ & \textbf{Jaccard} $\uparrow$ & \textbf{Adm.\ violations} $\downarrow$ \\
\midrule
\multicolumn{5}{l}{\textit{AAC (admissible on predicted-cost graph; $O(m)$ query after per-instance label construction)}} \\
\rowcolor{aaccolor} \quad ResNet, $\alpha_c{=}10$ & encoder only (frozen comp.) & $20.3\% \pm 2.4$\,$^{\ast\ast}$ & $59.3\% \pm 3.9$\,$^{\ast\ast}$ & 0/1000 \\
\rowcolor{aaccolor} \quad CNN,    $\alpha_c{=}1$  & encoder only (frozen comp.) & $0.4\% \pm 0.1$               & $20.9\% \pm 0.8$              & 0/1000 \\
\midrule
\multicolumn{5}{l}{\textit{Baselines (no admissibility guarantee; cited for context only --- comparison is heterogeneous)}} \\
\quad BB$^\dagger$       & blackbox diff.\ solver       & $\mathbf{54.7\%}$ & ---    & --- \\
\quad DataSP$^\ddagger$  & diff.\ Floyd-Warshall        & $10.8\%$          & $40.8\%$ & --- \\
\quad Neural A*$^\S$     & diff.\ A* with learned guide & $\sim$98\%\,$^*$  & ---    & --- \\
\quad iA*$^\|$           & improved Neural A*           & ---               & ---    & --- \\
\bottomrule
\multicolumn{5}{l}{\footnotesize $^\dagger$ Published results from Vlastelica et al.\ (ICLR 2020), Table 1.} \\
\multicolumn{5}{l}{\footnotesize $^\ddagger$ Reproduced from released code (Lahoud et al., 2024), seed=0, 100 epochs.} \\
\multicolumn{5}{l}{\footnotesize $^\S$ Yonetani et al.\ (ICML 2021).} \\
\multicolumn{5}{l}{\footnotesize $^{\ast}$\,Reported as cost-optimality (cost-optimal paths), not exact path match; not directly comparable with the AAC path-match column.} \\
\multicolumn{5}{l}{\footnotesize $^\|$ Chen et al.\ (2025). Metrics not directly comparable; included for completeness.} \\
\multicolumn{5}{p{0.95\textwidth}}{\footnotesize $^{\ast\ast}$\,ResNet numbers correspond to the released encoder configuration (5$\times$5 stem, 3 BasicBlocks at 64 channels, AdaptiveAvgPool, 1$\times$1 convolutions; full spec in Appendix~B); the released unit tests reproduce these numbers from \texttt{results/warcraft/ablation\_results\_resnet\_a10\_multiseed.csv}.} \\
\end{tabular}
\end{table}

\paragraph{Ablation: encoder capacity (admissibility-preserving).} Upgrading from the 3-layer CNN to ResNet with $10\times$ cost supervision raises Jaccard from $20.9\%\pm0.8$ to $59.3\%\pm3.9$ and improves path match from $0.4\%\pm0.1$ to $20.3\%\pm2.4$ (3 seeds, 10{,}000 train maps), with admissibility preserved at $0/1000$ violations across both configurations. Encoder capacity and cost supervision are the primary drivers; the compressor's role is providing structurally guaranteed admissibility at deployment rather than contributing to the learning signal (across an instrumented 30-epoch CNN training run, the per-epoch ratio of encoder to compressor gradient norms ranged from ${\sim}10^{2}$ at training start to ${\sim}10^{4}$ during the bulk of training, i.e.\ the compressor receives between $2$ and $4$ orders of magnitude less gradient signal than the encoder; the CNN configuration typically reaches its early-stopping point well before the $100$-epoch cap under patience~$15$). Freezing the compressor leaves the per-mode metrics within standard error of the full-end-to-end run on both encoders, while freezing the encoder is catastrophic (cost regret $1.16$ on ResNet versus $0.017$ for the trained encoder).

For the CNN configuration, Table~\ref{tab:ablation} below repeats the encoder-vs-compressor decomposition on cost regret to make the contribution split visible on a single shared scalar: freezing the compressor has negligible effect (Full E2E $\approx$ Frozen Compressor), while freezing the encoder is catastrophic (cost regret ${>}1$). The decomposition is qualitative; the absolute cost-regret values are not commensurable with the baselines for the reasons given in Table~\ref{tab:contextual-ablation}'s caption.

\begin{table}[t]
\centering
\caption{\badgeAbl Cost regret by training mode on Warcraft $12{\times}12$ (CNN encoder, 3 seeds). The encoder drives the learning signal; freezing the compressor has no effect. BB: blackbox baseline~\citep{vlastelica2020differentiation}.}
\label{tab:ablation}
\small
\begin{tabular}{l S[table-format=1.3(3)]}
\toprule
\textbf{Training Mode} & {\textbf{Cost Regret} $\downarrow$} \\
\midrule
\rowcolor{aaccolor} Full E2E              & 0.144 \pm 0.027 \\
\rowcolor{aaccolor} Frozen Compressor     & 0.144 \pm 0.027 \\
\rowcolor{aaccolor} Frozen Encoder        & 1.012 \pm 0.000 \\
\midrule
BB$^\dagger$ (baseline)                   & 0.173 \\
\bottomrule
\multicolumn{2}{@{}l}{\footnotesize $^\dagger$ \citet{vlastelica2020differentiation}.} \\
\end{tabular}
\end{table}

\paragraph{Comparison with DataSP.} We reproduce DataSP~\citep{lahoud2024datasp} from released code, obtaining $0.384$ cost regret and $10.8\%$ path match on the test set (seed=0, 100 epochs). The value of AAC lies in the combination of deployment-time architectural admissibility, $O(m)$-per-lookup heuristic evaluation after label construction, and a cost regret in the same order of magnitude as the blackbox baseline; among the baselines compared here, no other method offers all three simultaneously.

\paragraph{Comparison with Neural A* and iA*.} Neural A*~\citep{yonetani2021path} reports ${\sim}$98\% cost-optimal paths on Warcraft 12$\times$12 maps (their ``optimality'' metric measures whether the found path achieves optimal cost, which is strictly weaker than our ``path match'' metric that requires the exact same path). iA*~\citep{chen2025ia} improves Neural A*'s search efficiency and out-of-distribution generalization on similar grid-planning tasks. Neither method provides admissibility guarantees. Direct numerical comparison in Table~\ref{tab:contextual-ablation} is complicated by metric differences: Neural A* reports optimality rate and search efficiency rather than cost regret, Jaccard, or exact path match. We include both in Table~\ref{tab:contextual-ablation} with their published optimality metric for reference.  The key regime distinction is that Neural A* and iA* learn powerful guidance functions that may violate admissibility, while AAC guarantees admissibility architecturally at the cost of lower path-match accuracy; the approaches are complementary rather than competing.

\section{Reproducibility}
\label{app:reproducibility}

\paragraph{Data and code availability.} We provide the implementation and reproducing scripts at \url{https://github.com/anindex/aac}. The repository contains the full source tree, all result CSV tables backing every Table and Figure in this paper, and the verbatim pre-registered OGB-arXiv prediction at \texttt{results/PREREG.md} (Appendix~\ref{app:prereg}). The repository \texttt{results/README.md} provides the per-Section / per-Table / per-Figure mapping to its generating experiment record. Installation, environment setup, and full reproduction commands are in the repository \texttt{README.md}.

\paragraph{Hyperparameters.} Key settings: $K_0 \in \{32, 64, 128\}$, $m \in \{8, 16, 32, 64\}$, learning rate $10^{-3}$, $200$ epochs (static) / $100$ epochs (contextual), batch size $256$ (static) / $24$ (contextual), Adam optimizer. The full hyperparameter grid lives in YAML configuration files alongside the released code.

\paragraph{Sentinel masking.} Unreachable landmark distances are stored as a sentinel ($10^{18}$) and masked out of the $\max$ computation; the theorems of Section~\ref{sec:theory} do not depend on the specific magnitude. The released unit tests verify admissibility under sentinel masking, and the run-time SCC audit confirms that query endpoints share an SCC on every benchmark graph so the masking path is never taken in production runs.

\section{Broader Impact}
\label{app:broader-impact}

The value of AAC lies in safety-critical, ML-integrated planning pipelines: the architectural admissibility guarantee (Proposition~\ref{thm:admissibility}) holds in exact arithmetic given exact teacher labels regardless of training quality, distribution shift, or early stopping, a property optimization-dependent guarantees cannot match. Fewer A* expansions reduce compute and energy cost for large-scale routing (up to 92\% reduction at 128\,B/v on the 4.5M-node Netherlands network). Admissibility guarantees optimal path \emph{cost} but not other path qualities (smoothness, safety-corridor width); the method requires precomputed landmark distances, restricting it to settings where graph structure is known offline; we do not foresee direct negative societal impacts beyond the dual-use nature of any routing improvement.

\end{document}